%% file: main.tex
\documentclass[lettersize,journal]{IEEEtran}
\input{preamble}

\newcommand{\removeForSpace}[1]{}

\begin{document}

\title{
Exact Signed-Distance Control Barrier Functions via Minkowski Operations for Safe Navigation among Polytopes
}
\author{Yi-Hsuan Chen,~Shuo Liu,~\IEEEmembership{Student Member, IEEE},~Wei Xiao,~\IEEEmembership{Member, IEEE},\\~Calin Belta,~\IEEEmembership{Fellow, IEEE},~and Michael Otte,~\IEEEmembership{Senior Member, IEEE}
\thanks{Yi-Hsuan Chen and Michael Otte are with the Department of Aerospace Engineering, University of Maryland, College Park, MD 20742, USA (e-mail: yhchen91@umd.edu; otte@umd.edu).}
\thanks{Shuo Liu is with the Department of Mechanical Engineering, Boston University, Boston, MA 02215, USA (e-mail:  liushuo@bu.edu).}
\thanks{W. Xiao is with the School of Electrical and Electronic Engineering, Nanyang Technological University, and with M3S, SMART, Singapore (e-mail: wei.xiao@ntu.edu.sg, weixy@mit.edu).}
\thanks{Calin Belta is with the Departments of Electrical and Computer Engineering and Computer Science, University of Maryland, College Park, MD 20742, USA (e-mail: calin@umd.edu).}}
\maketitle

\begin{abstract}
Safely navigating polytopic environments while respecting the dynamics, control, and exact geometry of the underlying system is a challenge in robotics. Control barrier functions (CBFs) synthesize safe control policies by rendering the safe set forward invariant, but many existing CBF-based methods approximate polytopes using conservative smooth shapes, such as spheres or ellipsoids, to obtain explicit differentiable distance functions. In this article, we propose an exact Signed Distance Function (SDF) formulation for a {\it polytopic} robot and {\it polytopic} obstacles and integrate it with nonsmooth CBFs. Leveraging Minkowski operations, the proposed method computes the exact SDF via companion convex programs in both the collision-free (positive-sign) and in-collision (negative-sign) cases. Furthermore, by exploiting the convenient geometric properties of 2D Minkowski operations and the optimality conditions of the two companion convex programs, we derive a unified analytical expression for the gradient of the exact SDF via sensitivity analysis. The exact rotational gradient further reveals a previously masked class of local minima induced by the coupling between geometry and nonholonomic kinematics. We demonstrate the effectiveness of the proposed framework through a pure-translation case and three scenarios with unicycle models involving recovery from an unsafe initialization and single- and multiple-obstacle avoidance. Comparisons with baseline methods highlight how the proposed framework enables non-conservative maneuvers and safety recovery. A demonstration video can be found at \normalfont{\texttt{\url{https://youtu.be/D0zVswzyxaE}}}.
\end{abstract}

\begin{IEEEkeywords}
Signed Distance Functions, 
Nonsmooth Control Barrier Functions, Minkowski Operations, Differentiable Optimization
\end{IEEEkeywords}

\section{Introduction}
Ensuring safe navigation while respecting dynamic and control constraints remains a fundamental challenge in robotics. This task becomes significantly more complex when the geometric constraints of both the ego agent and obstacles are also considered. While configuration obstacles (COs) \cite{Lozano.TC83} have long been used to reason about such geometric set interactions, traditional methods are typically decoupled from low-level control synthesis. Control Barrier Functions (CBFs) \cite{Ames.etal.ECC19} are widely used for collision avoidance, providing forward invariance guarantees for safe sets of control-affine systems.
Moreover, CBFs can be naturally combined with Control Lyapunov Functions (CLFs) through pointwise quadratic programs (QPs). The resulting CLF-CBF-QP framework provides a computationally efficient approach to reach-avoid problems and is well suited to real-time applications.
Further advances in CBF-based safety-critical control include high-relative-degree \cite{Nguyen.Sreenath.ACC16, Xiao.Belta.TAC22} and adaptive formulations \cite{liu2025auxiliary}.

However, because standard continuous-time CBFs \cite{Ames.etal.ECC19} require the barrier function to be continuously differentiable ($C^1$), many existing CBF formulations model robots and/or obstacles as smooth shapes, such as hyperspheres \cite{Chen.etal.TCST18} and ellipsoids \cite{Verginis.Dimarogonas.LCSS19,Funada.etal.TCST25}. These smooth approximations often have convenient explicit distance expressions but can be overly conservative for real-world objects, such as vehicles.
In contrast, polytopes can provide tighter abstractions, but their induced exact Signed Distance Functions (SDFs) are implicit (i.e., computed via optimization programs) and only piecewise smooth, e.g., non-differentiable in specific configurations due to having corners.
While discrete-time CBFs \cite{Thirugnanam.etal.ICRA22,Liu.etal.arXiv26,Gonzalez.etal.arXiv26} offer a potential workaround for non-differentiability, their safety guarantees generally apply only at discrete sampling instants and do not, by themselves, ensure continuous-time safety guarantees as considered here.

Our work is closely related to two lines of work depending on whether smooth shape approximations are used, which we will now survey in more depth. The first line of closely related work uses smooth approximations to ``soften'' the sharp corners to avoid nonsmoothness. For instance, polygons are approximated by polynomial functions fitted via
logistic regression in \cite{Peng.etal.ICRA23} or smoothed using log-sum-exp (LSE) relaxations \cite{Molnar.CCTA25,Usevitch.Sahleen.ACC25, Wu.etal.TCB25, Wei.etal.TCST26}. Although easily integrated with standard CBFs, these methods inevitably alter the true geometry, introducing a certain level of conservativeness.
We remark that boundary smoothing can render the one-sided Minimum-Distance Function (MDF) $C^1$ on the collision-free domain but cannot do so globally for the SDF. The latter extends into the set interior and remains nonsmooth on the skeleton, also known as the medial axis \cite{4M}, where each point has more than one closest point on the shape boundary.

The second line of closely related work handles geometry \textit{without approximation}. Early work applying CBFs for collision avoidance between polytopes \cite{Thirugnanam.etal.ACC22} uses the MDF to construct a nonsmooth CBF (NCBF) via a dualization technique
\cite{Zhang.OBCA.TCST20}, but computes the exact CBF derivative only for strongly convex sets and uses a conservative lower bound for weakly convex sets such as polytopes.
Likewise, another exact geometric approach using the SDF \cite{Singletary.etal.RAL22} relies on local linear approximations for the CBF gradients. 
\begin{table*}[t!]
    \centering
    \begin{minipage}[c]{0.69\textwidth}
        \centering
        \caption{Comparison of CBF-based collision avoidance methods for polytopic sets.}
        \label{table1}
        \vspace{-0.5em}
        
        \input{table_v1}

    \end{minipage}%
    \hfill 
    \begin{minipage}[l]{0.29\textwidth}
        \centering\
        \begin{xy}
            \xyimport(100,100){\includegraphics[width=\textwidth]{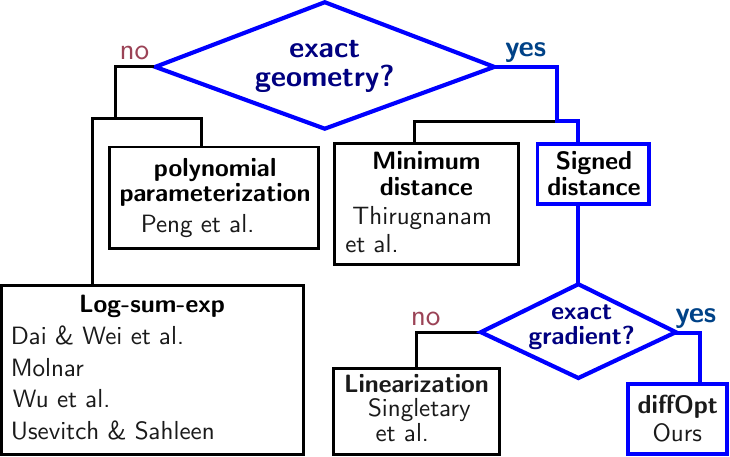}}
            ,(50,110)*{\text{}}
            ,(38.12,50.87)*{\text{\tiny\cite{Peng.etal.ICRA23}}}
            ,(32.54,26.31)*{\text{\tiny\cite{Dai.etal.RAL23,Wei.etal.TCST26}}}
            ,(15.32,19.26)*{\text{\tiny\cite{Molnar.CCTA25}}}
            ,(18.53,12.38)*{\text{\tiny\cite{Wu.etal.TCB25}}}
            ,(33.15,5.18)*{\text{\tiny\cite{Usevitch.Sahleen.ACC25}}}
            ,(62.11,46.54)*{\text{\tiny\cite{Thirugnanam.etal.ACC22,Thirugnanam.etal.arXiv23}}}
            ,(62.42,5.11)*{\text{\tiny\cite{Singletary.etal.RAL22}}}
        \end{xy}
        \vspace{-0.5cm}
        \figcaption{Taxonomy of existing methods based on the exactness of the SDF and its gradient.}
        \label{fig: taxonomy}
    \end{minipage}
\end{table*}

Our work differs from related work \cite{Peng.etal.ICRA23,Wu.etal.TCB25,Wei.etal.TCST26,Thirugnanam.etal.ACC22, Thirugnanam.etal.arXiv23, Singletary.etal.RAL22} in two important ways. First, we compute the {\it exact} SDF via convex optimization with COs in “Minkowski-difference-space”\footnote{We use the term  ``Minkowski-difference-space'' to distinguish it from the ``configuration space'' ($\mathcal{C}$-space) typically considered in motion planning. It is essentially the same as
the ``collision space'' defined in \cite{Montaut.etal.RSS22}. The more succinct ``Minkowski space'' has a specific different meaning in physics.}, denoted by \mdspace. Second, we compute the {\it exact} gradient of the {\it exact} SDF via differentiable optimization (diffOpt). Table \ref{table1} provides a conceptual comparison with existing works, and Figure~\ref{fig: taxonomy} further organizes them by exactness in 
geometry and gradient. While CBFs formulated in \mdspace~have previously been considered in \cite{Wu.etal.TCB25}, the corresponding optimization-free method relies on an LSE-smoothed SDF lower bound constructed from CO hyperplanes.

This article is a significant extension of our preliminary work \cite{Chen.etal.CDC25}, which leverages COs to construct CBFs from the exact SDF
between a {\it polytopic} robot and {\it polytopic} obstacles.
In addition to providing comprehensive technical details and proofs for the proposed Signed-Distance NCBF (SD-NCBF) framework, we derive exact, closed-form gradients with respect to the robot’s full SE(2) configuration, replacing the approximate rotational gradient used in \cite{Chen.etal.CDC25}. These exact gradients further reveal a class of geometric-kinematic local minima arising from the coupling between robot geometry and nonholonomic constraints.
The main contributions of this article are summarized as follows:
\begin{enumerate}
    \item \textbf{Exact SDF formulation and differentiability analysis:} 
    We formulate the exact SDF between polytopic sets in \mdspace~using companion convex programs for the safe and colliding branches, and characterize its differentiability with respect to the robot’s SE(2) configuration.
    
    \item \textbf{Unified exact-gradient framework:} 
    We establish the geometric equivalence and algebraic correspondence between the companion minimum-distance and penetration-depth convex programs in \mdspace, thereby providing a unified framework for exact-gradient computation across safe, contact, and penetration cases.

    \item \textbf{Closed-form SE(2) gradients and NCBF synthesis:} 
    We derive exact, closed-form spatial and rotational gradients of the exact SDF in SE(2) and integrate them into the SD-NCBF-QP framework.

    \item \textbf{Analysis of geometric-kinematic local minima:} 
    In conjunction with the previous contribution, 
    we identify a class of geometric-kinematic local minima
    induced by the coupling between robot geometry and nonholonomic constraints. We further characterize the robot shapes susceptible to this class of local minima.
\end{enumerate}

\section{Preliminaries}
In this section we present background on collision detection via CO (\ref{subsec: prelim_minkowski}), NCBFs (\ref{subsec: prelim_ncbf}), and diffOpt (\ref{subsec: prelim_diffopt}).
\subsection{Collision Detection via Minkowski Operations}\label{subsec: prelim_minkowski}
The workspace (\wspace) of a robot is the physical (2D or 3D Euclidean) environment in which it operates, consisting of the space where the robot can move and the obstacles it must avoid. The configuration space ($\mathcal{C}$-space) represents all possible positions, orientations, etc., with each point defining a unique configuration. Since detecting set-to-set collisions directly in $\mathcal{W}$~is nontrivial, a fundamental tool in computational geometry, the Minkowski sum (M-sum), is widely utilized to transform the problem into an equivalent point-to-set containment query. This geometric principle forms the foundation of the Gilbert-Johnson-Keerthi (GJK) algorithm\footnote{Note that GJK does not explicitly construct the CO; it queries extreme points of the CO via its support function, giving exact results under convexity.} for computing distance, and the Expanding Polytope Algorithm (EPA) \cite{epa} for computing penetration depth.

Motivated by this spatial mapping, we formally define the \mdspace~as a dual of $\mathcal{W}$, which is a Euclidean space with the same (2D or 3D) dimensionality as  $\mathcal{W}$. In \mdspace, the robot is shrunk to a point at the origin, and the obstacles are transformed into the CO, which explicitly encodes the robot's geometry and configuration, and is defined as follows.

\begin{defi} [\it Configuration Obstacle \cite{Lozano.TC83}] \label{def: CO}
\normalfont
    Given a robot $\mathcal{R}$ and an obstacle $\mathcal{O}$, their  
    CO is defined via the M-sum
    by:
    \begin{align}
        \mathcal{O}^c= \mathcal{O}\oplus (-\mathcal{R})\label{eq: co},
    \end{align}
    where $-\mathcal{R}=\{-\bm{p}\mid \bm{p} \in \mathcal{R}\}$ \textcolor{black}{is the reflection of $\mathcal{R}$ through the origin, and $\bm{p}$ denotes a position vector of $\mathcal{R}$. Note that we refer to the M-sum of $\mathcal{O}$ and $-\mathcal{R}$ as the \textit{Minkowski difference}, following the collision-detection convention.}
\end{defi}
\begin{lemma}[Collision detection via CO
\cite{Ericson}]
    Given two nonempty sets $\mathcal{R},\mathcal{O}\subset\mathbb{R}^d$, they intersect (are in collision) if and only if their CO contains the origin $\bm{0}$.\label{lem: collision}
\end{lemma}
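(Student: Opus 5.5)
The plan is to unfold Definition~\ref{def: CO} and show that each direction of the equivalence reduces to one line of algebra. By definition,
\begin{align*}
\mathcal{O}^c=\mathcal{O}\oplus(-\mathcal{R})=\{\bm{o}-\bm{p}\mid \bm{o}\in\mathcal{O},\ \bm{p}\in\mathcal{R}\}.
\end{align*}
The statement then says that $\mathcal{R}\cap\mathcal{O}\neq\emptyset$ if and only if $\bm{0}\in\mathcal{O}^c$. Both sets are taken in the same fixed frame, so the robot's configuration is already absorbed into the point set $\mathcal{R}$.

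For the forward direction, suppose the sets collide, so there is some $\bm{x}\in\mathcal{R}\cap\mathcal{O}$. Choose $\bm{o}=\bm{x}\in\mathcal{O}$ and $\bm{p}=\bm{x}\in\mathcal{R}$. Then $\bm{o}-\bm{p}=\bm{0}$, so $\bm{0}\in\mathcal{O}^c$.

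For the reverse direction, suppose $\bm{0}\in\mathcal{O}^c$. Then there exist $\bm{o}\in\mathcal{O}$ and $\bm{p}\in\mathcal{R}$ with $\bm{o}-\bm{p}=\bm{0}$, so $\bm{o}=\bm{p}$. This common point lies in $\mathcal{R}\cap\mathcal{O}$, and the sets intersect.

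There is no real obstacle here; the only care needed is interpretive. First, ``collision'' must be read as nonempty intersection, with touching boundaries counted as collision, so the statement holds for arbitrary sets. Second, membership of $\bm{0}$ in $\mathcal{O}^c$ should not be confused with membership in its closure or interior. Convexity and compactness play no role in this lemma. They become relevant only later, for the distance and penetration-depth characterizations that the signed distance function relies on.
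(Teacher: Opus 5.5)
Your proof is correct: once you unfold $\mathcal{O}^c=\{\bm{o}-\bm{p}\mid \bm{o}\in\mathcal{O},\ \bm{p}\in\mathcal{R}\}$, both directions are immediate, and you are right that no convexity or compactness is needed. The paper gives no proof of its own and simply cites Ericson's collision-detection text, where this same definitional argument is the standard justification.
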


We now briefly review the fundamental properties of the M-sum: (i) the M-sum is commutative, and (ii) the M-sum of a point and a line is a line translated by that point, while the M-sum of two points
is a point. These properties lead to the following result in 2D space.
\begin{lemma}[Properties of the CO in $\mathbb{R}^2$ \cite{LaValle.Planning.06}]
    Given convex, nonempty polygons $\mathcal{R}$ and $\mathcal{O}$ with $\ell_r$ and $\ell_o$ edges, their corresponding CO satisfies the following properties: (i) it is a convex, nonempty polygon with at most $\ell_r+\ell_o$ edges, and can be computed in $O(\ell_r+\ell_o)$ time, and (ii) each of its edges is exactly a translated edge from either $\mathcal{R}$ or $\mathcal{O}$.
    \label{lem: properties}
\end{lemma}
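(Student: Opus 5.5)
The natural tool here is the support function, because Minkowski sums turn into sums of support functions. For a compact convex set $K\subset\mathbb{R}^2$, write $h_K(\bm{u})=\max_{\bm{p}\in K}\bm{u}^\top\bm{p}$ and $F_K(\bm{u})=\arg\max_{\bm{p}\in K}\bm{u}^\top\bm{p}$ for its face in direction $\bm{u}$. The plan has three steps.

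\textbf{Step 1: basic structure.} First I establish convexity, nonemptiness and polygonality of $\mathcal{O}^c=\mathcal{O}\oplus(-\mathcal{R})$.
\begin{itemize}
  \item $-\mathcal{R}$ is a convex polygon with $\ell_r$ edges, because reflection is linear.
  \item $\mathcal{O}^c$ is the image of $\mathcal{O}\times(-\mathcal{R})$ under the linear map $(\bm{a},\bm{b})\mapsto\bm{a}+\bm{b}$. So it is convex and nonempty.
  \item Writing $\mathcal{O}=\mathrm{conv}\{\bm{o}_i\}$ and $-\mathcal{R}=\mathrm{conv}\{-\bm{r}_j\}$ gives $\mathcal{O}^c=\mathrm{conv}\{\bm{o}_i-\bm{r}_j\}$. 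So it is a polygon, possibly degenerate to a segment or point if the inputs are degenerate.
\end{itemize}

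\textbf{Step 2: the face identity and property (ii).} The key identities are
\[
h_{\mathcal{O}^c}(\bm{u})=h_{\mathcal{O}}(\bm{u})+h_{-\mathcal{R}}(\bm{u}),\qquad F_{\mathcal{O}^c}(\bm{u})=F_{\mathcal{O}}(\bm{u})\oplus F_{-\mathcal{R}}(\bm{u}).
\]
The first holds because a maximum over a sum of independent variables splits. The second holds because $\bm{a}+\bm{b}$ attains the maximum iff each summand does. Each face of a polygon is either a vertex or an edge. An edge of $\mathcal{O}^c$ is a face $F_{\mathcal{O}^c}(\bm{u})$ of positive length for some outward normal $\bm{u}$.

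For generic $\bm{u}$ (not normal to any edge of either polygon), both faces are vertices, so $F_{\mathcal{O}^c}(\bm{u})$ is a point. Hence edges of $\mathcal{O}^c$ occur only at normals $\bm{u}$ that are edge normals of $\mathcal{O}$ or of $-\mathcal{R}$. At such a $\bm{u}$ there are two cases.
\begin{itemize}
  \item If exactly one polygon has an edge with normal $\bm{u}$, the face is that edge translated by the other polygon's vertex. This is exactly property (ii), using the point-plus-segment fact recalled above.
  \item If both polygons have parallel edges with the same outward normal, the face is a segment of length equal to the sum of the two lengths. This is the merging of collinear translated edges.
\end{itemize}

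\textbf{Step 3: edge count and complexity.} Each of the $\ell_o+\ell_r$ outward normals contributes at most one edge, and each edge arises from some such normal. So $\mathcal{O}^c$ has at most $\ell_r+\ell_o$ edges, with strict inequality when parallel edges merge. For the $O(\ell_r+\ell_o)$ construction:
\begin{itemize}
  \item List the edge vectors of each polygon counterclockwise. They are already sorted by polar angle of their normals.
  \item Find a starting vertex, e.g.\ the lowest vertex of each polygon, in linear time.
  \item Merge the two angle-sorted edge sequences as in merge sort, adding edge vectors consecutively from the start point $\bm{o}_{\min}+(-\bm{r})_{\min}$.
\end{itemize}
The edge-normal characterization of Step 2 shows this traversal visits exactly the faces of $\mathcal{O}^c$ in order.

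\textbf{Main obstacle.} I expect the hardest part to be the rigorous statement "faces of the sum are sums of faces" and its use for counting when parallel edges merge, which includes the possibly degenerate case. The first approach is the support-function argument above, proving the face identity directly from the definition of argmax. If that is too loose, the fallback is a polar-angle (Gauss map) argument.

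Under the literal definition of edge used in Step 2, a merged pair of parallel edges is not "exactly a translated edge" of either polygon. So (ii) should be read as: each edge is a translated edge of $\mathcal{R}$ or $\mathcal{O}$, or a concatenation of two parallel such edges. I would state this explicitly in the proof.
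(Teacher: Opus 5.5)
Your proof is correct and takes essentially the paper's approach. The paper gives no proof of its own: it defers to LaValle's normal-sorting (Gauss-map) argument, which its Fig.~\ref{fig: msum-demo} illustrates. Your face identity $F_{\mathcal{O}^c}(\bm u)=F_{\mathcal{O}}(\bm u)\oplus F_{-\mathcal{R}}(\bm u)$ makes that argument rigorous, and your caveat on merged parallel edges matches the paper's own later qualification ``except at parallel-edge degeneracies.'' For (ii), add one line: an edge $[\bm a,\bm b]$ of $\mathcal{R}$ reflects to $[-\bm a,-\bm b]=[\bm a,\bm b]-(\bm a+\bm b)$, so edges of $-\mathcal{R}$ are indeed translates of edges of $\mathcal{R}$.
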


\begin{remark}
    While the collision equivalence in Lemma~\ref{lem: collision} holds for any arbitrary nonempty sets when COs are computed by \eqref{eq: co}, here we assume convex polygons for both the robot and obstacles. Under this assumption, their CO is also a convex polygon by Lemma~\ref{lem: properties}, which is required for the convex-program formulations developed later. Well-behaved non-convex sets (e.g., non-fractal) can still be handled via convex decomposition and pairwise evaluation of Lemma~\ref{lem: collision}. 
\end{remark}

\begin{defi} [\it Signed Distance via CO] 
\normalfont\label{def: sd}
    The signed distance between a robot $\mathcal{R}$ and an obstacle $\mathcal{O}$ is defined as
    \begin{align}\begin{split}
        \mathrm{sd}(\mathcal{R},\mathcal{O}) &=\mathrm{dist}(\mathcal{R},\mathcal{O})-\mathrm{pd}(\mathcal{R},\mathcal{O})\label{eq: sd},
        \end{split}
    \end{align}
    where $\mathrm{dist}(\cdot,\cdot)$ and $\mathrm{pd}(\cdot,\cdot)$ are the minimum distance and penetration depth, and are defined as \cite{Zhang.OBCA.TCST20}
    \begin{subequations}
    \begin{align}
        \mathrm{dist}(\mathcal{R},\mathcal{O})
        &=\inf\{\|\bm{t}\|_2\mid (\mathcal{R}+\bm{t})\cap\mathcal{O}\neq\varnothing\}\label{eq: dist_w_def}
        \\
        &=\inf\{\|\bm{t}\|_2\mid (\bm{0}+\bm{t})\cap\cobs\neq\varnothing\},\label{eq: dist_md_def}\\[0.5em]
        \mathrm{pd}(\mathcal{R},\mathcal{O})
         &=\inf\{\|\bm{t}\|_2\mid (\mathcal{R}+\bm{t})\cap\mathcal{O}=\varnothing\}\label{eq: pd_w_def}\\
        &=\inf\{\|\bm{t}\|_2\mid (\bm{0}+\bm{t})\cap\cobs=\varnothing\}.
        \label{eq: pd_md_def}
    \end{align}\end{subequations}
\end{defi}
\noindent 
The signed distance is positive when $\mathcal{R}$ and $\mathcal{O}$ are disjoint, negative when they intersect, and zero when they are just in contact.
The signed distance between $\mathcal{R}$ and $\mathcal{O}$ can equivalently be expressed as the signed distance between the origin $\bm{0}$ and their CO. Thus, collision avoidance can be enforced by ensuring $\mathrm{sd}(\bm{0},\cobs)>0$. 
The following notion will be useful for the differentiability analysis presented later in Sec.~\ref{subsec: diff-analysis}.
\begin{defi}[Skeleton of a set \cite{Patrikalakis.Maekawa.book}]\label{def: skeleton}
    The skeleton (or medial axis) of the set $\Omega\subset \mathbb{R}^d$ is the closure of the locus of the centers of all maximal inscribed balls within $\Omega$ that touch the boundary at two or more points.
\end{defi}

\begin{lemma}[Properties of SDF in Euclidean space {\cite{Osher.etal.book04}}]
    The SDF $\Phi(\mathbf{x})$ associated with $\Omega\subseteq\mathbb{R}^d$
    is differentiable a.e. in $\mathbb{R}^d$. Specifically, for every $\mathbf{x}$ that does not lie on the skeleton of $\Omega$, its gradient satisfies the eikonal equation $\|\nabla \Phi
    (\mathbf{x}) \|_2=1$ and is equal to the local (outward)  unit normal $N(\mathbf{x})$.
    \label{lem: sdf-prop}
\end{lemma}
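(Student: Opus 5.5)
The plan is to prove the statement directly for the SDF of a closed set $\Omega$ with nonempty interior and boundary, taking $\Phi(\mathbf{x})=d(\mathbf{x},\partial\Omega)$ outside $\Omega$ and $-d(\mathbf{x},\partial\Omega)$ inside. It has three steps: (i) $\Phi$ is $1$-Lipschitz, so Rademacher's theorem gives differentiability a.e.; (ii) off the skeleton, the nearest boundary point is unique and the gradient is the unit normal there; (iii) the eikonal identity then follows.

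\textbf{Step (i): Lipschitz continuity.} For $\mathbf{x},\mathbf{y}$ on the same side of $\partial\Omega$, the triangle inequality for $d(\cdot,\partial\Omega)$ gives $|\Phi(\mathbf{x})-\Phi(\mathbf{y})|\le\|\mathbf{x}-\mathbf{y}\|_2$. If $\mathbf{x}\notin\Omega$ and $\mathbf{y}\in\operatorname{int}\Omega$, the segment $[\mathbf{x},\mathbf{y}]$ meets $\partial\Omega$ at some point $\mathbf{z}$. Then
\[
|\Phi(\mathbf{x})-\Phi(\mathbf{y})| = d(\mathbf{x},\partial\Omega)+d(\mathbf{y},\partial\Omega)\le\|\mathbf{x}-\mathbf{z}\|_2+\|\mathbf{z}-\mathbf{y}\|_2=\|\mathbf{x}-\mathbf{y}\|_2 .
\]
Rademacher's theorem then yields differentiability almost everywhere, and $\|\nabla\Phi\|_2\le 1$ wherever the gradient exists.

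\textbf{Step (ii): uniqueness of the projection off the skeleton.} Fix $\mathbf{x}\notin\partial\Omega$ that is not on the skeleton, say $\mathbf{x}\notin\Omega$ with $\Phi(\mathbf{x})=r>0$. The closed ball $B(\mathbf{x},r)$ is a maximal ball in the closure of the complement touching $\partial\Omega$. By Definition~\ref{def: skeleton}, $\mathbf{x}$ being off the skeleton means this ball touches $\partial\Omega$ at exactly one point $\mathbf{p}$. Set $\mathbf{n}=(\mathbf{x}-\mathbf{p})/r$.
\begin{itemize}
\item \emph{Upper bound.} Since $d(\mathbf{x}+\mathbf{h},\partial\Omega)\le\|\mathbf{x}+\mathbf{h}-\mathbf{p}\|_2$, we get $\Phi(\mathbf{x}+\mathbf{h})\le\Phi(\mathbf{x})+\mathbf{n}^\top\mathbf{h}+O(\|\mathbf{h}\|^2)$.
\item \emph{Lower bound.} Take nearest points $\mathbf{p}_{\mathbf{h}}$ of $\mathbf{x}+\mathbf{h}$. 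Compactness together with uniqueness of $\mathbf{p}$ gives $\mathbf{p}_{\mathbf{h}}\to\mathbf{p}$. Then
\[
\Phi(\mathbf{x}+\mathbf{h})\ge\|\mathbf{x}-\mathbf{p}_{\mathbf{h}}\|_2+\tfrac{(\mathbf{x}-\mathbf{p}_{\mathbf{h}})^\top\mathbf{h}}{\|\mathbf{x}-\mathbf{p}_{\mathbf{h}}\|_2}-o(\|\mathbf{h}\|)\ge\Phi(\mathbf{x})+\mathbf{n}^\top\mathbf{h}-o(\|\mathbf{h}\|).
\]
\end{itemize}
Together these show $\nabla\Phi(\mathbf{x})=\mathbf{n}$, and $\|\mathbf{n}\|_2=1$. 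Here $\mathbf{n}$ is the outward normal $N(\mathbf{x})$ in the sense of the direction from the foot point, which coincides with the classical normal wherever $\partial\Omega$ is smooth at $\mathbf{p}$. The interior case is symmetric after a sign flip, using balls inside $\Omega$.

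\textbf{Step (iii) and the main obstacle.} With the gradient identified, the eikonal identity $\|\nabla\Phi\|_2=1$ is immediate off the skeleton. The hardest step is the lower bound in Step (ii), specifically the continuity of the projection $\mathbf{p}_{\mathbf{h}}\to\mathbf{p}$. I would handle it by an argument by contradiction: take a subsequence of foot points converging to some other point $\mathbf{q}\neq\mathbf{p}$ of $\partial\Omega$. By continuity of $d$, $\mathbf{q}$ would also realize the distance from $\mathbf{x}$, so $B(\mathbf{x},r)$ would touch $\partial\Omega$ at two points, contradicting that $\mathbf{x}$ is off the skeleton.

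A further subtlety is that the closure in Definition~\ref{def: skeleton} can make the skeleton large for rough sets. However, the a.e. differentiability claim does not depend on the skeleton's measure, since it already follows from Rademacher's theorem in Step (i).
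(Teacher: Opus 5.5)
The paper gives no proof of this lemma. It is imported from \cite{Osher.etal.book04}, and for the polygonal CO the gradient is re-derived later by active-constraint sensitivities (Proposition~\ref{prop: cbf_spatial_grad}, with contact handled by one-sided limits in Theorem~\ref{thm: dhdx-bd}). Your metric argument is the standard one and is correct for $\mathbf{x}\notin\partial\Omega$: 1-Lipschitz plus Rademacher, then a unique foot point giving matching first-order upper and lower bounds via continuity of the projection.

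\textbf{The gap is the boundary.} Step (ii) opens with ``fix $\mathbf{x}\notin\partial\Omega$,'' but Step (iii) concludes for every point off the skeleton. Boundary points off the skeleton are inside the statement's scope: they are the relative interiors of polygon edges, exactly the contact case $\bm{z}^*=\bm{0}$ the paper later relies on. At such points $r=0$, so $\mathbf{n}=(\mathbf{x}-\mathbf{p})/r$ is undefined and the foot-point argument says nothing.

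This case also cannot be closed for an arbitrary closed $\Omega$. Take $\Omega=\{y\le 0\}\cup\{(0,1)\}\subset\mathbb{R}^2$. The point $(0,1)$ lies on neither the interior nor the exterior skeleton; the exterior medial axis is the parabola $y=(x^2+1)/2$, which stays at distance $1/2$ from it. Yet $\Phi(\mathbf{x})=\|\mathbf{x}-(0,1)\|_2$ near $(0,1)$, so $\Phi$ is not differentiable there.

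So you must either restrict your claim to $\mathbf{x}\notin\partial\Omega$ or add a hypothesis on $\Omega$. In the paper's setting the repair is short. Let $\Omega=\{\mathbf{y}:\mathbf{a}_j\mathbf{y}\le b_j\}$ be a convex polygon with minimal H-rep, and let $\mathbf{x}_0$ lie in the relative interior of edge $i$. Near $\mathbf{x}_0$ only constraint $i$ can be tight or violated. Outside $\Omega$, the distance equals the distance to the halfspace $\{\mathbf{a}_i\mathbf{y}\le b_i\}$, because that halfspace's foot point lands on the edge. Inside $\Omega$, $\Phi=\max_j(\mathbf{a}_j\mathbf{y}-b_j)/\|\mathbf{a}_j\|_2$, and only $j=i$ attains the max. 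Hence $\Phi$ coincides with the affine function $(\mathbf{a}_i\mathbf{y}-b_i)/\|\mathbf{a}_i\|_2$ on a full neighborhood of $\mathbf{x}_0$, and its gradient is the outward unit normal.

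\textbf{Three smaller repairs.} First, Definition~\ref{def: skeleton} uses only balls inscribed in $\Omega$, but you apply it silently to the closure of the complement. This two-sided reading is actually necessary: for two disjoint disks, exterior points equidistant from both lie off the one-sided skeleton, yet $\Phi$ is not differentiable there. It is harmless for the convex CO, whose exterior projections are unique, but you should state it explicitly.

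Second, $B(\mathbf{x},r)$ is not a maximal ball in general; no ball inscribed in a half-plane is maximal. What you actually need is that a ball touching $\partial\Omega$ at two distinct points $\mathbf{p}_1,\mathbf{p}_2$ is automatically maximal. Suppose $B(\mathbf{x},r)\subseteq B(\mathbf{y},s)$ with $B(\mathbf{y},s)$ inscribed. Then $s=\|\mathbf{p}_k-\mathbf{y}\|_2\le r+\|\mathbf{x}-\mathbf{y}\|_2\le s$. Equality forces both $\mathbf{p}_k$ to equal $\mathbf{x}+r(\mathbf{x}-\mathbf{y})/\|\mathbf{x}-\mathbf{y}\|_2$ unless $\mathbf{y}=\mathbf{x}$. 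This is what turns ``off the skeleton'' into ``unique foot point.''

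Third, $\partial\Omega$ need not be compact. The subsequence extraction in Step (iii) should therefore rest on the bound $\|\mathbf{p}_{\mathbf{h}}-\mathbf{x}\|_2\le r+2\|\mathbf{h}\|_2$.
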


\subsection{Nonsmooth Control Barrier Functions}\label{subsec: prelim_ncbf}
As will be shown in Sec.~\ref{subsec: diff-analysis}, our function of interest,
the SDF between polytopic sets, is locally Lipschitz but
not $C^1$. 
Therefore, the classical CBF formulation \cite{Ames.etal.ECC19} cannot be directly applied. We instead use NCBFs, proposed in \cite{Glotfelter.etal.LCSS17} as a generalization of CBFs, to establish safety guarantees. The required basic notions of nonsmooth analysis from \cite{Clarke.1975} are presented below.
Consider nonlinear control-affine systems:
\begin{align}
    \dot{\bm{x}}=f(\bm{x})+g(\bm{x})\bm{u},\label{eq: ctrl-affine}
\end{align}
where $\mathcal{X}\subset\mathbb{R}^n$ is an open connected set, $f:\mathcal{X}\rightarrow \mathbb{R}^n$, $g:\mathcal{X}\rightarrow \mathbb{R}^{n\times q}$ are locally Lipschitz, $\bm{x}\in \mathcal{X}$, and $\bm{u}\in \mathcal{U}\subset\mathbb{R}^q$ ($\mathcal{U}$ denotes a closed control constraint set). 

Similar to classical $C^1$ CBFs \cite{Ames.etal.ECC19}, safety is guaranteed by the forward invariance of a safe set $C = \{\bm{x}\in \mathcal{X}\mid h(\bm{x}) \geq 0\}$, defined by a locally Lipschitz function
$h:\mathcal{X}\to\mathbb{R}$.
This requires evaluating the rate of change of $h$ along the trajectories of system dynamics. To handle the “bad” points at which no gradient exists, we use the generalized gradient defined below.
\begin{defi} [\it Generalized Gradient \cite{Cortes.08.CSM}]  
\normalfont
    Let $h:\mathcal{X}\to\mathbb{R}$ be a locally Lipschitz function, and $S_h\subset\mathcal{X}$ be the zero-measure set where $h$ is non-differentiable, and suppose $S$ is any Lebesgue zero-measure set in $\mathbb{R}^n$. Then, the generalized gradient 
    of $h$ at $\bm{x}$ is:
    \begin{align}
        \partial h(\bm{x}) = \mathrm{co}\left\{\lim_{i\to\infty}\nabla h(\bm{x}_i): \bm{x}_i\to \bm{x},~\bm{x}_i\notin S\cup S_h \right\}, \label{eq: gen_grad_h}
    \end{align}
\end{defi}
\noindent where $\mathrm{co}$ denotes the convex hull, $\{\bm{x}_i\}_{i=1}^\infty$ denotes a sequence of points in $\mathcal{X}$ converging to $\bm{x}$.
Consider the measurable and locally bounded feedback controller $\bm{u}:\mathcal{X}\to\mathcal{U}$, which is not necessarily continuous. 
To provide an appropriate solution notion for the resulting (possibly discontinuous) closed-loop dynamics, we use the Filippov set-valued map $K[f+g\bm{u}]:\mathcal{X}\to2^{\mathbb{R}^n}$ to turn \eqref{eq: ctrl-affine} into a differential inclusion:
\begin{align}
    &\dot{\bm{x}}(t)\in K[f+g\bm{u}](\bm{x}(t)) \label{eq: diff-inclusion}\\
    &=
    \mathrm{co}\left\{\lim_{i\to\infty}f(\bm{x}'_i)+g(\bm{x}'_i)\bm{u}(\bm{x}'_i): \bm{x}'_i\to \bm{x}(t),~\bm{x}'_i\notin S\cup S_f \right\},\nonumber 
\end{align}
where 
$S, S_f\subset\mathcal{X}$ are arbitrary and system-dependent zero-measure sets, respectively, and the map $K[f+g\bm{u}]$
is assumed to be upper semicontinuous with nonempty, compact, and convex values,
ensuring the existence of a Carathéodory solution\footnote{A Carathéodory solution is an absolutely continuous trajectory that satisfies system dynamics for almost all time, in the sense of Lebesgue measure.} to \eqref{eq: diff-inclusion}. 
We refer readers to \cite{Cortes.08.CSM} for an extensive study of set-valued maps and discontinuous dynamical systems.

Having established the generalized gradient and the Filippov set-valued map,
the set-valued generalization of the inner product is defined as $\left\langle\partial h(\bm{x}), K[f+g\bm{u}](\bm{x}) \right\rangle:=\{a\in\mathbb{R}:\exists \bm{\zeta}\in\partial h(\bm{x}), \exists \bm{w}\in K[f+g\bm{u}](\bm{x}), \left\langle\bm{\zeta},\bm{w}\right\rangle=a\}$.
Combining the forward-invariance result in \cite[Thm.~3]{Glotfelter.etal.LCSS17} with the valid-NCBF condition in \cite[Def.~4]{Glotfelter.etal.CCTA18}, the following theorem gives the safety condition used in this paper.
\begin{thm}[Safety Guarantee {\cite{Glotfelter.etal.LCSS17,Glotfelter.etal.CCTA18}}]
    \label{thm: safety-guarantee}
    Given a nonempty safe set $C$ defined by a locally Lipschitz function $h:\mathcal{X}\to\mathbb{R}$, if there exists a locally Lipschitz extended class-$\mathcal{K}$ function\footnote{%
    An extended class-$\mathcal{K}$ function is a function $\alpha:\mathbb{R}\rightarrow\mathbb{R}$ that is strictly increasing and $\alpha(0)=0$ \cite{khalil.02}.} $\alpha$ and a measurable, locally bounded $\bm{u}:\mathcal{X}\to\mathcal{U}$ such that
    \begin{align}
        \min \left\langle\partial h(\bm{x}), K[f+g\bm{u}](\bm{x}) \right\rangle \geq -\alpha(h(\bm{x})),\,\,\forall\bm{x}\in\mathcal{X}, \label{eq: ncbf-set-valued-constr}
    \end{align}
    then $h$ is a valid NCBF and the control law $\bm{u}$ renders the set $C$ forward invariant (i.e., all Carathéodory solutions that start in the set stay in the set) for \eqref{eq: diff-inclusion}.
\end{thm}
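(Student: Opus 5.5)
The idea is to reduce the claim to a one-dimensional comparison argument along an arbitrary Carathéodory solution. Fix an absolutely continuous solution $\bm{x}(\cdot)$ of \eqref{eq: diff-inclusion} with $\bm{x}(0)\in C$, defined on $[0,T)$, and set $\eta(t):=h(\bm{x}(t))$. Since $h$ is locally Lipschitz and $\bm{x}$ is absolutely continuous, $\eta$ is absolutely continuous on compact subintervals. Hence $\dot\eta(t)$ exists for almost every $t$, and $\eta(t)=\eta(0)+\int_0^t\dot\eta(s)\,ds$. The goal is to show $\eta(t)\ge 0$ for all $t$, which is exactly forward invariance of $C$.

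\textbf{Chain rule.} First I would invoke the nonsmooth chain rule for locally Lipschitz functions along absolutely continuous curves. This is the standard result used in \cite{Glotfelter.etal.LCSS17}, following Clarke and Shevitz--Paden. It says that for almost every $t$ at which both $\dot\eta(t)$ and $\dot{\bm{x}}(t)$ exist, one has
\begin{align*}
\dot\eta(t)\in\left\langle\partial h(\bm{x}(t)),\,\dot{\bm{x}}(t)\right\rangle .
\end{align*}
Since $\dot{\bm{x}}(t)\in K[f+g\bm{u}](\bm{x}(t))$ almost everywhere, this gives
\begin{align*}
\dot\eta(t)\in\left\langle\partial h(\bm{x}(t)),K[f+g\bm{u}](\bm{x}(t))\right\rangle ,
\end{align*}
and therefore, by \eqref{eq: ncbf-set-valued-constr}, $\dot\eta(t)\ge-\alpha(\eta(t))$ for almost every $t$.

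\textbf{Comparison step.} I expect this to be the main technical point, because $\alpha$ is only locally Lipschitz and $\eta$ is only absolutely continuous, so the classical smooth comparison lemma has to be replaced by an a.e. version. Suppose for contradiction that $\eta(t_1)<0$ for some $t_1$. Let $t_0:=\sup\{t\le t_1:\eta(t)\ge0\}$. By continuity $\eta(t_0)=0$ and $\eta<0$ on $(t_0,t_1]$.

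On this interval, consider the scalar initial value problem $\dot y=-\alpha(y)$, $y(t_0)=0$. Its unique solution is $y\equiv 0$: it is a solution because $\alpha(0)=0$, and it is unique because $\alpha$ is locally Lipschitz. The a.e. comparison lemma for absolutely continuous functions (e.g., \cite{khalil.02}, or a direct Grönwall estimate using the local Lipschitz constant $L$ of $\alpha$ near $0$) then yields $\eta(t)\ge y(t)=0$ on $[t_0,t_1]$, which is a contradiction.

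The Grönwall estimate can be done directly. On $(t_0,t_1]$ we have $\eta<0$ and $|\eta|$ small, so $-\alpha(\eta)\ge L\eta$, which gives $\dot\eta\ge L\eta$ a.e. Integrating $\frac{d}{dt}\big(e^{-Lt}\eta\big)\ge0$ from $t_0$ gives $e^{-Lt}\eta(t)\ge e^{-Lt_0}\eta(t_0)=0$, again a contradiction.

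\textbf{Well-posedness.} Finally, existence of Carathéodory solutions is guaranteed by the standing assumption on $K[f+g\bm{u}]$, namely upper semicontinuity with nonempty, compact, convex values. Since the argument above applies to every such solution starting in $C$, all solutions remain in $C$ on their interval of existence. This also shows that $h$ is a valid NCBF in the sense of \cite[Def.~4]{Glotfelter.etal.CCTA18}.
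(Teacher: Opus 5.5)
Your proof is correct and is essentially the standard argument behind the results the paper cites (the paper gives no proof of its own and defers to Glotfelter et al.'s Thm.~3 and Def.~4): a Lipschitz chain rule $\dot\eta(t)\in\langle\partial h(\bm{x}(t)),\dot{\bm{x}}(t)\rangle$ a.e., which holds without regularity of $h$ (this is why the hypothesis is stated over \emph{all} pairings), followed by an a.e.\ comparison argument for the absolutely continuous $\eta$. The comparison step is simpler than you make it. Strict monotonicity and $\alpha(0)=0$ give $-\alpha(\eta)>0$ whenever $\eta<0$, so $\dot\eta>0$ a.e.\ on $(t_0,t_1]$, and integrating from $\eta(t_0)=0$ already contradicts $\eta(t_1)<0$; your ``$|\eta|$ small'' caveat is also unnecessary, since $L$ can be taken as the Lipschitz constant of $\alpha$ on the compact range of $\eta$ over $[t_0,t_1]$.
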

\begin{remark}
    At any state $\bm{x} \in \mathcal{X}$ where $h$ is continuously differentiable and the given control law $\bm{u}$ is locally Lipschitz in a neighborhood of $\bm{x}$, both $\partial h(\bm{x})$ and $K[f+g\bm{u}](\bm{x})$ reduce to singletons $\{\nabla h(\bm{x})\}$ and $\{f(\bm{x})+g(\bm{x})\bm{u}(\bm{x})\}$, respectively. 
    Hence, the NCBF constraint in \eqref{eq: ncbf-set-valued-constr} reduces to the classical CBF constraint $\nabla h(\bm{x})^\top(f(\bm{x})+g(\bm{x})\bm{u}(\bm{x}))\geq-\alpha(h(\bm{x}))$ \cite{Ames.etal.ECC19}.
\end{remark}

\subsection{Differentiable Optimization} \label{subsec: prelim_diffopt}
Since our SDF and hence the resulting NCBF are defined implicitly via convex programs parameterized by the system state, evaluating their exact gradients requires characterizing how the optimal solution and value function vary locally with the state.
This mathematical foundation is known as sensitivity analysis in parametric programming \cite{Fiacco.90}, and, more recently, as differentiable optimization (diffOpt) in machine learning \cite{Amos.Kolter.ICML17}.
We use sensitivity analysis and diffOpt interchangeably. Consider a parametric inequality-constrained QP:
\begin{align}\begin{split}
    &\mathbf{z}^*=\arg\min_{\mathbf{z}} f_0(\mathbf{z}),\quad\mathrm{s.t.}\,\,\, \mathbf{G}(\mathbf{x})\mathbf{z}\leq \mathbf{h}(\mathbf{x}),\\
    &\text{with} \,\,\,\,f_0(\mathbf{z})=\mathbf{z}^\top \mathbf{Q}\mathbf{z}+\mathbf{p}^\top\mathbf{z},\quad
    \mathbf{Q}=\mathbf{Q}^\top\succ0,
    \,\,\mathbf{p}\in\mathbb{R}^k
    \label{eq: parametric-qp}
\end{split}\end{align}
where $\mathbf{z}\in\mathbb{R}^k$ is the decision variable, $\mathbf{x}\in\mathbb{R}^p$ is the parameter, 
and $\mathbf{G}(\mathbf{x})\in\mathbb{R}^{\mathfrak{g}\times k}$, $\mathbf{h}(\mathbf{x})\in\mathbb{R}^\mathfrak{g}$ define the $\mathfrak{g}$ parameter-dependent inequality constraints. We assume $\mathbf{G}(\mathbf{x})$ and $\mathbf{h}(\mathbf{x})$ are continuously differentiable with respect to $\mathbf{x}$. Let the active constraint set be $\mathcal{I}(\mathbf{x}) := \{1\leq i\leq \mathfrak{g} \mid \mathbf{G}_i(\mathbf{x}) \mathbf{z}^*(\mathbf{x}) = \mathbf{h}_i(\mathbf{x}) \}$, where the subscript $i$ denotes the $i$-th row of the matrices. Let $\boldsymbol{\lambda}(\mathbf{x})\in\mathbb{R}^\mathfrak{g}_{\geq0}$ be the associated dual variable. For brevity, we omit the dependence on $\mathbf{x}$ when clear from context. 
Define the concatenated primal-dual vector $\bm{\xi}:= [\mathbf{z}^\top, \boldsymbol{\lambda}^\top]^\top \in \mathbb{R}^{n_{\bm{\xi}}}$,
with $n_{\bm{\xi}} =k + \mathfrak{g}$. 
The Lagrangian of \eqref{eq: parametric-qp} is
\begin{align}
    \mathcal{L}(\bm{\xi};\mathbf{x}):= f_0(\mathbf{z})+ \boldsymbol{\lambda}^\top (\mathbf{G}(\mathbf{x})\mathbf{z}-\mathbf{h}(\mathbf{x})). \label{eq: L}
\end{align}
The KKT residual $\Gamma(\bm{\xi};\mathbf{x})\in\mathbb{R}^{n_{\bm{\xi}}}$ is defined as
\begin{align}
    \Gamma(\bm{\xi};\mathbf{x}):=\begin{bmatrix}
    2\mathbf{Q}\mathbf{z}+\mathbf{p} + \mathbf{G}(\mathbf{x})^\top\boldsymbol{\lambda}
 \\D(\boldsymbol{\lambda})(\mathbf{G}(\mathbf{x})\mathbf{z}-\mathbf{h}(\mathbf{x}))
\end{bmatrix},
\label{eq: KKT-residual}
\end{align}
where the first and second block rows correspond to the stationarity and complementary slackness conditions, respectively. Here, $D(\cdot)$ creates a diagonal matrix from a vector. 

The local sensitivity result presented later relies on the following regularity conditions:
\begin{enumerate}
    \item Linear Independence Constraint Qualification (LICQ): This is satisfied if the gradients of all active constraints  with respect to $\mathbf{z}$,  $\{\mathbf{G}_i^\top \}_{i\in\mathcal{I}}$, are linearly independent. 
    \item Strict Complementarity (SC): This condition holds if the dual variables associated with all active constraints are strictly positive, i.e., $\boldsymbol{\lambda}_i>0,\,~\forall i\in \mathcal{I}$.
    \item Second-Order Condition (SOC): This condition holds if the Hessian of the Lagrangian with respect to $\mathbf{z}$ is positive definite on the tangent space of the active constraints. 
    Since $\nabla_{\mathbf{z}}^2 \mathcal{L}(\bm{\xi};\mathbf{x})=2\mathbf{Q}\succ0$, the SOC holds automatically for \eqref{eq: parametric-qp}. 
\end{enumerate}
Under these conditions, the following theorem, adapted from \cite[Theorem~4.4]{Still.note.18}, applies the Implicit Function Theorem (IFT) to characterize the local sensitivities of the optimal solution and value function. We remark that the result only holds locally when the active constraint set remains unchanged.
\begin{thm} [Local stability based on IFT {\cite{Still.note.18}}] \label{thm: diffopt-ift}
    \normalfont
    Let $(\mathbf{z}^*, \boldsymbol{\lambda}^*)$ be an optimal primal-dual pair satisfying the KKT conditions of \eqref{eq: parametric-qp} and let $\bm{\xi}^*:= [(\mathbf{z}^*)^\top, (\bm{\lambda}^*)^\top]^\top$,
    so that 
    $\Gamma(\bm{\xi}^*;\mathbf{x})=0$.
    Assuming LICQ, SC, and SOC hold,
    there exists a neighborhood $\mathcal{N}_{\mathbf{x}}$ of $\mathbf{x}$ in which the optimal primal-dual pair $\bm{\xi}^*(\mathbf{x})$ is unique and continuously differentiable. Applying the IFT to the KKT system yields the following results:
    \begin{enumerate}
        \item The gradient of the optimal value function $f_0(\mathbf{z}^*(\mathbf{x}))$ with respect to $\mathbf{x}$ is equal to the partial derivative of the Lagrangian at the optimum:
            \begin{align}
                \nabla_{\mathbf{x}}f_0(\mathbf{z}^*(\mathbf{x}))= \frac{\partial \mathcal{L} }{\partial \mathbf{x}}\bigg\vert_
                {\bm{\xi}=\bm{\xi}^*(\mathbf{x})}.
                \label{eq: dLdx}
            \end{align}
        \item The Jacobian of the primal-dual pair $\bm{\xi}^*(\mathbf{x})$ with respect to $\mathbf{x}$ is uniquely determined by
        \begin{align}
            \frac{\partial \bm{\xi}^*}{\partial \mathbf{x}}=-\left[\frac{\partial \Gamma(\bm{\xi}^*)}{\partial \bm{\xi}}\right]^{-1}\frac{\partial \Gamma(\bm{\xi}^*)}{\partial \mathbf{x}}.
            \label{eq: dXidx}
        \end{align}
        
    \end{enumerate}
\end{thm}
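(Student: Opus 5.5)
The plan is to apply the Implicit Function Theorem to the KKT residual $\Gamma(\bm{\xi};\mathbf{x})$ in \eqref{eq: KKT-residual} at $(\bm{\xi}^*,\mathbf{x})$. Two steps are needed. First, show that $\Gamma$ is continuously differentiable in $(\bm{\xi},\mathbf{x})$; this holds because it is polynomial in $\bm{\xi}$ and $\mathbf{G},\mathbf{h}$ are $C^1$ in $\mathbf{x}$. Second, show that the Jacobian $\partial\Gamma/\partial\bm{\xi}$ is nonsingular. That Jacobian has the block form
\begin{align*}
\frac{\partial \Gamma}{\partial \bm{\xi}}=\begin{bmatrix} 2\mathbf{Q} & \mathbf{G}^\top\\ D(\boldsymbol{\lambda}^*)\mathbf{G} & D(\mathbf{G}\mathbf{z}^*-\mathbf{h})\end{bmatrix}.
\end{align*}
Once nonsingularity is established, the IFT gives a neighborhood $\mathcal{N}_{\mathbf{x}}$ and a unique $C^1$ map $\bm{\xi}^*(\mathbf{x})$ with $\Gamma(\bm{\xi}^*(\mathbf{x});\mathbf{x})=0$. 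Differentiating this identity in $\mathbf{x}$ yields \eqref{eq: dXidx} directly.

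Nonsingularity is the main step. Order the rows so that active constraints $i\in\mathcal{I}$ come first and inactive ones last. For an inactive $i$, SC and complementarity give $\lambda_i^*=0$, and the slack $\mathbf{G}_i\mathbf{z}^*-\mathbf{h}_i<0$ is a nonzero diagonal entry. Hence the row for inactive $i$ reads $(\mathbf{G}_i\mathbf{z}^*-\mathbf{h}_i)\,\delta\lambda_i=0$, which forces $\delta\lambda_i=0$ in any kernel vector. For an active $i$, the slack is zero and $\lambda_i^*>0$ by SC, so the row reduces to $\mathbf{G}_i\delta\mathbf{z}=0$. What remains is the reduced system
\begin{align*}
2\mathbf{Q}\,\delta\mathbf{z}+\mathbf{G}_{\mathcal{I}}^\top\delta\boldsymbol{\lambda}_{\mathcal{I}}=0,\qquad \mathbf{G}_{\mathcal{I}}\,\delta\mathbf{z}=0.
\end{align*}
Multiplying the first equation by $\delta\mathbf{z}^\top$ gives $\delta\mathbf{z}^\top 2\mathbf{Q}\,\delta\mathbf{z}=0$, so $\delta\mathbf{z}=0$ by SOC (here, simply $\mathbf{Q}\succ0$). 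The first equation then becomes $\mathbf{G}_{\mathcal{I}}^\top\delta\boldsymbol{\lambda}_{\mathcal{I}}=0$, and LICQ gives $\delta\boldsymbol{\lambda}_{\mathcal{I}}=0$. The kernel is therefore trivial. Continuity of $\bm{\xi}^*(\mathbf{x})$, after shrinking $\mathcal{N}_{\mathbf{x}}$, keeps inactive slacks strictly negative and active multipliers strictly positive. So the active set is locally constant and $\bm{\xi}^*(\mathbf{x})$ is a genuine KKT point. Because the QP is strictly convex, this KKT point is the unique optimum.

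For \eqref{eq: dLdx}, set $V(\mathbf{x})=f_0(\mathbf{z}^*(\mathbf{x}))$. Since complementarity holds identically on $\mathcal{N}_{\mathbf{x}}$, we can write $V(\mathbf{x})=\mathcal{L}(\bm{\xi}^*(\mathbf{x});\mathbf{x})$. The chain rule gives
\begin{align*}
\nabla_{\mathbf{x}}V=\frac{\partial\mathcal{L}}{\partial\mathbf{z}}\frac{\partial\mathbf{z}^*}{\partial\mathbf{x}}+\frac{\partial\mathcal{L}}{\partial\boldsymbol{\lambda}}\frac{\partial\boldsymbol{\lambda}^*}{\partial\mathbf{x}}+\frac{\partial\mathcal{L}}{\partial\mathbf{x}}.
\end{align*}
The first term vanishes by stationarity. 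The second term vanishes componentwise. For active $i$, $\partial\mathcal{L}/\partial\lambda_i=\mathbf{G}_i\mathbf{z}^*-\mathbf{h}_i=0$. For inactive $i$, $\lambda_i^*\equiv0$ on $\mathcal{N}_{\mathbf{x}}$, so its derivative is zero. This leaves $\nabla_{\mathbf{x}}V=\partial\mathcal{L}/\partial\mathbf{x}$ evaluated at $\bm{\xi}^*(\mathbf{x})$, which is the envelope formula \eqref{eq: dLdx}.

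The only delicate point is the nonsingularity argument, specifically the role of SC. Without SC, an active constraint with $\lambda_i=0$ produces a zero row in the Jacobian, and the reduction above fails.
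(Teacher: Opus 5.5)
Your proposal is correct and takes essentially the same route as the paper. The paper does not prove this result itself but imports it from \cite[Theorem~4.4]{Still.note.18}, whose argument is the one you give: the IFT applied to the KKT residual, with $\partial_{\bm{\xi}}\Gamma$ nonsingular under LICQ, SC and SOC (your row elimination is the paper's reduction to the active-set system \eqref{eq: dGdXi}), plus the envelope identity for the value function. One small tightening: strict convexity gives uniqueness only of $\mathbf{z}^*(\mathbf{x})$, and uniqueness of the multiplier at nearby $\mathbf{x}$ follows because the active set stays $\mathcal{I}$ and $\mathbf{G}_{\mathcal{I}}(\mathbf{x})$ keeps full row rank by continuity, so LICQ persists.
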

\begin{remark}
    For the strictly convex QP in \eqref{eq: parametric-qp}, SOC holds automatically; therefore, under LICQ and SC, the KKT Jacobian $\partial_{\bm{\xi}}\Gamma$ is nonsingular. The value-function sensitivity in \eqref{eq: dLdx}, however, also applies to parametric LPs, i.e., when $\mathbf{Q}=\mathbf{0}$, without requiring SOC \cite[Theorem~4.3]{Still.note.18}.
    \label{remark: LP-dL}
\end{remark}

Under SC, the full KKT sensitivity system in \eqref{eq: dXidx} locally reduces to its active components.
Let $\bm{\xi}_\mathcal{I}^*:= [(\mathbf{z}^*)^\top, (\bm{\lambda}_\mathcal{I}^*)^\top]^\top$.
Then, $\frac{\partial \bm{\xi}_\mathcal{I}^*}{\partial \mathbf{x}}=-[(\partial_{\bm{\xi}}\Gamma)_\mathcal{I}]^{-1}(\partial_{\mathbf{x}}\Gamma)_\mathcal{I}$, with the equivalent reduced Jacobian matrices given by:
\begin{subequations}
    \begin{align}
        (\partial_{\bm{\xi}}\Gamma)_\mathcal{I}
        &:= 
        \begin{bmatrix}
        2\mathbf{Q} & \mathbf{G}
        _\mathcal{I}(\mathbf{x})^\top \\\mathbf{G}_\mathcal{I}(\mathbf{x}) & \bm{0}\\
        \end{bmatrix}\in\mathbb{R}^{n_{\mathcal{I}}\times n_{\mathcal{I}}}, \label{eq: dGdXi}
    \\[0.2em]
    (\partial_{\mathbf{x}}\Gamma)_\mathcal{I}&:=
    \begin{bmatrix}
    (\partial_{\mathbf{x}}\mathbf{G}_\mathcal{I}(\mathbf{x})^\top)\boldsymbol{\lambda}_\mathcal{I}^* \\ (\partial_{\mathbf{x}}\mathbf{G}_\mathcal{I}(\mathbf{x}))\mathbf{z}^*-\partial_{\mathbf{x}}\mathbf{h}_\mathcal{I}(\mathbf{x})
    \end{bmatrix}\in\mathbb{R}^{n_{\mathcal{I}}\times p}.\label{eq: dGdx}
\end{align}\end{subequations}
Here, $n_{\mathcal{I}}=k +|\mathcal{I}|$ is the dimension of the reduced KKT system, with $|\mathcal{I}|$ being the number of active constraints. Thus, the primal and active dual sensitivities, $\frac{\partial\mathbf{z}^*}{\partial{\mathbf{x}}}$ and $\frac{\partial\lambdai}{\partial{\mathbf{x}}}$, are given by 
the first $k$ rows and the last $|\mathcal{I}|$ rows of $\frac{\partial \bm{\xi}_\mathcal{I}^*}{\partial \mathbf{x}}$, respectively.

\section{Problem Formulation}\label{sec: 3}
We now formally state the problem we are solving, after first defining additional notation and assumptions.
Let $\bm{x}\in\mathcal{X}$ be the robot's state. 
We use halfspace representations (H-reps) to define polytopic sets of the robot, obstacles, and their corresponding COs, denoted by $\mathcal{R}(\bm{x})\subset\mathcal{W}\subset\mathbb{R}^d$, $\mathcal{O}_j\subset\mathcal{W}\subset\mathbb{R}^d$, and $\mathcal{O}_j^c(\bm{x})\subset\mathcal{M}_\mathcal{D}\subset\mathbb{R}^d$, respectively.
\begin{subequations}
    \begin{align}
        &\mathcal{R}(\bm{x})= \{\mathbf{y}\in \mathbb{R}^{d}\mid A_r(\bm{x}) \mathbf{y}\leq b_r(\bm{x})\}, \\
        &\mathcal{O}_j= \{\mathbf{y}\in \mathbb{R}^{d}\mid A_{o_j} \mathbf{y}\leq b_{o_j}\},\quad j=1,\ldots,N_\mathcal{O},
        \label{eq: obs-w}\\
        &\mathcal{O}_j^c(\bm{x})=\{\mathbf{y}\in \mathbb{R}^{d}\mid \mathbf{A}^{c_j}(\bm{x}) \mathbf{y}\leq \mathbf{b}^{c_j}(\bm{x})\},\label{eq: obs-md}
    \end{align}
\end{subequations}
where $A_r\in\mathbb{R}^{\ell_r\times d},~b_r\in\mathbb{R}^{\ell_r}$,  $A_{o_j}\in\mathbb{R}^{\ell_{o_j}\times d},~b_{o_j}\in\mathbb{R}^{\ell_{o_j}}$, $\mathbf{A}^{c_j}\in\mathbb{R}^{\ell
_{c_j}\times d},~\mathbf{b}^{c_j}\in\mathbb{R}^{\ell
_{c_j}}$, with $\ell_r$, $\ell_{o_j}$, and $\ell_{c_j}$ denoting the number of facets of the robot, the $j$-th of the $N_\mathcal{O}$ obstacles, and their CO, respectively; $d\in\{2,3\}$ for 2D and 3D spaces. 
Each CO depends on $\bm{x}$ via \eqref{eq: co} and is used to evaluate the SDF for each robot-obstacle pair. We denote the CO by $\cobs$ and $\mathcal{CO}$ interchangeably, and omit the state dependency of $\mathcal{R},\cobs_j$ when clear from context.

Throughout this paper, the superscript $c$ denotes quantities associated with the CO in \mdspace; the subscripts $i$ and $\mathcal{I}$ are used to extract the $i$-th row and the active component(s) from a given matrix, respectively.
We use $(\mathbf{A}^c_{\mathcal{I}}, \mathbf{b}^c_{\mathcal{I}})$ to denote the active constraint(s) of the CO, and $(\aci, \bcci)$ when emphasizing a single active constraint. 
Under the LICQ assumption, the cardinality of the active set is bounded by the spatial dimension, i.e., $1 \leq |\mathcal{I}| \leq d$. 
For a nonempty closed set $\Omega \subset \mathbb{R}^d$, we denote its boundary, interior, and skeleton as $\mathrm{bd}(\Omega)$, $\mathrm{int}(\Omega)$ and $\mathcal{S}(\Omega)$, respectively. For a simple polygon $\mathcal{P} \subset \mathbb{R}^2$, we denote its vertex and (open) edge sets by $\mathcal{V}(\mathcal{P})$ and $\mathcal{E}(\mathcal{P})$, respectively, and its boundary consists
of its vertices and the edges (open line segments), i.e., $\mathrm{bd}(\mathcal{P})=\mathcal{V}(\mathcal{P})\cup\mathcal{E}(\mathcal{P})$. We note that according to Definition~\ref{def: skeleton}, the vertices of $\mathcal{P}$ are part of its skeleton, i.e., $\mathcal{V}(\mathcal{P})\subset\mathcal{S}(\mathcal{P})$.

Our problem is defined based on the following assumptions:
\begin{enumerate}
    \item The robot is modeled as a compact, convex polytopic rigid body in $\mathbb{R}^d$ with nonempty interior.
    For each $\bm{x}\in \mathcal{X}$,
    the H-rep functions ${A}_r(\bm{x})$ and $b_r(\bm{x})$ defining its 
    occupied set $\mathcal{R}(\bm{x})$ 
    are $C^1$
    with respect to $\bm{x}$. \label{assp1}
    \item Each obstacle $\mathcal{O}_j$ is a compact, convex, static polytope with nonempty interior. Information regarding the obstacles is accessible to the robot. \label{assp2}
    \item For almost all $\bm{x}\in \mathcal{X}$, the induced CO H-rep functions $\mathbf{A}^{c_j}(\bm{x})$ and $\mathbf{b}^{c_j}(\bm{x})$ are continuously differentiable with respect to $\bm{x}$, and the associated convex programs satisfy LICQ and SC.
    In 2D, geometric degeneracies occur (on a zero-measure set) when $\mathcal{R}$ and $\mathcal{O}_j$ have parallel edges, causing the minimal CO H-rep to contain fewer than $\ell_r+\ell_{o_j}$ rows.
    Their effect on the differentiability of the SDF will be discussed further in Sec.~\ref{subsec: diff-analysis}. \label{assp3}
    \item All H-reps of polytopic sets are minimal.
    \label{assp4}
\end{enumerate}
We now formulate the safe-navigation problem for a polytopic robot $\mathcal{R}$ among polytopic obstacles $\{\mathcal{O}_j\}_{j=1}^{N_\mathcal{O}}$.
\begin{prob}
    \label{prob: 1}
    Given a robot $\mathcal{R}$ with known control-affine dynamics in the form of \eqref{eq: ctrl-affine} and a collection of static obstacles $\{\mathcal{O}_j\}_{j=1}^{N_\mathcal{O}}$, find a state-feedback controller $\bm{u}^*(\bm{x})$
    that
    \begin{enumerate}
        \item 
        %
        minimizes a quadratic cost pointwise and steers the robot to a goal position $\bm{x}_g\in(\mathcal{W}\setminus \bigcup_j\mathcal{O}_j)$;
        \item 
        satisfies (i) $\text{sd}(\mathcal{R}(\bm{x}),\mathcal{O}_j) \geq d_{\mathrm{safe}},~\forall j$, where $d_{\mathrm{safe}}\geq 0$ is the user-defined safety margin, and (ii) the control bounds $\bm{u}^*(\bm{x})\in\mathcal{U}:=\{\bm{u}\in\mathbb R^q\mid\bm{u}_{\min}\leq\bm{u}\leq\bm{u}_{\max}\}$.
    \end{enumerate} 
    \label{prob}
\end{prob}
Subsequent derivations suppress the index $j$; the resulting construction applies pairwise to each $\mathcal{O}_j$.

\section{Convex-optimization-based Signed Distance
}\label{sec: 4}
This section presents our approach to computing the ``exact'' SDF between polytopes using convex optimization in \mdspace. Specifically, we compute the minimum distance via a QP (\ref{subsec: dist-qp}) and the penetration depth via a novel LP (\ref{subsec: depth-lp}). We then investigate the equivalence of their optimality conditions from both geometric (\ref{subsec: geo-equi}) and algebraic (\ref{subsec: algebra-equi}) perspectives. Finally, we study the differentiability of the exact SDF in SE(2) (\ref{subsec: diff-analysis}).

\subsection{Minimum Distance as a Quadratic Program}\label{subsec: dist-qp}
When the robot $\mathcal{R}(\bm{x})$ and an obstacle $\mathcal{O}$ are disjoint in \wspace, i.e., $\mathcal{R}(\bm{x})\cap \mathcal{O}=\varnothing$, the minimum distance between the two sets can be computed by solving the following QP:
\begin{align}\begin{split}
    &\min\,\,\|\mathbf{r}-\mathbf{o}\|^2_2
   \quad\mathrm{s.t.}\,\,\, A_r(\bm{x}) \mathbf{r}\leq b_r(\bm{x}), \,\,A_o\mathbf{o} \leq b_o\\
   &\,\,\text{with}\,\,\,\, \mathrm{dist}(\mathcal{R}(\bm{x}),\mathcal{O}) =\|\mathbf{r}^*(\bm{x})-\mathbf{o}^*(\bm{x})\|_2,
   \label{eq: dist_w}
\end{split}\end{align}
where 
$(\mathbf{r}^*,\mathbf{o}^*)
=(\mathbf{r}^*(\bm{x}),\mathbf{o}^*(\bm{x}))
$ is the optimal solution to \eqref{eq: dist_w}, 
representing the closest points on the robot-obstacle pair. It is worth noting that the closest-point pair $(\mathbf{r}^*,\mathbf{o}^*)$ in \wspace~is not necessarily unique, as in 2D cases in which the closest features are parallel edges
(Fig.~\ref{fig: cvxopt-4cases}). In contrast, the equivalent formulation in \mdspace~given below
yields a unique optimal solution, as we will establish in Lemma~\ref{lem: zstar-unique-qp}.
\begin{align}\begin{split}
    &\min\,\,\|\bm{z}\|^2_2\quad\mathrm{s.t.}\,\,\,
    \mathbf{A}^c(\bm{x}) \bm{z}\leq \mathbf{b}^c(\bm{x})
    \\
   &\,\,\text{with}\,\,\,\, 
   \mathrm{dist}(\mathcal{R}(\bm{x}),\mathcal{O})=\mathrm{dist}(\bm{0},\cobs(\bm{x})) =\|\bm{z}^*(\bm{x})\|_2,
   \label{eq: dist_md}
\end{split}\end{align}
where $\bm{z}^*=\bm{z}^*(\bm{x})\in\mathbb{R}^d$ is the optimal solution of \eqref{eq: dist_md}, whose 2-norm gives the minimum distance. Furthermore, 
the optimal solution $\bm{z}^*$ is in fact the (Euclidean) projection of the origin $\bm{0}$ onto the CO \cite[Chapter~8.1.1]{Boyd.Vandenberghe.04}.

\begin{defi}[Projection of a point onto a set {\cite{Boyd.Vandenberghe.04}}]\label{def: proj}
    Let $C$ be a nonempty closed subset of $\mathbb{R}^n$. The projection of a point $\mathbf{x}\in\mathbb{R}^n$ onto $C$, denoted as $\mathrm{proj}_C(\mathbf{x})$, is defined as
    \begin{align}
        \mathrm{proj}_C(\mathbf{x})=\arg\min\{\|\bm{z}-\mathbf{x}\|_2\mid\bm{z}\in C\}. \label{eq: proj}
    \end{align}
\end{defi}
\begin{lemma}
    The optimal solution $\bm{z}^*$ of \eqref{eq: dist_md} is the unique projection of the origin $\bm{0}$ onto the CO in \mdspace. Given that $\mathcal{R}\cap \mathcal{O}= \varnothing$, it lies on $\mathrm{bd}(\cobs)$.
    \label{lem: zstar-unique-qp}
\end{lemma}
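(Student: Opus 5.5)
The plan has three parts: show the feasible set is nice, show the minimizer is unique, and show it sits on the boundary when the sets are disjoint.

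\emph{Feasible set.} By Lemma~\ref{lem: properties} and Assumptions~\ref{assp1}--\ref{assp2}, the CO $\cobs=\mathcal{O}\oplus(-\mathcal{R})$ is a nonempty, convex polygon (in general, a polytope). It is compact, being the Minkowski sum of two compact sets. Hence it is closed, and its H-rep $\mathbf{A}^c\bm{z}\le\mathbf{b}^c$ describes exactly the feasible set of \eqref{eq: dist_md}.

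\emph{Existence and uniqueness.} The objective $\|\bm{z}\|_2^2$ is continuous and coercive, so a minimizer exists on the nonempty closed feasible set. For uniqueness, I will use strict convexity of $\|\cdot\|_2^2$ (its Hessian is $2I\succ0$). Suppose $\bm{z}_1\ne\bm{z}_2$ are both optimal with common value $p^*$. Their midpoint is feasible by convexity of $\cobs$. Its objective equals $p^*-\tfrac14\|\bm{z}_1-\bm{z}_2\|^2<p^*$, which is a contradiction. Since minimizing $\|\bm{z}\|_2^2$ is equivalent to minimizing $\|\bm{z}-\bm{0}\|_2$, the unique minimizer $\bm{z}^*$ is by Definition~\ref{def: proj} exactly $\mathrm{proj}_{\cobs}(\bm{0})$.

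\emph{Boundary location.} Assume $\mathcal{R}\cap\mathcal{O}=\varnothing$. Lemma~\ref{lem: collision} then gives $\bm{0}\notin\cobs$, so $\bm{z}^*\neq\bm{0}$. Suppose, for contradiction, that $\bm{z}^*\in\mathrm{int}(\cobs)$. Then $(1-\epsilon)\bm{z}^*\in\cobs$ for all sufficiently small $\epsilon>0$. This point has strictly smaller norm, since $(1-\epsilon)^2\|\bm{z}^*\|^2<\|\bm{z}^*\|^2$ because $\bm{z}^*\ne\bm 0$. That contradicts optimality, so $\bm{z}^*\in\cobs\setminus\mathrm{int}(\cobs)=\mathrm{bd}(\cobs)$, using that $\cobs$ is closed. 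An equivalent route is the KKT conditions: $2\bm{z}^*+(\mathbf{A}^c)^\top\bm\lambda=0$ with $\bm{z}^*\ne0$ forces $\bm\lambda\ne0$, so some constraint is active.

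The main point needing care is ensuring that the H-rep feasible set coincides with the closed convex CO, which is what makes the projection well defined. Lemma~\ref{lem: properties} together with Assumption~\ref{assp4} covers this; the remaining steps are routine.
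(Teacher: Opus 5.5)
Your proof is correct and follows essentially the same route as the paper: you identify the dist-QP with the projection of $\bm{0}$ onto the closed convex CO, obtain uniqueness from convexity, and place $\bm{z}^*$ on $\mathrm{bd}(\cobs)$ by an interior-point contradiction, using $\bm{0}\notin\cobs$ from Lemma~\ref{lem: collision}. The only differences are that the paper cites the Projection Theorem for existence and uniqueness, whereas you prove it inline via coercivity and the strict-convexity midpoint argument, and that you make the strictly closer interior point explicit as $(1-\epsilon)\bm{z}^*$.
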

\begin{proof}
    Since minimizing $\|\bm{z}\|_2^2$ is equivalent to minimizing $\|\bm{z}\|_2$, \eqref{eq: dist_md} coincides with the projection problem in Definition~\ref{def: proj}; hence, $\bm{z}^*$ is a projection of the origin onto the CO. Next, by Lemma~\ref{lem: properties}, the CO is a nonempty closed convex set in $\mathbb{R}^d$, so this projection is unique by 
    the Projection Theorem \cite[Proposition~1.1.9]{Bertsekas.book.09}. Moreover, $\mathcal{R}\cap \mathcal{O}= \varnothing$ implies
    $\bm{0}\notin\cobs$ by Lemma~\ref{lem: collision}. If $\bm{z}^*\in\mathrm{int}(\cobs)$, 
    an open neighborhood of $\bm{z}^*$  contained in $\cobs$ would include a point strictly closer to the origin, contradicting optimality. Therefore, $\bm{z}^* \in \mathrm{bd}(\cobs)$. 
\end{proof}

\begin{remark}[Triviality of the minimum-distance QP under collision]
    When the sets are in collision, the origin lies inside the CO, i.e., $\mathbf{0} \in \mathrm{int}(\cobs)$, so \eqref{eq: dist_md} trivially returns $\bm{z}^*=\bm{0}$. Thus, it provides no measure of penetration, motivating the penetration-depth formulation introduced next.
\end{remark}

\subsection{Penetration Depth as a Linear Program}\label{subsec: depth-lp}
We know from Definition~\ref{def: sd} that the penetration depth is the infimum of the translation magnitudes required to separate two intersecting sets \eqref{eq: pd_w_def};
however, it is unclear how to impose the separation condition, $(\mathcal{R}+\bm{t})\cap \mathcal{O} = \varnothing$, as a convex constraint in \wspace. 
In contrast to the minimum-distance problem, which admits QP formulations in both \wspace~\eqref{eq: dist_w} and \mdspace~\eqref{eq: dist_md}, the penetration-depth problem lacks a known convex optimization formulation, to the best of the authors' knowledge. To overcome this limitation, our preliminary work \cite{Chen.etal.CDC25} proposed a novel LP formulation in \mdspace~by recasting the penetration-depth computation as the standard point-depth problem \cite[Chapter~8.5]{Boyd.Vandenberghe.04}, i.e., the depth of the origin with respect to the CO, given by
\begin{align}\begin{split}
    & \mathrm{pd}(\mathcal{R}(\bm{x}),\mathcal{O})=\mathrm{depth}(\bm{0},\cobs(\bm{x})) =\,\,{\max}_{s\in\mathbb{R}^+}\,\, s
     \\
   &\,\,\mathrm{s.t.}
   \,\,\,\,
   \|\mathbf{a}^c_{i}(\bm{x})\|_2 s \leq {b}^c_{i}(\bm{x}),\,\, i=1,\ldots,\ell_c,
   \label{eq: pd_md}
\end{split}\end{align}
where $\mathbf{a}^c_{i}$ and ${b}^c_{i}$ denote the $i$-th row of $\mathbf{A}^c$ and $\mathbf{b}^c$, respectively. 
The optimal value admits the closed form 
$s^*=\min_i{b}^c_{i}(\bm{x})/\|\mathbf{a}^c_{i}(\bm{x})\|_2$,
i.e., the distance from the origin to the closest hyperplane of the CO. For the rest of this paper, we refer to the convex programs in \mdspace~\eqref{eq: dist_md} and \eqref{eq: pd_md} as the dist-QP and depth-LP, respectively. For later use, we define the active constraint sets of the two programs as $\mathcal{I}^\mathrm{QP}(\bm{x})=\{i\mid\mathbf{a}^c_{i}(\bm{x})\bm{z}^*={b}^c_{i}(\bm{x})\}$ and $\mathcal{I}^\mathrm{LP}(\bm{x})=\{i\mid \|\mathbf{a}^c_{i}(\bm{x})\|_2s^*={b}^c_{i}(\bm{x})\}$. When the context is clear, we simply write $\mathcal{I}$.

While \eqref{eq: pd_md} provides the penetration depth $s^*$, it does not provide any directional information required for the subsequent SDF-gradient computation.
To obtain a representation analogous to the dist-QP solution in \eqref{eq: dist_md}, we recover $\bm{z}_i^* \in \mathrm{bd}(\mathcal{O}^c)$,
referred to as witness point(s)\footnote{While our prior work \cite{Chen.etal.CDC25, Chen.etal.ArXiv26} denoted $\bm{z}^*$ as a ``critical point'', we adopt the term ``witness point'' from the collision detection community in this article to prevent any confusion with the standard definition of critical points in optimization and nonsmooth analysis.}, by projecting the origin onto the hyperplane associated with the active constraint of \eqref{eq: pd_md}, i.e., $\mathcal{H}_i(\bm{x})=\{\mathbf{y}\in \mathbb{R}^{d}\mid \mathbf{a}^c_{i}(\bm{x}) \mathbf{y} = {b}^c_{i}(\bm{x}),\forall i\in\mathcal{I}^\mathrm{LP}\}$:
\begin{align}\begin{split}
    \bm{z}^*_i(\bm{x}) = \mathrm{proj}_{\mathcal{H}_i}(\bm{0}) = \frac{{b}^c_{i}(\bm{x})}{\| \mathbf{a}^c_{i}(\bm{x})\|_2^2} \left(\mathbf{a}^c_{i}(\bm{x})\right)^\top,\,\,\forall i\in\mathcal{I}^\mathrm{LP},
\label{eq: pd_zstar}
\end{split}\end{align}
with $\|\bm{z}_i^*\|_2 =s^*$ by the definition of $\mathcal{I}^\mathrm{LP}$. If $|\mathcal{I}^\mathrm{LP}|=1$, the witness point is unique and is denoted by $\bm{z}^*$. If $|\mathcal{I}^\mathrm{LP}|>1$, the penetration-depth value $s^*$ remains unique, but the witness point is not. Such non-uniqueness occurs when the origin lies on the skeleton (Definition~\ref{def: skeleton}) of the CO, corresponding to non-differentiable points of the SDF, which constitute a set of measure zero, denoted by $\mathcal{X}_\mathcal{S} = \{\bm{x} \in \mathcal{X} \mid \bm{0} \in \mathcal{S}(\cobs(\bm{x}))\}$. In the following sections, we investigate the relationship between the optimality conditions of the dist-QP \eqref{eq: dist_md} and the depth-LP \eqref{eq: pd_md} for $\bm{x}\notin\mathcal{X}_\mathcal{S}$. 


\begin{remark}
    The zero-measure set $\mathcal{X}_\mathcal{S}$ must be non-empty unless the CO is a halfspace (which will not be the case under our assumptions on the robot and obstacles), which justifies the inherent nonsmoothness
    of the SDF discussed earlier. 
\end{remark}


\subsection{Geometric Equivalence of the Witness Point} \label{subsec: geo-equi}
The key advantages of formulating the optimization problems in \mdspace~are twofold: (i) it enables a convex optimization (LP) formulation for penetration depth computation \eqref{eq: pd_md}, and (ii) the minimum distance and penetration depth share an identical geometric interpretation, i.e., regardless of the collision status, the witness point $\bm{z}^*$ is geometrically equivalent to the Euclidean projection of the origin onto the boundary of the CO as established by
Lemma~\ref{lem: zstar-unique-qp} and the witness-point construction in \eqref{eq: pd_zstar}, as shown in Fig.~\ref{fig: sd}. The signed distance $\mathrm{sd}(\mathcal{R}, \mathcal{O})$ can thus be defined using the unified metric based on the Euclidean norm of $\bm{z}^*$:
\begin{align}
\mathrm{sd}(\mathcal{R}, \mathcal{O}) = \mathrm{sd}(\bm{0}, \cobs) =\begin{cases}
+\|\bm{z}^*\|_2 & \text{if } \mathbf{0} \notin \cobs
\\
-\|\bm{z}^*\|_2 & \text{if } \mathbf{0} \in \cobs
\end{cases}. \label{eq: sd_as_2norm}
\end{align}

\begin{figure}[hbt!]
    \centering\includegraphics[width=0.95\linewidth]{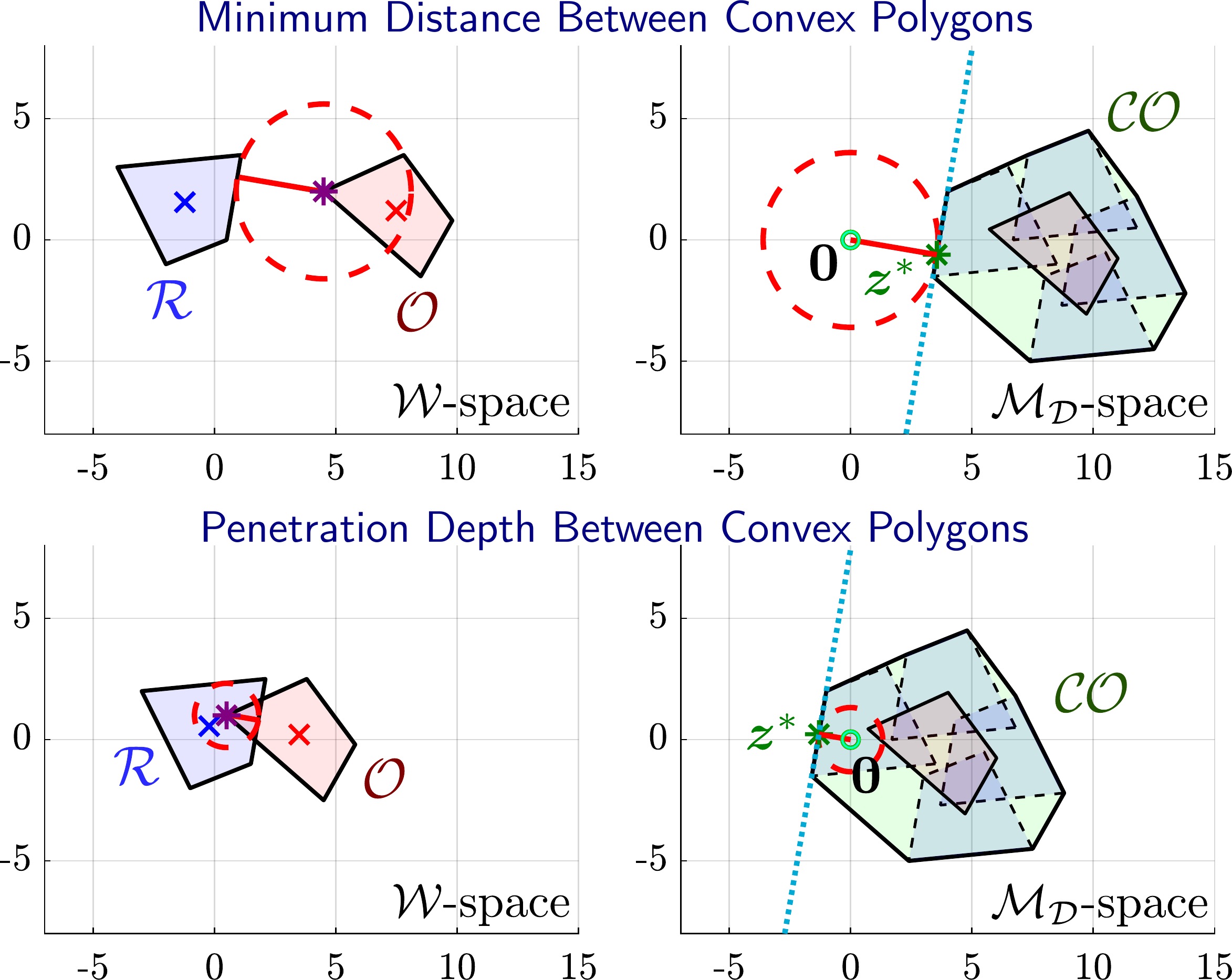}
    \caption{Distances between disjoint (\textbf{Top}) and intersecting (\textbf{Bottom}) convex sets $\mathcal{R}$ and $\mathcal{O}$ in \wspace~(\textbf{Left}), and the corresponding equivalent distances in \mdspace~(\textbf{Right}). In \mdspace, the origin $\bm{0}$ lies outside or inside of the CO if the signed distance is positive or negative, respectively. Dashed lines denote the active constraint(s) associated with the witness point $\bm{z}^*$.}
    \label{fig: sd}
\end{figure}
The exactness of \eqref{eq: sd_as_2norm} follows from Lemma~\ref{lem: zstar-unique-qp} in the safe case and from the depth-LP
formulation \eqref{eq: pd_md} in the penetration case; hence, \eqref{eq: sd_as_2norm} coincides with the standard SDF definition \cite{Osher.etal.book04}. This representation is continuous across $\mathrm{bd}(\cobs)$, where $\|\bm{z}^*\|_2=0$, and enables the algebraic correspondence and unified
gradient computation developed next.

\subsection{Algebraic Correspondence of dist-QP and depth-LP
}
\label{subsec: algebra-equi}
We now establish the algebraic correspondence between the dist-QP \eqref{eq: dist_md} and the depth-LP \eqref{eq: pd_md} under the single-active-constraint condition. The single-active condition corresponds to $\bm{z}^*\in\mathcal{E}(\cobs)$ in the safe case, and holds a.e. in the colliding case, where the LP-recovered witness point $\bm{z}^*$ is uniquely determined by \eqref{eq: pd_zstar}.
Specifically, we show that the LP optimal dual variable $\nu^*_\mathcal{I}$ can be rescaled into an LP-induced pseudo multiplier $\lambda_\mathcal{I}^*$, yielding a signed ``pseudo-QP'' tuple that matches the single-active stationarity structure of the dist-QP.
This mapping is given in the following lemma.
\begin{lemma}
    \label{lem: pseudo-qp-mapping}
    Suppose that $\bm{0}\in\mathrm{int}(\cobs(\bm{x}))$ and $\bm{x}\notin\mathcal{X}_\mathcal{S}$. Then the optimal primal-dual pair $(s^*,\nu^*_\mathcal{I})$ of the depth-LP \eqref{eq: pd_md} induces a signed ``pseudo-QP'' tuple $(\bm{z}^*,\lambda_\mathcal{I}^*)$ that matches the 
    stationarity structure of the dist-QP \eqref{eq: dist_md} under a single active constraint. The pseudo-QP mapping is given by: 
    \begin{enumerate}
        \item Pseudo-QP multiplier: 
        $\lambda_\mathcal{I}^* = 2s^* \nu^*_{\mathcal{I}}$.\vspace{3pt}
        \item Pseudo-QP witness point and stationarity relation: 
        
        $\bm{z}^*=s^* \frac{\aciT}{\|\aci\|_2}=\frac{1}{2}\aciT\lambda_\mathcal{I}^*.$ 
    \end{enumerate}
\end{lemma}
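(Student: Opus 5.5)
We will prove the lemma by writing out the KKT system of the depth-LP \eqref{eq: pd_md} at the interior point, reading off the single active dual, and then comparing it with the stationarity condition of the dist-QP \eqref{eq: dist_md} restricted to one active constraint.

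\textbf{Step 1: reduce to a single active constraint.} Since $\bm{0}\in\mathrm{int}(\cobs)$, every $b^c_i>0$, so $s^*=\min_i b^c_i/\|\mathbf{a}^c_i\|_2>0$. Because $\bm{x}\notin\mathcal{X}_\mathcal{S}$, the origin is not on the skeleton. This excludes ties in the minimum, since two active hyperplanes would give two distinct closest boundary points. Hence $|\mathcal{I}^{\mathrm{LP}}|=1$, and we write $\mathcal{I}=\{i\}$.

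\textbf{Step 2: the LP dual.} Write the LP as minimizing $-s$ subject to $\|\mathbf{a}^c_i\|_2 s-b^c_i\le 0$; the sign constraint on $s$ is inactive because $s^*>0$. The Lagrangian stationarity condition is $-1+\sum_i\nu_i\|\mathbf{a}^c_i\|_2=0$. By complementary slackness, only the active multiplier can be nonzero, so $\nu^*_\mathcal{I}=1/\|\aci\|_2>0$. This multiplier is unique, and SC holds.

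\textbf{Step 3: build the pseudo-QP tuple and check stationarity.} By \eqref{eq: pd_zstar} and the active equality $b^c_i=\|\aci\|_2 s^*$, we get
\begin{align*}
\bm{z}^*=\frac{b^c_i}{\|\aci\|_2^2}\aciT = s^*\frac{\aciT}{\|\aci\|_2}.
\end{align*}
Now set $\lambda^*_\mathcal{I}=2s^*\nu^*_\mathcal{I}=2s^*/\|\aci\|_2$. Then $\tfrac12\aciT\lambda^*_\mathcal{I}=s^*\aciT/\|\aci\|_2=\bm{z}^*$, which is item 2.

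It remains to compare with the dist-QP. With $\mathbf{Q}=I$ and $\mathbf{p}=\bm{0}$ in \eqref{eq: KKT-residual}, single-active stationarity reads $2\bm{z}^*+\aciT\lambda^*_\mathcal{I}=\bm{0}$, i.e., $\bm{z}^*=-\tfrac12\aciT\lambda^*_\mathcal{I}$. This has the same form as item 2 except for the sign, which is why the tuple is called \emph{signed}. In the safe case the origin lies on the opposite side of the supporting hyperplane, so $\bm{z}^*$ points along $-\aciT$. In the colliding case $\bm{z}^*$ points along $+\aciT$. Equivalently, the pseudo tuple satisfies the dist-QP stationarity with $\lambda^*_\mathcal{I}$ replaced by $-\lambda^*_\mathcal{I}$. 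We will also note that $\bm{z}^*$ satisfies the active equality $\aci\bm{z}^*=b^c_i$, which matches the QP's primal feasibility and complementary slackness on that row.

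\textbf{Main obstacle.} The delicate point is Step 1: justifying that $\bm{x}\notin\mathcal{X}_\mathcal{S}$ forces a unique active LP constraint. We will argue this directly. If two distinct facets $i\neq j$ both attained the minimum, then $\mathrm{proj}_{\mathcal{H}_i}(\bm{0})$ and $\mathrm{proj}_{\mathcal{H}_j}(\bm{0})$ would be two distinct boundary points of $\cobs$ at equal distance $s^*$. The ball of radius $s^*$ about the origin is then a maximal inscribed ball touching the boundary at two points, so $\bm{0}\in\mathcal{S}(\cobs)$ by Definition~\ref{def: skeleton}, a contradiction. Minimality of the H-rep (assumption~\ref{assp4}) guarantees that distinct rows give distinct hyperplanes and hence distinct projection points. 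The remaining care is bookkeeping of signs.
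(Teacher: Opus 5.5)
Your proposal is correct and follows essentially the paper's route: a single active LP constraint from $\bm{x}\notin\mathcal{X}_\mathcal{S}$, $\nu^*_\mathcal{I}=1/\|\aci\|_2$ from LP stationarity, $\bm{z}^*=s^*\aciT/\|\aci\|_2$ from \eqref{eq: pd_zstar}, the rescaling $\lambda^*_\mathcal{I}=2s^*\nu^*_\mathcal{I}$, and a comparison with the single-active dist-QP stationarity. The differences are minor: you derive the single-active reduction from Definition~\ref{def: skeleton} and H-rep minimality (the paper takes it as given), and you make the sign flip relative to $\bm{z}^*=-\tfrac12\aciT\lambda^*_\mathcal{I}$ explicit, whereas the paper solves the QP template for $\bm{z}^*=\AciT[\Aci\AciT]^{-1}\bci$ and shows that it coincides with \eqref{eq: pd_zstar}.
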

\begin{proof}
    We first recall the stationarity condition of the dist-QP \eqref{eq: dist_md},
    which serves as the algebraic template for the pseudo-QP mapping:
    \begin{align}
        \bm{z}^* = -\frac{1}{2}\AciT (\lambdai)^\mathrm{QP}. \label{eq: qp-stat-cond}
    \end{align}
    Combining it with its active constraint $\Aci\bm{z}^*=\bci$, we solve for $(\lambdai)^\mathrm{QP} = -2\left[ \Aci\AciT \right]^{-1}\bci$, and substituting it back into \eqref{eq: qp-stat-cond} yields $\bm{z}^* =  \AciT \left[ \Aci\AciT \right]^{-1}\bci$,
    which, under a single active constraint, reduces to $\bm{z}^*=\frac{\bcci}{\|\aci\|^2_2}\aciT$,
    the same as the witness-point recovery formula in \eqref{eq: pd_zstar}.
    Next, since $\bm{x}\notin\mathcal{X}_\mathcal{S}$, the depth-LP \eqref{eq: pd_md} has a single active constraint,
    whose stationarity and active-constraint conditions give:
    \begin{subequations}\begin{align}
        &-1+{\nu}_\mathcal{I}^*\| \aci \|_2 =0\,\Rightarrow\,\nu^*_{\mathcal{I}} = 1 / \|\aci\|_2, \label{eq: pd-nu*}\\
        & \|\aci\|_2s^*-\bcci=0\,\Rightarrow\,s^* = \bcci / \|\aci\|_2.\label{eq: pd-s*}
    \end{align}\label{eq: pd-nu*-s*}
    \end{subequations}
    Substituting \eqref{eq: pd-s*} into  \eqref{eq: pd_zstar} yields
    \begin{align}
        \bm{z}^* =
        \frac{\bcci}{\|\aci\|_2}\frac{\aciT}{\|\aci\|_2} =s^* \frac{\aciT}{\|\aci\|_2}.
        \label{eq: z*-s*}
    \end{align}
    Thus, $\bm{z}^*$ recovered from the depth-LP \eqref{eq: pd_md} via \eqref{eq: pd_zstar} has the same projection formula as that obtained from the dist-QP template under a single active constraint. 
    It remains to construct the pseudo multiplier. Define the LP-induced pseudo multiplier as 
    \begin{align}
        \lambda_\mathcal{I}^* :=
        2s^* {\nu}_\mathcal{I}^*=\frac{2s^*}{\| \aci \|_2},
        \label{eq: lambda*-nu*}
    \end{align}
    where the second equality follows from \eqref{eq: pd-nu*}. By combining \eqref{eq: pd-nu*-s*}, \eqref{eq: z*-s*}, and \eqref{eq: lambda*-nu*}, the pseudo-QP stationarity relation is established:
    \begin{align}
        \bm{z}^* =
        s^* \frac{\aciT}{\|\aci\|_2} = \frac{1}{2}\aciT\frac{2s^*}{\|\aci\|_2} = \frac{1}{2}\aciT \lambda_\mathcal{I}^*,
    \end{align}
    which completes the proof.
\end{proof}
For notational simplicity, we hereafter use
$\lambdai\in\mathbb{R}_{\geq 0}^{|\mathcal I|}$ as a unified
notation for the active multiplier: it denotes the ordinary QP dual multiplier in the safe case and the LP-induced pseudo-QP multiplier of Lemma~\ref{lem: pseudo-qp-mapping} in the penetration case.
With this convention, both branches satisfy
\begin{align}
    \bm{z}^*= -\frac{1}{2}\mathrm{sgn}(\mathrm{sd}(\mathcal{R},\mathcal{O}))\AciT\lambdai.
    \label{eq: signed-stat-cond}
\end{align}
In summary, the dist-QP \eqref{eq: dist_md} and depth-LP \eqref{eq: pd_md} 
are companion problems that provide a generalized distance measure. Using the pseudo-QP mapping in Lemma~\ref{lem: pseudo-qp-mapping}, both branches admit a common tuple $(\bm{z}^*,\lambdai, \Aci, \bci)$ a.e., regardless of collision status, which will be used in Sec.~\ref{sec: 5} to compute the SDF gradient within a unified active-constraint-based framework.
The overall computational pipeline is illustrated in Fig.~\ref{fig: flow-cvxopt}.
\input{fig-flow-chart}

\subsection{Differentiability of the SDF in SE(2)} 
\label{subsec: diff-analysis}
We now investigate the differentiability of the proposed exact SDF \eqref{eq: sd_as_2norm} with respect to the robot's state $\bm{x}$. We consider a polygonal robot with $\bm{x}=[\bm{p}^{\top},\theta]^{\top}\in\mathrm{SE}(2)$, where $\bm{p}=[x,y]^\top\in\mathbb{R}^2$ is the position and $\theta\in \mathbb{S}^1$ is the orientation. For notational simplicity, let $\Phi(\bm{x}) := \mathrm{sd}(\mathcal{R}(\bm{x}),\mathcal{O})$ and let $R(\theta)=\begin{bmatrix}\cos\theta &  -\sin\theta\\\sin\theta&  \cos\theta\\\end{bmatrix}$ be a rotation matrix.
\begin{lemma}
    The proposed SDF $\Phi$, evaluated in the (Euclidean) \mdspace, is locally Lipschitz with respect to $\bm{x}\in\mathrm{SE}(2)$.
\end{lemma}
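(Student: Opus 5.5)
The plan is to write $\Phi$ as a composition $\Phi(\bm{x}) = \mathrm{sd}(\bm{0},\cobs(\bm{x}))$ and show two things.

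First, for a fixed nonempty compact convex set $K\subset\mathbb{R}^2$, the map $K\mapsto\mathrm{sd}(\bm{0},K)$ is $1$-Lipschitz with respect to the Hausdorff distance $d_H$.

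Second, the map $\bm{x}\mapsto\cobs(\bm{x})$ is locally Lipschitz from $\mathrm{SE}(2)$ into the space of compact convex sets equipped with $d_H$.

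Composing the two gives the claim. This route avoids the H-rep functions $\mathbf{A}^c(\bm{x}),\mathbf{b}^c(\bm{x})$, which are only $C^1$ almost everywhere (Assumption~\ref{assp3}). They can change combinatorially at parallel-edge degeneracies, so working directly with them would be awkward.

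\textbf{Step 1 (rigid-motion structure).} Write $\mathcal{R}(\bm{x}) = R(\theta)\mathcal{R}_0 + \bm{p}$ for a fixed compact convex body polygon $\mathcal{R}_0$ contained in a ball of radius $\rho$. By Definition~\ref{def: CO}, $\cobs(\bm{x}) = \mathcal{O}\oplus(-R(\theta)\mathcal{R}_0) - \bm{p}$. For two states $\bm{x},\bm{x}'$, use $d_H(A\oplus C, B\oplus C)\le d_H(A,B)$ and $d_H(R\mathcal{R}_0, R'\mathcal{R}_0)\le \|R-R'\|_2\,\rho\le \rho|\theta-\theta'|$. This gives
\begin{align}
d_H\big(\cobs(\bm{x}),\cobs(\bm{x}')\big)\le \|\bm{p}-\bm{p}'\|_2+\rho\,|\theta-\theta'|,
\end{align}
so the CO map is globally Lipschitz. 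Here $|\theta-\theta'|$ is measured on $\mathbb{S}^1$.

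\textbf{Step 2 (SDF stability).} For compact convex $K$, I will use the support-function characterization: the unsigned distance satisfies $\mathrm{dist}(\bm{0},K)=\max_{\|\bm{u}\|=1}\max(0,-h_K(\bm{u}))$, and the depth satisfies $\mathrm{depth}(\bm{0},K)=\min_{\|\bm{u}\|=1}h_K(\bm{u})$ when $\bm{0}\in K$. Both agree with \eqref{eq: dist_md}, with \eqref{eq: pd_md}, and with the single formula $\mathrm{sd}(\bm{0},K) = -\min_{\|\bm{u}\|=1}h_K(\bm{u})$, which holds in both branches for convex $K$.

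To justify the single formula, argue as follows. Outside $K$, take the separating direction through the projection $\bm{z}^*$ from Lemma~\ref{lem: zstar-unique-qp}. Inside $K$, the minimum support value is exactly the closest facet distance $\min_i b^c_i/\|\mathbf{a}^c_i\|_2$.

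Then use that $d_H(K,K')=\max_{\|\bm{u}\|=1}|h_K(\bm{u})-h_{K'}(\bm{u})|$. It follows that $|\mathrm{sd}(\bm{0},K)-\mathrm{sd}(\bm{0},K')|\le d_H(K,K')$. Combined with Step~1, this yields $|\Phi(\bm{x})-\Phi(\bm{x}')|\le \|\bm{p}-\bm{p}'\|_2+\rho|\theta-\theta'|$, which is in fact a global Lipschitz bound.

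\textbf{Main obstacle.} I expect the hardest step to be verifying cleanly that the signed formula $-\min_{\bm{u}}h_K(\bm{u})$ coincides with \eqref{eq: sd_as_2norm} in both cases, especially that it is continuous across $\mathrm{bd}(\cobs)$. The outside case is delicate: one must show $-h_K(\bm{u})\le\|\bm{z}^*\|$ for every unit $\bm{u}$, with equality at $\bm{u}=-\bm{z}^*/\|\bm{z}^*\|$. I would prove this via the projection inequality $\langle \bm{z}^*, \bm{y}-\bm{z}^*\rangle\ge 0$ for all $\bm{y}\in K$.

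If this identity becomes cumbersome, the fallback is a case split. Within each branch, each of $\mathrm{dist}$ and $\mathrm{depth}$ is separately $1$-Lipschitz in $d_H$. When $\bm{x}$ and $\bm{x}'$ lie on opposite sides of the contact surface, connect them by the straight path in $\mathrm{SE}(2)$. By continuity this path meets a point with $\Phi=0$, and the triangle inequality then combines the two one-sided bounds.
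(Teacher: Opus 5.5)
Your proof is correct and is essentially the paper's argument. Both rest on the representation $\Phi(\bm{x})=-\min_{\|\bm{a}\|_2=1}\sigma_{\cobs(\bm{x})}(\bm{a})$, valid in both branches, together with a bound $\|\Delta\bm{p}\|_2+\rho|\Delta\theta|$ on the change of the support function that holds uniformly over directions. Routing this through the Hausdorff distance only changes the language, since for compact convex sets that distance is exactly the sup-norm of the support-function difference. Your explicit check of the support-function formula on the safe side (via the projection inequality) and on the penetration side (via the inscribed-ball radius) fills in a step the paper only asserts.
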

\begin{proof}
    By Assump.~\ref{assp1}, $\cobs(\bm{x})$ is nonempty, compact, and convex for every $\bm{x}$. Define its support function as $\sigma_{\cobs(\bm{x})}(\bm{a})=\sup_{\bm{z}\in \cobs(\bm{x})}\bm{a}^\top\bm{z}$ \cite{Bertsekas.book.09}. Combining it with Lemma~\ref{lem: zstar-unique-qp} and \eqref{eq: pd_md} gives $\Phi(\bm{x})=\mathrm{sd}(\bm{0},{\cobs(\bm{x})})=-\min_{\|\bm{a}\|_2=1}\sigma_{\cobs(\bm{x})}(\bm{a})$.
    We parameterize the robot by its state as $\mathcal{R}(\bm{x})=\bm{p}+R(\theta)\mathcal{R}_0$, where $\mathcal{R}_0$ denotes its body-fixed frame representation.
    By the CO definition in \eqref{eq: co} and standard support function calculus \cite{Boyd.Vandenberghe.04}, $\sigma_{\cobs(\bm{x})}(\bm{a})=\sigma_\mathcal{O}(\bm{a})-\bm{a}^\top\bm{p}+\sigma_{-\mathcal{R}_0}(R(\theta)^\top\bm{a})$.
    Let $\rho:=\max_{\mathbf{r}\in{-\mathcal{R}_0}}\|\mathbf{r}\|_2$ be the radius of the smallest enclosing disk centered at the body-frame origin.
    Fix any $\bm{x}_0\in\mathrm{SE}(2)$ and let $\mathcal{N}$ be a sufficiently small neighborhood of $\bm{x}_0$. For any $\bm{x}_1,\bm{x}_2\in\mathcal{N}$, define $\Delta\bm{p}=\bm{p}_1-\bm{p}_2$ and $\Delta\theta=\theta_1-\theta_2$.
    Then, for any unit vector $\bm{a}$: 
    \begin{align}\begin{split}
        &|\sigma_{\cobs(\bm{x}_1)}(\bm{a})-\sigma_{\cobs(\bm{x}_2)}(\bm{a})| 
        \\&=|-\bm{a}^\top\Delta\bm{p}+\sigma_{-\mathcal{R}_0}(R(\theta_1)^\top\bm{a})-\sigma_{-\mathcal{R}_0}(R(\theta_2)^\top\bm{a})|
        \\&\leq\|\Delta\bm{p}\|_2+\rho\|R(\theta_1)^\top\bm{a}-R(\theta_2)^\top\bm{a}\|_2\\
        &=\|\Delta\bm{p}\|_2+2\rho\left|\sin\left(\frac{\Delta\theta}{2}\right)\right|   \leq\|\Delta\bm{p}\|_2+\rho|\Delta\theta|.
    \end{split}\end{align}
    Since this bound is independent of $\bm{a}$, it holds uniformly over all unit directions. Taking the minimum therefore gives  $|\Phi(\bm{x}_1)-\Phi(\bm{x}_2)|\leq\|\Delta\bm{p}\|_2+\rho|\Delta\theta|\leq\sqrt{1+\rho^2}\|\bm{x}_1-\bm{x}_2\|_2$.
    This completes the proof.
\end{proof}
Since $\Phi$ is locally Lipschitz, it is differentiable a.e. \cite{Cortes.08.CSM}. We next recall a useful notion from nonsmooth analysis.

\begin{defi}[Critical Point \cite{Cortes.08.CSM}]
    A critical point of $h:\mathbb{R}^n \to\mathbb{R}$ is a point $\mathbf{x} \in \mathbb{R}^n$ such that $\bm{0}\in\partial h(\mathbf{x})$.
    \label{def: crit-pt}
\end{defi}
By Definition~\ref{def: crit-pt}, the maximizers and minimizers of a locally Lipschitz function are critical points. We analyze the differentiability properties of the SDF on SE(2) by first examining translation in $\mathbb{R}^2$ and then rotation in SO(2). 
\input{fig-nondiff}
\subsubsection{Translational Component in $\mathbb{R}^d$}
For fixed $\theta$, $\Phi(\bm{x})$ reduces to the standard Euclidean SDF in the translational variable $\bm{p}$, whose differentiability properties are well established in the literature, e.g., \cite[Chapter~2.3]{Osher.etal.book04} and \cite[Corollary 3.4.5]{Cannarsa.Sinestrari}. We know from Lemma~\ref{lem: sdf-prop} that $\nabla_{\bm{p}}\Phi(\bm{x})$ exists whenever the witness point $\bm{z}^*(\bm{x})$ is unique. It satisfies the Eikonal equation $\|\nabla_{\bm{p}}\Phi(\bm{x})\|=1$ and the normal-vector property a.e., which we will prove later in Sec.~\ref{subsec: sens-analysis-poly-bnd}. In our formulation, uniqueness always holds in the safe case by Lemma~\ref{lem: zstar-unique-qp}, and thus $\Phi$ is continuously differentiable with respect to $\bm{p}$. In the penetration case, uniqueness is equivalent to the depth-LP \eqref{eq: pd_md} having a single active constraint, i.e., $\bm{x}\notin\mathcal{X}_\mathcal{S}$. 

For $\bm{x}\in\mathcal{X}_\mathcal{S}$ where the translational gradient is undefined, we instead use the generalized gradient defined in \eqref{eq: gen_grad_h}. If the depth-LP~\eqref{eq: pd_md} has $\mathfrak{m}$ active constraints with unit normals $\{\bm{n}_1,\ldots,\bm{n}_\mathfrak{m}\}$, then $\partial_{\bm{p}}\Phi(\bm{x})=\mathrm{co}\{\bm{n}_1,\ldots,\bm{n}_\mathfrak{m}\}$. 
If $\bm{0}\in\partial_{\bm{p}}\Phi(\bm{x})$, 
then $\bm{x}$ is a critical point of $\Phi$ by Definition~\ref{def: crit-pt}. In the NCBF context, such critical points can obstruct safety recovery under pure translation, since no local increase of $\Phi$ can be certified from the generalized gradient; see Sec.~\ref{sec: 5}.
A closely related analysis can be found in \cite[Example~16]{Cortes.08.CSM}.
Figure~\ref{fig: non-diff}(a) illustrates a non-differentiable but non-critical state $\bm{x}_2$ and a critical state $\bm{x}_3$ whose active normals point in opposite directions so their convex hull contains the origin.

\subsubsection{Rotational Component in SO(2)}
By Lemma~\ref{lem: properties}, we know that each edge of the CO is a translated edge from either $\mathcal{R}$ or $\mathcal{O}$ (Fig.~\ref{fig: msum-demo}). A geometric degeneracy of the CO occurs when a robot-sourced edge and an obstacle-sourced edge merge into a single edge, which happens when an edge of $\mathcal{R}$ becomes parallel to an edge of $\mathcal{O}$ in \wspace. We denote the corresponding orientation by $\theta^*_{ij}$,
where the $i$-th edge of $\mathcal{R}$ is parallel to the $j$-th edge of $\mathcal{O}$. There exist $O(\ell_r\times\ell_o)$ such $\theta^*_{ij}$ for convex polygons of $\mathcal{R}$ and $\mathcal{O}$, and we denote their set by $\Theta^*=\{\theta^*_{ij}\mid i=1,\ldots,\ell_r,~j=1,\ldots,\ell_o\}$ \cite{Lozano.TC83}. Across $\theta^*_{ij}$, the edge ordering of the CO changes; see Fig.~\ref{fig: non-diff}(b).
Such angles are potential nonsmooth points of $\Phi$ with respect to $\theta$, but are actual non-differentiable points only when the active feature is affected by the combinatorial change.

For notational simplicity, we suppress the subscript $ij$, and denote the angles immediately before and after $\theta^*$ by $\theta^*_-$ and $\theta^*_+$, respectively. We take counter-clockwise (CCW) rotation as positive. In the illustrated clockwise (CW) rotation in Fig.~\ref{fig: non-diff}(b), we have $\theta^*_- > \theta^* > \theta^*_+$. The one-sided derivatives of $\Phi$ with respect to $\theta$ at the adjacent differentiable configurations have opposite signs: negative at $\theta^*_-$ and positive at $\theta^*_+$. Consequently, 
$0\in\partial_\theta\Phi(\bm{p},\theta^*)=\mathrm{co}\left\{\frac{\partial\Phi}{\partial(\theta)}\big\vert_{(\bm{p},\theta^*_-)}, \frac{\partial\Phi}{\partial(\theta)}\big\vert_{(\bm{p},\theta^*_+)}\right\}$.
Thus, in this illustrated case, with $\bm{p}$ fixed, $\theta^*$ is a critical point with respect to $\theta$ by Definition~\ref{def: crit-pt}.
It corresponds to a local maximizer of $\Phi$, i.e., the orientation providing a locally maximal minimum distance between the two sets. For local-gradient-based methods such as CBFs, such critical configurations may induce issues when coupled with system dynamics, as discussed later in Sec.~\ref{sec: local-min-analysis}.
\input{fig-msum-demo}

\section{SDF-Based NCBF Framework in \mdspace}\label{sec: 5}
This section presents our main result of the construction of NCBFs based on the {\it exact} SDF in \mdspace, together with a unified framework for computing its {\it exact} gradient on each locally fixed active branch via the optimal QP tuple (\ref{subsec: NCBF_grad_calc}).
We then exploit the geometric structure of 2D Minkowski operations to derive active-constraint sensitivities and the resulting closed-form gradient (\ref{subsec: sens-analysis-poly-bnd}). Finally, we present the controller synthesis (\ref{subsec: clf-cbf-qp}). Given the SDF in \mdspace~defined in \eqref{eq: sd_as_2norm}, we propose the following NCBF candidate, termed SD-NCBF: 
\begin{align}\begin{split}   
    h(\bm{x})&=\mathrm{sd}(\bm{0},\cobs(\bm{x}))-d_\mathrm{safe}\\
    &=\mathrm{sgn}(\mathrm{sd}(\bm{0},\cobs(\bm{x})))\|\bm{z}^*\|_2-d_\mathrm{safe}
    \label{eq: sd-ncbf},
\end{split}
\end{align}
where $\mathrm{sgn}(\cdot)$ denotes the sign function, accounting for the sign reversal between safe and colliding cases, and $d_\mathrm{safe}\geq0$ is a user-defined safety margin. Recall that both the witness point $\bm{z}^*(\bm{x})$ and $\cobs(\bm{x})$ depend on the robot's (current) state $\bm{x}$.

\subsection{Unified Active-Constraint-Based Gradient Computation} \label{subsec: NCBF_grad_calc}
To enforce the NCBF constraint \eqref{eq: ncbf-set-valued-constr}, we first need to compute the gradient of $h(\bm{x})$ for all differentiable $\bm{x}$. However, the proposed NCBF in \eqref{eq: sd-ncbf} depends on the witness point $\bm{z}^*(\bm{x})$, which is obtained as the solution to an optimization problem rather than from an explicit algebraic expression.
That is, $\bm{z}^*(\bm{x})$ depends implicitly on the state $\bm{x}$ through the constraints defining $\cobs(\bm{x})$. For $\bm{z}^*(\bm {x})\neq\bm{0}$, we apply the chain rule to \eqref{eq: sd-ncbf} with the SDF defined in \eqref{eq: sd_as_2norm} to obtain:
\begin{align}\begin{split} 
        \frac{\partial h}{\partial \bm{x}} &=\frac{\partial h}{\partial \bm{z}^*}\frac{\partial \bm{z}^*}{\partial \bm{x}} =
        \mathrm{sgn}(\mathrm{sd}(\mathcal{R},\mathcal{O}))\frac{(\bm{z}^*)^\top}{\|\bm{z}^*\|_2}\frac{\partial \bm{z}^*}{\partial \bm{x}}.
    \label{eq: dhdx_ift}
\end{split}\end{align}
The contact case $\bm{z}^*(\bm{x})=\bm{0}$ is treated separately later in Theorem~\ref{thm: dhdx-bd}. The first term is the unit normal along the shortest-distance direction, whereas the second term, $\partial \bm{z}^*/\partial \bm{x}\in\mathbb{R}^{d\times n}$, describes how the witness point $\bm{z}^*$, lying on the boundary of the CO in \mdspace, changes in response to changes in the robot's state in \wspace. It can be evaluated using \eqref{eq: dXidx} in Theorem~\ref{thm: diffopt-ift} with the reduced Jacobian matrices in \eqref{eq: dGdXi} and \eqref{eq: dGdx}, as in our prior work \cite{Chen.etal.CDC25}. However, computing this sensitivity term involves the inverse of the KKT matrix \eqref{eq: dGdXi}. Although its special block structure can be leveraged to accelerate the matrix inversion \cite{Parker.etal.arXiv26}, we adopt an inversion-free approach using the value-function sensitivity result \eqref{eq: dLdx} in Theorem~\ref{thm: diffopt-ift} to obtain the following result.
\begin{coro}\label{coro: dhdx}
    Suppose $\bm{0}\notin\cobs(\bm{x})$,
    i.e., $\bm{z}^*\neq\bm{0}$, and the dist-QP satisfies the regularity conditions in Theorem~\ref{thm: diffopt-ift}. Let $h$ be the SD-NCBF defined in \eqref{eq: sd-ncbf}, and $\bm{z}^*$ be the solution of the dist-QP \eqref{eq: dist_md} with objective function $f_0(\bm{z})=\|\bm{z}\|_2^2$. Then, at any differentiable state $\bm{x}$, the gradient of $h(\bm{x})$ is given by:
    \begin{align}\begin{split}
        \frac{\partial h}{\partial \bm{x}} &=\frac{\partial h}{\partial f_0(\bm{z}^*)}\frac{\partial f_0(\bm{z}^*)}{\partial \bm{x}} =\frac{1}{2\|\bm{z}^*\|_2}\frac{\partial\mathcal{L}}{\partial \bm{x}}\bigg\rvert_{(\bm{z}^*,\boldsymbol{\lambda}_\mathcal{I}^*)}\\
        &=\frac{\lambdaT}{2\|\bm{z}^*\|_2}\left(\frac{\partial \Aci}{\partial\bm{x}}\bm{z}^*-\frac{\partial \bci}{\partial\bm{x}}  \right).
        \label{eq: dhdx-dL}
    \end{split}\end{align}
\end{coro}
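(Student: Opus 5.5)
The plan is to treat the corollary as a direct application of the value-function sensitivity result \eqref{eq: dLdx} in Theorem~\ref{thm: diffopt-ift}. The dist-QP \eqref{eq: dist_md} is then viewed as an instance of the parametric QP \eqref{eq: parametric-qp}, with $\mathbf{Q}=I$, $\mathbf{p}=\bm{0}$, $\mathbf{G}(\bm{x})=\mathbf{A}^c(\bm{x})$ and $\mathbf{h}(\bm{x})=\mathbf{b}^c(\bm{x})$. Under the stated regularity assumptions (LICQ, SC, and SOC automatically), Theorem~\ref{thm: diffopt-ift} gives a neighborhood $\mathcal{N}_{\bm{x}}$ on which $\bm{z}^*(\bm{x})$ is unique and $C^1$, and on which the active set $\mathcal{I}$ is locally fixed.

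\textbf{Step 1: express $h$ through the optimal value.} Since $\bm{0}\notin\cobs(\bm{x})$, we have $\mathrm{sgn}(\mathrm{sd})=+1$. By Lemma~\ref{lem: zstar-unique-qp}, $h(\bm{x})=\|\bm{z}^*(\bm{x})\|_2-d_\mathrm{safe}=\sqrt{f_0(\bm{z}^*(\bm{x}))}-d_\mathrm{safe}$. Because $f_0(\bm{z}^*)=\|\bm{z}^*\|_2^2>0$, the square root is smooth there. The outer chain-rule factor is therefore $\partial h/\partial f_0 = 1/(2\sqrt{f_0})=1/(2\|\bm{z}^*\|_2)$. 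This yields the first equality in \eqref{eq: dhdx-dL}, which recasts \eqref{eq: dhdx_ift} in value-function form.

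\textbf{Step 2: apply the envelope result.} By \eqref{eq: dLdx}, $\partial f_0(\bm{z}^*(\bm{x}))/\partial\bm{x}$ equals $\partial\mathcal{L}/\partial\bm{x}$ evaluated at the optimal primal-dual pair. In the Lagrangian \eqref{eq: L}, only the constraint term depends explicitly on $\bm{x}$; the objective $\|\bm{z}\|_2^2$ does not. Differentiating $\boldsymbol{\lambda}^\top(\mathbf{A}^c(\bm{x})\bm{z}-\mathbf{b}^c(\bm{x}))$ with respect to $\bm{x}$ while holding $(\bm{z},\boldsymbol{\lambda})$ fixed gives $\boldsymbol{\lambda}^{*\top}\big(\tfrac{\partial\mathbf{A}^c}{\partial\bm{x}}\bm{z}^*-\tfrac{\partial\mathbf{b}^c}{\partial\bm{x}}\big)$. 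Here $\tfrac{\partial\mathbf{A}^c}{\partial\bm{x}}\bm{z}^*$ is read column-wise: the $k$-th column is $(\partial\mathbf{A}^c/\partial x_k)\bm{z}^*$.

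\textbf{Step 3: restrict to active constraints.} By complementary slackness, $\lambda_i^*=0$ for every $i\notin\mathcal{I}$, so those rows drop out. This leaves exactly $\lambdaT\big(\tfrac{\partial\Aci}{\partial\bm{x}}\bm{z}^*-\tfrac{\partial\bci}{\partial\bm{x}}\big)$. Multiplying by $1/(2\|\bm{z}^*\|_2)$ then gives the final expression in \eqref{eq: dhdx-dL}.

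The main point needing care is the justification that \eqref{eq: dLdx} is a genuine derivative at $\bm{x}$, rather than a one-sided one. This requires the active set to be locally constant, which SC together with LICQ supplies via Theorem~\ref{thm: diffopt-ift}. It also requires $\mathbf{A}^c,\mathbf{b}^c$ to be $C^1$ near $\bm{x}$. I would invoke Assumption~\ref{assp3}, restricted to differentiable states away from the degenerate orientations $\Theta^*$, to guarantee this.

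A secondary check is that $\bm{z}^*\neq\bm{0}$. This is needed both for smoothness of the square root in Step 1 and to exclude the contact case, which is deferred to Theorem~\ref{thm: dhdx-bd}. As a consistency check, one can verify that the result agrees with \eqref{eq: dhdx_ift} by differentiating the active constraint $\Aci\bm{z}^*=\bci$ and using the stationarity relation \eqref{eq: qp-stat-cond}.
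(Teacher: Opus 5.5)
Your proposal is correct and follows the same route as the paper: the paper obtains the corollary directly from the value-function sensitivity \eqref{eq: dLdx} of Theorem~\ref{thm: diffopt-ift}, applies the chain rule through $h=\sqrt{f_0(\bm{z}^*)}-d_\mathrm{safe}$, and uses complementary slackness to restrict the Lagrangian derivative to the active rows. The consistency check you suggest against the IFT form \eqref{eq: dhdx_ift} is the content of Proposition~\ref{prop: gradient-equivalence}, which the paper proves separately via a Schur-complement computation in the appendix.
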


\begin{prop}\label{prop: gradient-equivalence}
    The gradient computed using the sensitivity of $\bm{z}^*$ via the IFT \eqref{eq: dhdx_ift} is equivalent to that computed using the partial derivative of the Lagrangian \eqref{eq: dhdx-dL}.
\end{prop}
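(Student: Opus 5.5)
We will show that the two expressions agree by computing the IFT form \eqref{eq: dhdx_ift} explicitly and reducing it to \eqref{eq: dhdx-dL}. The work is done by the KKT conditions of the dist-QP at the optimum, together with the differentiated active constraint. We work in the safe case $\bm{0}\notin\cobs(\bm{x})$ with the active set $\mathcal{I}$ locally fixed, as Theorem~\ref{thm: diffopt-ift} guarantees under LICQ and SC. There $\mathrm{sgn}(\mathrm{sd})=+1$, and \eqref{eq: dhdx_ift} reads $\frac{\partial h}{\partial\bm{x}}=\frac{(\bm{z}^*)^\top}{\|\bm{z}^*\|_2}\frac{\partial\bm{z}^*}{\partial\bm{x}}$.

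\textbf{Step 1 (replace $\bm{z}^*$ via stationarity).} By \eqref{eq: qp-stat-cond}, $(\bm{z}^*)^\top=-\tfrac12\lambdaT\Aci$. Hence
\[
\frac{\partial h}{\partial\bm{x}}=-\frac{\lambdaT}{2\|\bm{z}^*\|_2}\,\Aci\frac{\partial\bm{z}^*}{\partial\bm{x}} .
\]

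\textbf{Step 2 (differentiate the active constraint).} Because $\mathcal{I}$ is constant on the neighborhood $\mathcal{N}_{\bm{x}}$ from Theorem~\ref{thm: diffopt-ift}, the identity $\Aci(\bm{x})\bm{z}^*(\bm{x})=\bci(\bm{x})$ holds on all of $\mathcal{N}_{\bm{x}}$. Both sides are $C^1$, by Assumption~\ref{assp3} and Theorem~\ref{thm: diffopt-ift}, so we may differentiate to get
\[
\frac{\partial\Aci}{\partial\bm{x}}\bm{z}^*+\Aci\frac{\partial\bm{z}^*}{\partial\bm{x}}=\frac{\partial\bci}{\partial\bm{x}} .
\]
This is exactly the second block row of the reduced sensitivity system \eqref{eq: dGdXi}–\eqref{eq: dGdx}, so it agrees with the Jacobian produced by \eqref{eq: dXidx}. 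Solving for $\Aci\frac{\partial\bm{z}^*}{\partial\bm{x}}$ and substituting into Step~1 gives
\[
\frac{\partial h}{\partial\bm{x}}=\frac{\lambdaT}{2\|\bm{z}^*\|_2}\Big(\frac{\partial\Aci}{\partial\bm{x}}\bm{z}^*-\frac{\partial\bci}{\partial\bm{x}}\Big),
\]
which is \eqref{eq: dhdx-dL}.

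\textbf{Step 3 (consistency with the Lagrangian form).} We also check the middle expression of \eqref{eq: dhdx-dL}. The inactive multipliers vanish at the optimum, so $\partial_{\bm{x}}\mathcal{L}$ at $(\bm{z}^*,\lambdai)$ is $\lambdaT(\partial_{\bm{x}}\Aci\,\bm{z}^*-\partial_{\bm{x}}\bci)$. The factor $\frac{1}{2\|\bm{z}^*\|_2}=\frac{\partial h}{\partial f_0}$ comes from $h=\sqrt{f_0}-d_{\mathrm{safe}}$. These match the formula obtained in Step~2.

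\textbf{Main obstacle.} The delicate point is the bookkeeping of the tensor contractions. The terms $\partial\Aci/\partial\bm{x}$ and $\partial\bci/\partial\bm{x}$ are a third-order and a second-order object, and we must confirm that contracting $\lambdaT$ on the left is consistent in both derivations. We will handle this by working componentwise in each $x_k$, which reduces every step to a matrix–vector identity. We also intend to remark that the same argument carries over to the penetration branch, using the signed stationarity \eqref{eq: signed-stat-cond} and the pseudo-multiplier of Lemma~\ref{lem: pseudo-qp-mapping}: the sign factors cancel.
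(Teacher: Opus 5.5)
Your argument is correct, but it takes a different route from the paper's appendix. The paper writes out the full KKT Jacobian $\mathbf{M}$, which has $D(\lambdai)\Aci$ in its lower-left block. It computes the blocks $\mathbf{M}_{11}^-$ and $\mathbf{M}_{12}^-$ of $\mathbf{M}^{-1}$ via a Schur complement with $\mathbf{P}=\Aci\AciT$. It then uses $(\bm{z}^*)^\top=-\frac{1}{2}\lambdaT\Aci$ to show two things: that $(\bm{z}^*)^\top\mathbf{M}_{11}^-=\bm{0}$, so the tangential projector is annihilated, and that $(\bm{z}^*)^\top\mathbf{M}_{12}^-\mathbf{T}_2$ produces the Lagrangian term.

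You instead observe that, after the stationarity substitution, only the normal component $\Aci\,\partial\bm{z}^*/\partial\bm{x}$ enters. That component is fixed by the linearized active constraint alone, which is the second block row of the sensitivity system. So you need no inversion of the KKT matrix, and neither $\mathbf{P}^{-1}$ nor $D(\lambdai)^{-1}$ ever appears. Your version is shorter and makes it transparent why the Lagrangian form is inversion-free; it is essentially the envelope-theorem argument. As you note, it also carries over verbatim to the penetration branch, because the LP-recovered witness point \eqref{eq: pd_zstar} satisfies $\aci\bm{z}^*=\bcci$ identically. The paper's computation, by contrast, exhibits the full Jacobian $\partial\bm{z}^*/\partial\bm{x}$, including its tangential part, which ties it to the KKT-inversion approach of the authors' earlier work.

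The one step that deserves an extra line is the local constancy of $\mathcal{I}$. Theorem~\ref{thm: diffopt-ift} as stated only gives a $C^1$ primal--dual pair. Constancy of the active set then follows from SC and continuity: positive multipliers stay positive, which forces those constraints to remain active, and strictly slack constraints stay slack.
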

\begin{proof}
    See Appendix~\ref{app: pf-ift-partialL}.
\end{proof}

\noindent We now show that the depth-LP \eqref{eq: pd_md}, through the pseudo-QP mapping in Lemma~\ref{lem: pseudo-qp-mapping}, yields the same active-constraint-based gradient form as the dist-QP branch \eqref{eq: dhdx-dL}.
%
\begin{thm}
    Suppose that $\bm{x}$ satisfies $\bm{0}\in\mathrm{int}(\cobs(\bm{x}))$ and $\bm{x}\notin\mathcal{X}_{\mathcal S}$, and that the active set $\mathcal{I}$ remains unchanged in a neighborhood of $\bm{x}$. Let $(\bm{z}^*, \lambda_{\mathcal{I}}^*)$ denote the pseudo-QP tuple derived from the optimal solution $(s^*,\nu_{\mathcal{I}}^*)$ of the depth-LP \eqref{eq: pd_md} via Lemma~\ref{lem: pseudo-qp-mapping}. Then, the gradient of the SD-NCBF $h(\bm{x})$ defined in \eqref{eq: sd-ncbf} is also governed by 
    \eqref{eq: dhdx-dL}.
    \label{thm: uni-grad}
\end{thm}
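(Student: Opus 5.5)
In the penetration branch $h=-s^*-d_\mathrm{safe}$ (since $\mathrm{sgn}(\mathrm{sd})=-1$ and $\|\bm{z}^*\|_2=s^*$). The plan is to differentiate $s^*$ through the value-function sensitivity \eqref{eq: dLdx} of the depth-LP, which Remark~\ref{remark: LP-dL} allows without SOC. We then rewrite the result in terms of the pseudo-QP tuple of Lemma~\ref{lem: pseudo-qp-mapping} so that it coincides with \eqref{eq: dhdx-dL}.

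\textbf{Step 1: setup of the depth-LP.} Write the depth-LP as the minimization of $-s$ subject to $\|\mathbf{a}^c_i(\bm{x})\|_2 s-b^c_i(\bm{x})\le 0$. Its Lagrangian is
\[
\mathcal{L}^{\mathrm{LP}}=-s+\sum_i\nu_i\bigl(\|\mathbf{a}^c_i\|_2 s-b^c_i\bigr).
\]
Since $\bm{x}\notin\mathcal{X}_\mathcal{S}$ and $\bm{0}\in\mathrm{int}(\cobs)$, there is exactly one active constraint and $b^c_\mathcal{I}>0$. Hence the LP satisfies LICQ (the active gradient $\|\mathbf{a}^c_\mathcal{I}\|_2\neq0$) and SC ($\nu^*_\mathcal{I}=1/\|\mathbf{a}^c_\mathcal{I}\|_2>0$). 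Together with the assumption that $\mathcal{I}$ is locally fixed, this makes $s^*(\bm{x})=b^c_\mathcal{I}/\|\mathbf{a}^c_\mathcal{I}\|_2$ a $C^1$ function near $\bm{x}$. The closed form already gives differentiability directly, so this step is a consistency check rather than a real obstacle.

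\textbf{Step 2: value-function sensitivity.} Apply \eqref{eq: dLdx} to get
\[
\frac{\partial(-s^*)}{\partial\bm{x}}=\nu^*_\mathcal{I}\Bigl(s^*\,\frac{\partial\|\mathbf{a}^c_\mathcal{I}\|_2}{\partial\bm{x}}-\frac{\partial b^c_\mathcal{I}}{\partial\bm{x}}\Bigr).
\]
Next use
\[
\frac{\partial\|\mathbf{a}^c_\mathcal{I}\|_2}{\partial\bm{x}}=\frac{\mathbf{a}^c_\mathcal{I}}{\|\mathbf{a}^c_\mathcal{I}\|_2}\,\frac{\partial (\mathbf{a}^c_\mathcal{I})^\top}{\partial\bm{x}}
\]
together with Lemma~\ref{lem: pseudo-qp-mapping}, which gives $s^*\,\mathbf{a}^c_\mathcal{I}/\|\mathbf{a}^c_\mathcal{I}\|_2=(\bm{z}^*)^\top$. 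This converts the first term into $\frac{\partial\mathbf{a}^c_\mathcal{I}}{\partial\bm{x}}\bm{z}^*$, and we obtain
\[
\frac{\partial h}{\partial\bm{x}}=\nu^*_\mathcal{I}\Bigl(\frac{\partial \mathbf{a}^c_\mathcal{I}}{\partial\bm{x}}\bm{z}^*-\frac{\partial b^c_\mathcal{I}}{\partial\bm{x}}\Bigr).
\]

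\textbf{Step 3: rescaling to the pseudo-QP multiplier.} By the mapping $\lambda^*_\mathcal{I}=2s^*\nu^*_\mathcal{I}$ and $\|\bm{z}^*\|_2=s^*$, we have
\[
\nu^*_\mathcal{I}=\frac{\lambda^*_\mathcal{I}}{2\|\bm{z}^*\|_2}.
\]
Substituting this gives exactly \eqref{eq: dhdx-dL}.

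The main obstacle is the sign bookkeeping. In the safe branch, \eqref{eq: dhdx-dL} comes from the QP Lagrangian with stationarity $\bm{z}^*=-\tfrac12\mathbf{A}^{c\top}_\mathcal{I}\lambda$. In the penetration branch, the pseudo-QP stationarity carries the opposite sign (see \eqref{eq: signed-stat-cond}), and $h$ also carries a minus sign. I will check carefully that these two sign flips cancel, so that the same expression holds with a nonnegative $\lambda^*_\mathcal{I}$. If they do not cancel, the fallback is to differentiate the closed form $s^*=b^c_\mathcal{I}/\|\mathbf{a}^c_\mathcal{I}\|_2$ directly by the quotient rule; this yields the same computation and pins down the sign unambiguously.
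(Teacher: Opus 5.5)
Your proposal is correct and follows essentially the same route as the paper: write the depth-LP in minimization form, apply the value-function sensitivity \eqref{eq: dLdx} via Remark~\ref{remark: LP-dL}, then substitute the pseudo-QP mapping ($s^*\mathbf{a}^c_\mathcal{I}/\|\mathbf{a}^c_\mathcal{I}\|_2=(\bm{z}^*)^\top$ and $\nu^*_\mathcal{I}=\lambda^*_\mathcal{I}/(2\|\bm{z}^*\|_2)$) to recover \eqref{eq: dhdx-dL}. The sign issue you flag is not a real obstacle: since $h=-s^*-d_\mathrm{safe}$, $\partial h/\partial\bm{x}$ is exactly the gradient of the LP objective $-s^*$, so your Steps~2--3 already have the correct sign. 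The paper additionally notes that this left-hand side coincides with the chain-rule form \eqref{eq: dhdx_ift} when $\mathrm{sgn}=-1$.
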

\begin{proof}
    Suppose $\bm{0}\in\mathrm{int}(\cobs(\bm{x}))$.
    We first rewrite the depth-LP \eqref{eq: pd_md} in the minimization form as $\min_s\,\,-s\,\,\,\,\mathrm{s.t.}\,\,
        s\|\mathbf{a}^c_{i}(\bm{x})\|_2  \leq {b}^c_{i}(\bm{x}),\,\, i=1,\ldots,\ell_c$,
    whose objective function and Lagrangian are $f_0^\mathrm{LP}(s)=-s$ and $\mathcal{L}^\mathrm{LP}(s,\bm{\nu})=-s+\sum_i\nu_i(s\|\mathbf{a}^c_{i}(\bm{x})\|_2  - {b}^c_{i}(\bm{x}))$, respectively. By Theorem~\ref{thm: diffopt-ift} and Remark~\ref{remark: LP-dL}, under the locally fixed active-set condition, we have  $\nabla_{\bm{x}}f_0^{\mathrm{LP}}(s^*(\bm{x}))=\frac{\partial\mathcal{L}^{\mathrm{LP}}}{\partial\bm{x}}\Big\vert_{(s^*,\nu_{\mathcal{I}}^*)}$ for all $\bm{x}\notin\mathcal{X}_\mathcal{S}$, which gives
    \begin{align}\begin{split}
        -\frac{\partial s^*}{\partial\bm{x}}
        &=\nu_{\mathcal{I}}^*\left(s^*\frac{\aci}{\|\aci\|_2}\frac{\partial\aciT}{\partial\bm{x}}-\frac{\partial\bcci}{\partial\bm{x}}\right).
        \label{eq: pf-lp-dLdx}
    \end{split}\end{align}
    Substituting the pseudo-QP mapping given by Lemma~\ref{lem: pseudo-qp-mapping}
    into \eqref{eq: pf-lp-dLdx} yields:
    \begin{align}
        -\frac{(\bm{z}^*)^\top}{\|\bm{z}^*\|_2}\frac{\partial \bm{z}^*}{\partial \bm{x}}=\frac{\lambda^*_{\mathcal{I}}}{2\|\bm{z}^*\|_2}\left((\bm{z}^*)^\top\frac{\partial\aciT}{\partial\bm{x}}-\frac{\partial\bcci}{\partial\bm{x}}\right).
        \label{eq: pf-lp-recover-qp}
    \end{align}
    Since $(\bm{z}^*)^\top\frac{\partial\aciT}{\partial\bm{x}}=\frac{\partial\aci}{\partial\bm{x}}\bm{z}^*$,
    the right-hand side of \eqref{eq: pf-lp-recover-qp} exactly recovers \eqref{eq: dhdx-dL}. Furthermore, the left-hand side of \eqref{eq: pf-lp-recover-qp} equals the gradient derived via the chain rule in \eqref{eq: dhdx_ift} with $\mathrm{sgn}(\mathrm{sd}(\mathcal{R}, \mathcal{O})) = -1$, thus establishing the equivalence between \eqref{eq: dhdx_ift} and \eqref{eq: dhdx-dL} for the depth-LP. Therefore, the exact gradient computation for both safe (dist-QP) and penetration (depth-LP) cases is completely unified under the active-constraint-based form in \eqref{eq: dhdx-dL}. This completes the proof.
\end{proof}

\begin{thm} \label{thm: dhdx-bd}
    Suppose that $\bm{z}^*(\bm{x})=\bm{0}\in\mathcal{E}(\cobs(\bm{x}))$ for $\bm{x}\notin\mathcal{X}_\mathcal{S}$ and $\theta\notin\Theta^*$. Assume the associated single active constraint 
    remains locally fixed. Then the safe and colliding branches yield the same limiting gradient of $h$ at $\bm{x}$, given by
    \begin{align}
        \frac{\partial h}{\partial \bm{x}} 
        =\frac{-\aci}{\|\aci\|_2}\frac{\partial \bm{z}^*}{\partial \bm{x}}
        =\frac{-1}{\|\aci\|_2}\frac{\partial \bcci}{\partial\bm{x}}.
        \label{eq: dhdx-z*=0}
    \end{align}
\end{thm}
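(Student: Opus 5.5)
The plan is to compute the gradient on each side of the contact configuration and show that both one-sided limits reduce to the same expression. In the contact case the origin sits on an edge $\mathcal{E}(\cobs)$, which is the only active facet. By the locally-fixed-active-constraint assumption, the same row $(\aci,\bcci)$ is active on a neighborhood of $\bm{x}$. This holds for the dist-QP on the safe side and for the depth-LP on the colliding side.

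On that neighborhood, both \eqref{eq: pd_zstar} and the single-active form of the QP solution (derived in the proof of Lemma~\ref{lem: pseudo-qp-mapping}) give the same closed form
\[
\bm{z}^*(\bm{x})=\frac{\bcci(\bm{x})}{\|\aci(\bm{x})\|_2^2}\aciT(\bm{x}).
\]
Because $\theta\notin\Theta^*$, the CO H-rep is $C^1$ near $\bm{x}$ (Assump.~\ref{assp3}), so this formula is $C^1$ across the contact surface. The same formula also yields the signed distance:
\[
h+d_{\mathrm{safe}}=\mathrm{sd}(\bm{0},\cobs)=-\frac{\bcci}{\|\aci\|_2}.
\]
Indeed, $\bcci>0$ exactly when the origin is strictly inside the active halfspace, which is the colliding side where $\mathrm{sd}=-s^*=-\bcci/\|\aci\|_2$. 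On the safe side, $\bcci<0$ and $\|\bm{z}^*\|_2=-\bcci/\|\aci\|_2$. So a single smooth function represents $h$ on both branches near $\bm{x}$.

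I would then verify that this agrees with the branch formulas already established, rather than just asserting the smooth representative. On the safe branch, Corollary~\ref{coro: dhdx} gives \eqref{eq: dhdx-dL} with $\lambdai=-2\bcci/\|\aci\|_2^2$ from the QP template. Substituting $\bm{z}^*$ gives
\[
\frac{\lambdaT}{2\|\bm{z}^*\|_2}=\frac{-\operatorname{sgn}(\bcci)}{\|\aci\|_2}.
\]
The term $\frac{\partial \aci}{\partial\bm{x}}\bm{z}^*$ is proportional to $\bcci$. Its contribution is $-\bcci\frac{\partial\aci}{\partial\bm x}\aciT/\|\aci\|_2^3$, which tends to zero as $\bcci\to0$. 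The remaining term tends to $-\frac{1}{\|\aci\|_2}\frac{\partial\bcci}{\partial\bm x}$. The colliding branch via Theorem~\ref{thm: uni-grad} is handled identically, with $\lambdai=2s^*/\|\aci\|_2$.

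Both one-sided limits therefore equal $-\frac{1}{\|\aci\|_2}\frac{\partial \bcci}{\partial\bm{x}}$. Since $h$ is $C^1$ there via the smooth representative, this limit is the gradient. This proves the second equality in \eqref{eq: dhdx-z*=0}.

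For the first equality, I would differentiate $\bm{z}^*$ directly at the contact point using the quotient rule. Every term containing $\bcci$ vanishes, leaving
\[
\frac{\partial\bm{z}^*}{\partial\bm{x}}=\frac{\aciT}{\|\aci\|_2^2}\frac{\partial\bcci}{\partial\bm{x}}.
\]
Multiplying on the left by $-\aci/\|\aci\|_2$ returns $-\frac{1}{\|\aci\|_2}\frac{\partial\bcci}{\partial\bm{x}}$. The expression $-\aci/\|\aci\|_2$ is the limit of the chain-rule prefactor in \eqref{eq: dhdx_ift} from both sides, since $\operatorname{sgn}(\mathrm{sd})\,\bm{z}^{*\top}/\|\bm{z}^*\|_2 = -\aci/\|\aci\|_2$ on each branch.

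The main obstacle is justifying that the unit prefactor and the multiplier ratio stay well defined as $\|\bm z^*\|_2\to0$, where \eqref{eq: dhdx_ift} and \eqref{eq: dhdx-dL} both have a $0/0$ form. The way around it is to avoid dividing by $\|\bm z^*\|_2$ at all. Instead, work with the explicit single-active representative $-\bcci/\|\aci\|_2$, whose regularity follows from the fixed active set together with $\theta\notin\Theta^*$. Then check that its gradient matches the limits of the two branch formulas.
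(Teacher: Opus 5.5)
Your argument is correct, but it is organized differently from the paper's. The paper stays inside the multiplier framework. On each regular side it transposes and normalizes the signed stationarity relation \eqref{eq: signed-stat-cond} to get $-\mathrm{sgn}(\mathrm{sd}(\mathcal{R},\mathcal{O}))(\bm{z}^*)^\top/\|\bm{z}^*\|_2=\aci/\|\aci\|_2$. Taking norms then gives $\lambda_\mathcal{I}^*/(2\|\bm{z}^*\|_2)=1/\|\aci\|_2$. Both ratios are constant on each side, so their limits are trivial. Substituting them into \eqref{eq: dhdx_ift} and \eqref{eq: dhdx-dL}, where $\frac{\partial\aci}{\partial\bm{x}}\bm{z}^*$ drops out at contact, yields both equalities.

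You instead write down the explicit single-row representative $h=-\bcci/\|\aci\|_2-d_{\mathrm{safe}}$. You check that it is valid on the safe side ($\bcci<0$), the colliding side ($\bcci>0$) and the contact set ($\bcci=0$), and you differentiate it directly, using the branch formulas only as a consistency check.

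Your route gives a stronger and cleaner conclusion. Both $h$ and $\bm{z}^*=\bcci\aciT/\|\aci\|_2^2$ are genuinely $C^1$ through the contact surface, so the contact state adds no extra element to the generalized gradient. The factor $\frac{\partial\bm{z}^*}{\partial\bm{x}}$ in the first equality is an honest derivative rather than a one-sided limit, and the $0/0$ forms never appear. The paper's route is shorter and reuses the unified tuple $(\bm{z}^*,\lambdai,\aci,\bcci)$ without committing to the closed form. However, it only shows that the two one-sided limiting gradients agree.

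There is one harmless sign slip. On the safe side, the contribution of $\frac{\partial\aci}{\partial\bm{x}}\bm{z}^*$ is $-|\bcci|\frac{\partial\aci}{\partial\bm{x}}\aciT/\|\aci\|_2^3=+\bcci\frac{\partial\aci}{\partial\bm{x}}\aciT/\|\aci\|_2^3$, not $-\bcci(\cdot)$. It vanishes at contact either way.
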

\begin{proof}
    At contact, $\|\bm{z}^*\|_2=0$, and both the QP multiplier and the LP-induced pseudo multiplier vanish, so \eqref{eq: dhdx_ift}-\eqref{eq: dhdx-dL} are undefined.
    We therefore evaluate the limits from the safe and colliding sides while keeping the active constraint locally fixed. On either regular side, transposing and normalizing the signed stationarity relation \eqref{eq: signed-stat-cond} gives $-\mathrm{sgn}(\mathrm{sd}(\mathcal{R},\mathcal{O}))\frac{(\bm{z}^*)^\top}{\|\bm{z}^*\|_2}=\frac{\lambda_i^*\aci}{2\|\bm{z}^*\|_2}$. Taking norms on both sides yields $\frac{\lambda_i^*}{2\|\bm{z}^*\|_2}=\frac{1}{\|\aci\|_2}$. Substituting these relations into \eqref{eq: dhdx_ift} gives the first equality. At contact, the term $\frac{\partial\aci}{\partial\bm{x}}\bm{z}^*$ in \eqref{eq: dhdx-dL} vanishes, and the multiplier ratio above then yields the second equality. This completes the proof.
\end{proof}

\subsection{Closed-Form Gradient via Polytopic Boundary Sensitivities}
\label{subsec: sens-analysis-poly-bnd}
To compute the gradient of SD-NCBF $h$ in \eqref{eq: sd-ncbf} via either \eqref{eq: dhdx_ift} or \eqref{eq: dhdx-dL}, we need to derive the active-constraint sensitivities $(\frac{\partial\Aci}{\partial{\bm{x}}},\frac{\partial\bci}{\partial{\bm{x}}})$.
Here, $\frac{\partial\Aci}{\partial{\bm{x}}}\in\mathbb{R}^{|\mathcal{I}|\times 2\times n}$
is a 3D tensor and $\frac{\partial\bci}{\partial{\bm{x}}}\in\mathbb{R}^{|\mathcal{I}|\times n}$
is a matrix; together, they describe how the active boundary of the CO varies with the robot state $\bm{x}$.
In \wspace, the robot's occupied set can be parameterized with an H-rep as
$\mathcal{R}(\bm{x})=\{\mathbf{y}\in\mathbb{R}^d\mid A_r(\bm{x})\mathbf{y}\leq b_r(\bm{x})\}$, where $A_r(\bm{x})=A_0R(\theta-\theta_0)^{\top}$, $b_r(\bm{x})= b_0+A_r(\bm{x})\bm{p}-A_0\bm{p}_0$, with $A_0,b_0$ defining the robot's shape at its initial state $\bm{x}_0$. 
If the H-rep $(\mathbf{A}^c(\bm{x}), \mathbf{b}^c(\bm{x}))$ of the CO could be directly expressed as a function of $(A_r(\bm{x}), b_r(\bm{x}))$ and the obstacle's H-rep $(A_o, b_o)$, computing its sensitivities with respect to $\bm{x}$ would be straightforward. However, as the CO is defined via the Minkowski difference, no such explicit algebraic relation exists between their H-reps, making this computation nontrivial.

To bypass this issue, we derive analytical active-constraint sensitivities by leveraging the geometric properties of the 2D M-sum, as established in Lemma~\ref{lem: properties} (Fig.~\ref{fig: msum-demo}). Although this formulation is only locally valid (except under pure translation), it captures the state dependence of the active constraints and suffices for gradient computation. For a comprehensive discussion on the global construction of the CO, we refer readers to \cite{4M}, whose geometric insights will also be utilized in our subsequent analysis.
To proceed with our derivation, we again decouple the kinematic analysis into two parts: translation in Euclidean space $\mathbb{R}^d$ (briefly introduced in prior work \cite{Chen.etal.CDC25}) and rotation on SO(2).

\subsubsection{Translation in Euclidean Space $\mathbb{R}^d$}
For a purely translating robot, its outward normals remain fixed. Since we only consider static obstacles, the outward normals of the resulting CO are also fixed. Suppose the initial CO is given by $\mathcal{CO}=\{\mathbf{y}\in\mathbb{R}^d\mid\mathbf{A}^{c_0}\mathbf{y}\leq\mathbf{b}^{c_0}\}$, and the robot translates by $\bm{p}$. The CO is thus translated in the opposite direction of the robot's motion by $-\bm{p}$
. The translated CO becomes $\mathcal{CO}'=\{\mathbf{y}\in\mathbb{R}^d\mid\mathbf{A}^{c_0}(\mathbf{y}-(-\bm{p}))\leq\mathbf{b}^{c_0}\}=\{\mathbf{y}\in\mathbb{R}^d\mid\mathbf{A}^{c_0}\mathbf{y}\leq\mathbf{b}^{c_0}-\mathbf{A}^{c_0}\bm{p}\}$. Taking the partial derivatives of these parameters with respect to $\bm{p}$ yields:
\begin{align}
    \frac{\partial \Aci}{\partial\bm{p}} = \bm{0}, \quad\frac{\partial \bci}{\partial\bm{p}}= -\Aci.
    \label{eq: CO-sens-t}
\end{align}
\noindent Combining \eqref{eq: CO-sens-t} with Corollary~\ref{coro: dhdx} and Theorem~\ref{thm: uni-grad}, we obtain the following result.
\begin{prop}[Spatial Gradient]
\label{prop: cbf_spatial_grad}
    The gradient of the SD-NCBF $h$ in \eqref{eq: sd-ncbf} with respect to $\bm{p}$, for $\bm{x}\notin\mathcal{X}_\mathcal{S}$ and $\theta\notin\Theta^*$, is given by: 
    \begin{align}
        \frac{\partial h}{\partial \bm{p}} 
        =
        \begin{cases}
            -\textcolor{black}{\mathrm{sgn}(\mathrm{sd}(\mathcal{R},\mathcal{O}))}(\bm{z}^*)^\top/\|\bm{z}^*\|_2,
            & \text{if } \bm{z}^*\neq \bm{0} \\[0pt]
            \aci/\|\aci\|_2
            , & \text{if } \bm{z}^*= \bm{0}
        \end{cases}.
        \label{eq: cbf_spatial_grad}
    \end{align}
    Both cases satisfy $\big\|\frac{\partial h}{\partial \bm{p}}\big\|_2 = 1$. Moreover, for $\bm{z}^*\in\mathcal{E}(\cobs)$, the spatial gradient reduces to the active CO unit (outward) normal $\frac{\aci}{\|\aci\|_2}$, consistent with the unit-norm properties in Lemma~\ref{lem: sdf-prop}.
\end{prop}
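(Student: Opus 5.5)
The plan is to substitute the translational sensitivities \eqref{eq: CO-sens-t} into the unified active-constraint gradient. For $\bm{z}^*\neq\bm{0}$ this is \eqref{eq: dhdx-dL}, which holds in the safe case by Corollary~\ref{coro: dhdx} and in the penetration case by Theorem~\ref{thm: uni-grad}. For $\bm{z}^*=\bm{0}$ it is \eqref{eq: dhdx-z*=0} from Theorem~\ref{thm: dhdx-bd}. The hypotheses $\bm{x}\notin\mathcal{X}_\mathcal{S}$ and $\theta\notin\Theta^*$ keep the active set locally fixed and the CO H-rep smooth, so these results apply. Since $\theta$ is held fixed when differentiating in $\bm{p}$, only the translational sensitivities enter.

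\emph{Case $\bm{z}^*\neq\bm{0}$.} Inserting $\partial\Aci/\partial\bm{p}=\bm{0}$ and $\partial\bci/\partial\bm{p}=-\Aci$ into \eqref{eq: dhdx-dL} gives
\begin{align}
\frac{\partial h}{\partial\bm{p}}=\frac{\lambdaT\Aci}{2\|\bm{z}^*\|_2}.
\end{align}
Next I transpose the signed stationarity relation \eqref{eq: signed-stat-cond}, which yields $\lambdaT\Aci=-2\,\mathrm{sgn}(\mathrm{sd}(\mathcal{R},\mathcal{O}))(\bm{z}^*)^\top$. Substituting this gives the first branch of \eqref{eq: cbf_spatial_grad}, and the unit norm is then immediate.

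\emph{Case $\bm{z}^*=\bm{0}$.} Substituting $\partial\bcci/\partial\bm{p}=-\aci$ into \eqref{eq: dhdx-z*=0} directly gives $\aci/\|\aci\|_2$, which has unit norm.

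For the edge statement, suppose $\bm{z}^*\in\mathcal{E}(\cobs)$. Then the active set is a single constraint, and the stationarity relation reduces as in Lemma~\ref{lem: pseudo-qp-mapping}. In the safe case this reads $\bm{z}^*=-\tfrac12\aciT\lambda^*_i$ with $\lambda^*_i>0$. Hence $-(\bm{z}^*)^\top/\|\bm{z}^*\|_2=\aci/\|\aci\|_2$. In the penetration case it reads $\bm{z}^*=\tfrac12\aciT\lambda^*_i$, and the extra minus sign from $\mathrm{sgn}=-1$ again gives $\aci/\|\aci\|_2$. So every branch equals the outward unit normal of the active CO edge, consistent with Lemma~\ref{lem: sdf-prop}.

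The main obstacle is the sign bookkeeping between the two branches. The QP multiplier and the pseudo multiplier enter \eqref{eq: signed-stat-cond} with opposite signs, and I must check that both are nonnegative (SC and $s^*>0$) so that the normalization $\lambda^*_i/(2\|\bm{z}^*\|_2)=1/\|\aci\|_2$ holds. The remaining steps are substitution.
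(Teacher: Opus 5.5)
Your proposal is correct and matches the paper's proof: for $\bm{z}^*\neq\bm{0}$ you substitute the translational sensitivities \eqref{eq: CO-sens-t} into \eqref{eq: dhdx-dL} and simplify with the signed stationarity relation \eqref{eq: signed-stat-cond}, and at contact you substitute them into \eqref{eq: dhdx-z*=0} from Theorem~\ref{thm: dhdx-bd}. Your explicit single-active-constraint sign check for the edge claim spells out what the paper summarizes as ``follows immediately.''
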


\begin{proof}
    Suppose that $\bm{x}\notin\mathcal{X}_\mathcal{S}$ and $\theta\notin\Theta^*$. For $\bm{z}^*\neq\bm{0}$, substituting the translational sensitivity \eqref{eq: CO-sens-t} into the unified gradient formulation \eqref{eq: dhdx-dL} and using the signed stationarity relation \eqref{eq: signed-stat-cond} gives the first case. For $\bm{z}^*=\bm{0}$, combining Theorem~\ref{thm: dhdx-bd} and \eqref{eq: CO-sens-t} gives the second case. 
    Both properties follow immediately, which completes the proof.
\end{proof}


\subsubsection{Rotation in SO(2)}
We know from Lemma~\ref{lem: properties} that the CO edges are translations of robot or obstacle edges, except at parallel-edge degeneracies. Thus, for a small rotation of the robot, the robot-sourced edges of the CO rotate along with the robot, while the obstacle-sourced edges do not rotate and are only displaced by the motion of the associated robot feature.
\input{fig-opt-4cases-v}

Although the optimization is solved in \mdspace, identifying the source of an active CO edge requires relating it back to the original robot-obstacle features (vertices (\texttt{v}) or edges (\texttt{e})) in \wspace.
In \wspace, the robot-obstacle witness-point pair $(\mathbf{r}^*,\mathbf{o}^*)$, given by \eqref{eq: dist_w}, has four possible feature combinations: \{\texttt{v}, \texttt{e}\}, \{\texttt{e}, \texttt{v}\},
\{\texttt{v}, \texttt{v}\},
\{\texttt{e}, \texttt{e}\}. In this notation, the first and second entries specify the robot and obstacle features associated with the witness points $\mathbf{r}^*$ and $\mathbf{o}^*$, respectively (Fig.~\ref{fig: cvxopt-4cases}).
Note that both of them depend on the robot's state.

By Lemma~\ref{lem: properties}, the vertex-edge combinations, \{\texttt{v}, \texttt{e}\} and \{\texttt{e}, \texttt{v}\}, result in a single active constraint, i.e., $\bm{z}^*\in\mathcal{E}(\cobs(\bm{x}))$.
The source of this constraint is determined by the edge feature in \wspace~
(Fig.~\ref{fig: cvxopt-4cases} top two rows). Conversely, the \{\texttt{v}, \texttt{v}\} combination results in two active constraints, implying $\bm{z}^*\in\mathcal{V}(\cobs(\bm{x}))$
(Fig.~\ref{fig: cvxopt-4cases} 3rd~row). In this case, depending on the local geometry, the two adjacent CO edges may both be robot- or obstacle-sourced, or consist of one from each. The final \{\texttt{e}, \texttt{e}\} combination is a degenerate case, corresponding to non-unique optimal pairs in \wspace, with the parallel edges merging into a single edge in \mdspace~(Fig.~\ref{fig: cvxopt-4cases} 4th~row). Such geometric degeneracy causes a source switch and possible non-differentiability of the SDF, as discussed in Sec.~\ref{subsec: diff-analysis}. 

The above feature-combination analysis is used only to identify the source of each active CO edge, with the ultimate goal of recovering the associated robot witness point $\mathbf{r}^*$ required in the following result.

\begin{prop}[Rotational Gradient]
\label{prop: cbf_rotation_grad}
    The gradient of the SD-NCBF $h$ in \eqref{eq: sd-ncbf} with respect to $\theta$, at any differentiable state $\bm{x}\notin\mathcal{X}_\mathcal{S}$ with $\theta\notin\Theta^*$, is given by:
    \begin{align}
        \frac{\partial h}{\partial \theta} 
        =\frac{\partial h}{\partial \bm{p}}\bm{\ell}^\perp,
        \label{eq: cbf_rotation_grad}
    \end{align}
    where $\bm{\ell} = \mathbf{r}^* - \bm{p}$ is the robot witness vector, and $\bm{q}^\perp:=(-q_2, q_1)$ denotes the $90^\circ$ CCW rotation of $\bm{q}=(q_1,q_2)$.
\end{prop}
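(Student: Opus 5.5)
The plan is to use the unified active-constraint gradient \eqref{eq: dhdx-dL} (or \eqref{eq: dhdx-z*=0} at contact). Then $\partial h/\partial\theta$ reduces to computing the rotational sensitivities $(\partial\aci/\partial\theta,\partial\bcci/\partial\theta)$ of the single active CO constraint. We are at a differentiable state with $\bm{x}\notin\mathcal{X}_\mathcal{S}$ and $\theta\notin\Theta^*$, so the combinatorial structure of the CO and the identity and source of the active edge(s) are locally fixed. I would first treat the generic single-active case $\bm{z}^*\in\mathcal{E}(\cobs)$ and split by the source of the active edge, which is identified through the feature pairs of Fig.~\ref{fig: cvxopt-4cases}.

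\emph{Obstacle-sourced edge} (\{\texttt{v},\texttt{e}\}). The normal $\aci$ is fixed, so $\partial\aci/\partial\theta=\bm{0}$. The edge is the obstacle edge translated by $-\mathbf{r}^*$, where $\mathbf{r}^*=\bm{p}+R(\theta)\bm{\ell}_0$ is the active robot vertex. Writing $\bcci=b_{o,i}-\aci\mathbf{r}^*$ gives $\partial\bcci/\partial\theta=-\aci\,\partial\mathbf{r}^*/\partial\theta=-\aci\bm{\ell}^\perp$, since $\frac{d}{d\theta}R(\theta)\bm{\ell}_0=\bm{\ell}^\perp$.

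\emph{Robot-sourced edge} (\{\texttt{e},\texttt{v}\}). Here the CO edge is $\mathbf{o}^*-\text{(robot edge)}$. Its normal is $\aci=-\bm{n}_r$, with $\bm{n}_r$ the robot's outward normal, and it rotates: $\partial\aciT/\partial\theta=(\aciT)^\perp$. The offset is $\bcci=\aci(\mathbf{o}^*-\mathbf{r})$ for any $\mathbf{r}$ on the robot edge. I would choose $\mathbf{r}=\mathbf{r}^*$, the robot witness point paired with $\bm{z}^*=\mathbf{o}^*-\mathbf{r}^*$. Differentiating gives $\partial\bcci/\partial\theta=(\partial\aci/\partial\theta)(\mathbf{o}^*-\mathbf{r}^*)-\aci\bm{\ell}^\perp$. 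The term from moving the contact point along the edge is tangent to the edge, so it is annihilated by $\aci$. Substituting into \eqref{eq: dhdx-dL} makes the rotated-normal terms $(\partial\aci/\partial\theta)\bm{z}^*$ cancel, because $\mathbf{o}^*-\mathbf{r}^*=\bm{z}^*$. This leaves $\frac{\lambda_i^*}{2\|\bm{z}^*\|_2}\aci\bm{\ell}^\perp$ in both cases.

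I would then use the signed stationarity relation \eqref{eq: signed-stat-cond} to replace $\frac{\lambda_i^*}{2\|\bm{z}^*\|_2}\aci$ by $-\mathrm{sgn}(\mathrm{sd})(\bm{z}^*)^\top/\|\bm{z}^*\|_2$. By Proposition~\ref{prop: cbf_spatial_grad} this is exactly $\partial h/\partial\bm{p}$, which yields \eqref{eq: cbf_rotation_grad}. The contact case $\bm{z}^*=\bm{0}$ follows identically from \eqref{eq: dhdx-z*=0}: there $\partial h/\partial\theta=-\frac{1}{\|\aci\|_2}\partial\bcci/\partial\theta=\frac{\aci}{\|\aci\|_2}\bm{\ell}^\perp$, and the spatial gradient is $\aci/\|\aci\|_2$. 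In the penetration branch Theorem~\ref{thm: uni-grad} lets the same computation go through unchanged. There the witness pair $(\mathbf{r}^*,\mathbf{o}^*)$ is read as the features generating the active CO edge at $\bm{z}^*$, with $\bm{z}^*=\mathbf{o}^*-\mathbf{r}^*$.

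The main obstacle is the \{\texttt{v},\texttt{v}\} case, where $\bm{z}^*\in\mathcal{V}(\cobs)$ has two active constraints, possibly of mixed source. I would argue directly that the CO vertex equals $\mathbf{o}^*-\mathbf{r}^*$ for fixed vertices, so $\partial\bm{z}^*/\partial\theta=-\bm{\ell}^\perp$. Inserting this into the chain rule \eqref{eq: dhdx_ift} gives $\mathrm{sgn}(\mathrm{sd})\frac{(\bm{z}^*)^\top}{\|\bm{z}^*\|_2}(-\bm{\ell}^\perp)=\frac{\partial h}{\partial\bm{p}}\bm{\ell}^\perp$. This also offers a uniform shortcut for every case: $\Phi$ depends on $\theta$ only through $\bm{z}^*$, and its first-order motion, projected onto the normal direction, is $-\bm{\ell}^\perp$. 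Care is needed to justify that tangential motion of the witness point contributes nothing, which holds because $\bm{z}^*$ is normal to the active edge.
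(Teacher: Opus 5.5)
Your proposal is correct and follows essentially the same route as the paper. Like the paper, you derive source-dependent sensitivities of $(\mathbf{a}^c_\mathcal{I}, b^c_\mathcal{I})$, substitute them into \eqref{eq: dhdx-dL} so the bracket reduces to $\mathbf{a}^c_\mathcal{I}\bm{\ell}^\perp$, close with the signed stationarity relation \eqref{eq: signed-stat-cond} and Proposition~\ref{prop: cbf_spatial_grad}, and use Theorem~\ref{thm: dhdx-bd} at contact.

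The remaining differences are cosmetic and equivalent. For the robot-sourced offset, the paper differentiates about the $\theta$-invariant pivot $\bm{v}^c_\mathcal{O}=\mathbf{o}^*-\bm{p}$ and then uses $\bm{v}^c_\mathcal{O}=\bm{z}^*+\bm{\ell}$, whereas you differentiate through the sliding witness point. For the \{\texttt{v},\texttt{v}\} case, the paper simply stacks the per-constraint identity over $\mathcal{I}$ in \eqref{eq: dhdx-dL}, whereas you use a direct chain-rule argument on the CO vertex.
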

\begin{proof}
    Recall that a rotating vector $\bm{q}=R(\theta)\bm{q}_0$ satisfies $\frac{\partial \bm{q}}{\partial\theta} =\bm{q}^\perp$. For each $i\in\mathcal{I}$, consider the corresponding active CO edge. A robot-sourced edge locally rotates about a $\theta$-invariant point $\bm{v}_{\mathcal{O}}^c=\mathbf{o}^*-\bm{p}$ in \mdspace~(Fig.~\ref{fig: rotation-deri} \textbf{(a)}). Hence, its normal rotates with $\theta$, giving $\frac{\partial \aci}{\partial\theta} = \left(\aci\right)^\perp$. Since the $\theta$-dependent part of $\bcci$ arises from this rotation relative to $\bm{v}_{\mathcal{O}}^c$, we obtain $\frac{\partial \bcci}{\partial\theta} =\left(\aci\right)^\perp\bm{v}_\mathcal{O}^c$.
    By contrast, an obstacle-sourced edge only translates with the rotating robot witness vector $\bm{\ell}$, while its normal remains fixed  (Fig.~\ref{fig: rotation-deri} \textbf{(b)}). Thus, $\frac{\partial \aci}{\partial\theta}= \bm{0}$. Since its offset varies with the projection $\aci(-\bm{\ell})$
    and $\frac{\partial \bm{\ell}}{\partial\theta} =\bm{\ell}^\perp$, we  obtain $\frac{\partial \bcci}{\partial\theta} = -\aci\bm{\ell}^\perp$.
    We next substitute these sensitivities into \eqref{eq: dhdx-dL}.

    First, consider $\bm{z}^*\neq \bm{0}$. Since $\bm{z}^*=\mathbf{o}^*-\mathbf{r}^*$ and $\bm{\ell} = \mathbf{r}^*-\bm{p}$, we have $\bm{v}_\mathcal{O}^c=\bm{z}^*+\bm{\ell}$. For a robot-sourced constraint, the term in parentheses in \eqref{eq: dhdx-dL} reduces to $-\left(\aci\right)^\perp\bm{\ell}=\aci\bm{\ell}^\perp$.
    For an obstacle-sourced constraint, this term likewise reduces to $\aci\bm{\ell}^\perp$. Stacking over the active set, \eqref{eq: dhdx-dL} gives $\frac{\partial h}{\partial \theta}=\frac{\lambdaT}{2\|\bm{z}^*\|_2} \Aci\bm{\ell}^\perp=-\mathrm{sgn}(\mathrm{sd}(\mathcal{R},\mathcal{O}))(\bm{z}^*)^\top/\|\bm{z}^*\|_2\bm{\ell}^\perp=\frac{\partial h}{\partial \bm{p}}\bm{\ell}^\perp$, where the second equality follows from the signed stationarity relation \eqref{eq: signed-stat-cond}, and the last equality follows from the first case of \eqref{eq: cbf_spatial_grad}.
    At contact, $\bm{z}^*=\bm{0}$, so $\bm{v}_\mathcal{O}^c=\bm{\ell}$ and both cases give $\frac{\partial \bcci}{\partial\theta} = -\aci\bm{\ell}^\perp$.
    Applying Theorem~\ref{thm: dhdx-bd} yields $\frac{\partial h}{\partial \theta}
    =\frac{\aci}{\|\aci\|_2}\bm{\ell}^\perp=
    \frac{\partial h}{\partial \bm{p}}\bm{\ell}^\perp$, where the last equality follows from the second case of \eqref{eq: cbf_spatial_grad}. This completes the proof.
\end{proof} 
We remark that, in the safe case, the gradients in \eqref{eq: cbf_spatial_grad}-\eqref{eq: cbf_rotation_grad} can also be evaluated from the \wspace~solution in \eqref{eq: dist_w}. The key advantage of the \mdspace~formulation is its unified treatment of both branches: $\bm{z}^*$ and its active CO feature allow recovery of $(\mathbf{r}^*,\mathbf{o}^*)$ and hence $\bm{\ell}$, whereas no convex penetration formulation is known in \wspace.
\input{fig-CO-rot-deri}

\begin{remark}[Extension to 3D]
    The source-based derivation of Proposition~\ref{prop: cbf_rotation_grad} relies on the 2D boundary structure of the CO (Lemma~\ref{lem: properties}), so it does not directly extend to 3D. In 3D, nonparallel edge pairs can generate new CO faces with normals inherited from neither set. A complete 3D characterization is left for future work.
\end{remark}

\subsection{\textcolor{black}{Controller Synthesis with CLFs via QPs}}\label{subsec: clf-cbf-qp}
Since our SD-NCBF $h$ is locally Lipschitz (differentiable a.e.), the standard CBF framework \cite{Ames.etal.ECC19} does not directly apply. As established in Theorem~\ref{thm: safety-guarantee}, safety is guaranteed by a set-valued inner-product condition that accounts for all possible pairings between the generalized gradient and the Filippov closed-loop dynamics.
Direct evaluation of this condition, however, can be computationally demanding. For real-time controller-synthesis purposes, the following result provides a sufficient condition for the SD-NCBF $h$ in \eqref{eq: sd-ncbf}, following from Theorem~\ref{thm: safety-guarantee} and paralleling the formulation in \cite{Glotfelter.etal.CCTA18}.
\begin{coro}[Sufficient condition for SD-NCBF]
    \label{coro: suff-SD-NCBF}
    Let $h$ be the SD-NCBF candidate defined in \eqref{eq: sd-ncbf} and $C\subset\mathcal{X}$
    be the corresponding nonempty safe set. For each $\bm{x}\in\mathcal{X}$, let $\mathcal{A}(\bm{x})$ denote the index set of local SDF branches $h_a$ active at $\bm{x}$, i.e., $h_a(\bm{x})=h(\bm{x})$ for all $a\in\mathcal{A}(\bm{x})$. Their gradients generate the generalized gradient $\partial h(\bm{x})=\mathrm{co}\{\nabla h_a(\bm{x})\mid a\in\mathcal{A}(\bm{x})\}$. 
    Suppose there exist 
    a locally Lipschitz extended class-$\mathcal{K}$ function $\alpha:\mathbb{R}\rightarrow\mathbb{R}$ and a measurable, locally bounded feedback controller $\bm{u}:\mathcal{X}\to\mathcal{U}$ such that, for every $\bm{x}\in\mathcal{X}$, there exists a neighborhood $\mathcal{N}_{\bm{x}}$ of $\bm{x}$ such that
    \begin{align}
        L_fh_a(\tilde{\bm{x}})+L_gh_a(\tilde{\bm{x}})\bm{u}(\tilde{\bm{x}})\geq-\alpha(h_a(\tilde{\bm{x}})),
        \label{eq: ncbf-constr}
    \end{align}
    for all $\tilde{\bm{x}}\in\mathcal{N}_{\bm{x}}$ and all $a\in\mathcal{A}(\bm{x})$. Then $h$ is a valid SD-NCBF, and $\bm{u}$ renders $C$ forward invariant.
\end{coro}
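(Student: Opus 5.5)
The plan is to reduce the statement to Theorem~\ref{thm: safety-guarantee}. It suffices to show that, at every $\bm{x}\in\mathcal{X}$, every element of the set $\left\langle\partial h(\bm{x}), K[f+g\bm{u}](\bm{x})\right\rangle$ is at least $-\alpha(h(\bm{x}))$.

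First, fix $\bm{x}$ and take an arbitrary $\bm{\zeta}\in\partial h(\bm{x})$. By the stated branch representation, I write $\bm{\zeta}=\sum_{a\in\mathcal{A}(\bm{x})}\mu_a\nabla h_a(\bm{x})$, where $\mu_a\geq0$ and $\sum_a\mu_a=1$.

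Next, take an arbitrary $\bm{w}\in K[f+g\bm{u}](\bm{x})$. By the definition \eqref{eq: diff-inclusion}, $\bm{w}$ is a convex combination $\sum_k\beta_k\bm{w}_k$ of limits $\bm{w}_k=\lim_{i}\left(f(\bm{x}'_{k,i})+g(\bm{x}'_{k,i})\bm{u}(\bm{x}'_{k,i})\right)$ along sequences $\bm{x}'_{k,i}\to\bm{x}$. By Carathéodory's theorem, finitely many limits suffice.

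For large $i$ these points lie in $\mathcal{N}_{\bm{x}}$. So \eqref{eq: ncbf-constr} gives, for each $a\in\mathcal{A}(\bm{x})$,
\[
\nabla h_a(\bm{x}'_{k,i})^\top\left(f+g\bm{u}\right)(\bm{x}'_{k,i})\geq-\alpha(h_a(\bm{x}'_{k,i})).
\]
I then pass to the limit $i\to\infty$ using three facts:
\begin{itemize}
\item[(i)] each branch $h_a$ is $C^1$ near $\bm{x}$, since it is an active-set-fixed branch whose gradient is given in closed form by Corollary~\ref{coro: dhdx}, Theorem~\ref{thm: uni-grad}, Theorem~\ref{thm: dhdx-bd} and Propositions~\ref{prop: cbf_spatial_grad}--\ref{prop: cbf_rotation_grad}; hence $\nabla h_a(\bm{x}'_{k,i})\to\nabla h_a(\bm{x})$;
\item[(ii)] $f$ and $g$ are continuous, and $\bm{u}$ is locally bounded, so the products converge along the chosen subsequences to $\nabla h_a(\bm{x})^\top\bm{w}_k$;
\item[(iii)] $\alpha$ and $h_a$ are continuous, and $h_a(\bm{x})=h(\bm{x})$.
\end{itemize}
This yields $\nabla h_a(\bm{x})^\top\bm{w}_k\geq-\alpha(h(\bm{x}))$ for every $a$ and every $k$.

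Since this is linear in both arguments, taking the convex combinations over $a$ (weights $\mu_a$) and over $k$ (weights $\beta_k$) gives $\bm{\zeta}^\top\bm{w}\geq-\alpha(h(\bm{x}))$. Because $\bm{\zeta}$ and $\bm{w}$ were arbitrary, the minimum in \eqref{eq: ncbf-set-valued-constr} satisfies the bound. Theorem~\ref{thm: safety-guarantee} then gives validity of $h$ and forward invariance of $C$.

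The main obstacle is step (i), the continuity of each branch gradient at $\bm{x}$ itself. For branches active at $\bm{x}$, the function $h_a$ must be defined and $C^1$ on a full neighborhood of $\bm{x}$, not only on the open region where that branch is the minimizing one. I would handle this by taking $h_a$ as the natural smooth extension of the fixed-active-set formula. On the depth side this is the signed distance to the fixed hyperplane $b^c_i/\|\mathbf{a}^c_i\|$. On the distance side it is the fixed-active-set projection formula, which Theorem~\ref{thm: dhdx-bd} shows is continuous through contact.

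If that extension is delicate near the degenerate angles $\Theta^*$, the fallback is simpler: use only the limits of $\nabla h$ along differentiable sequences. These limits already generate $\partial h(\bm{x})$ by \eqref{eq: gen_grad_h}, and each such limit coincides with some $\nabla h_a(\bm{x})$. This is exactly what the corollary's hypothesis $\partial h(\bm{x})=\mathrm{co}\{\nabla h_a(\bm{x})\}$ encodes, so I would invoke it directly.
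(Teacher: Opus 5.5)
Your reduction is correct and follows essentially the paper's route. The paper gives no separate proof; it simply invokes Theorem~\ref{thm: safety-guarantee} as in \cite{Glotfelter.etal.CCTA18}, and your bilinear, limit-passing bound on $\langle\partial h(\bm{x}),K[f+g\bm{u}](\bm{x})\rangle$ is exactly that standard reduction. State explicitly that each branch $h_a$, $a\in\mathcal{A}(\bm{x})$, is assumed $C^1$ on $\mathcal{N}_{\bm{x}}$, as in \cite{Glotfelter.etal.CCTA18}, rather than deriving this from the closed-form gradient results. Your fallback paragraph does not remove this requirement, because the limit you take is of $\nabla h_a(\bm{x}'_{k,i})$ for a fixed $a$, not of $\nabla h$.
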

\begin{remark}\label{remark: almost-active}
    In our setting,
    $|\mathcal{A}(\bm{x})|>1$ occurs (i) at skeleton states $\bm{x}\in\mathcal{X}_\mathcal{S}$, where the depth-LP \eqref{eq: pd_md} has multiple active constraints associated with different CO-edge normals; (ii) when an active CO edge becomes degenerate at a critical angle $\theta\in\Theta^*$, yielding
    multiple limiting rotational sensitivities. In implementation, following \cite{Glotfelter.etal.CCTA18}, we enforce 
    \eqref{eq: ncbf-constr} over the almost-active set $\tilde{\mathcal{A}}(\bm{x})$, which augments $\mathcal{A}(\bm{x})$ with the indices of local SDF branches whose values are close enough to $h(\bm{x})$ to be treated as active.
\end{remark}
Next, we solve \mbox{Problem \ref{prob}} by combining a given CLF $V$ with the SD-NCBF $h$ in \eqref{eq: sd-ncbf} via a QP \cite{Ames.etal.ECC19}.
For implementation, we partition the horizon $[0,T]$ into equal intervals of $\Delta t>0$, and hold the state fixed at the start of each step. We assume the control is constant for each time step interval $[t_0+k\Delta t,t_0+(k+1)\Delta t)$ with $k\in\mathbb{Z}_{\geq0}$. 
At each time interval, the following two-stage procedure is executed to compute the control input $\bm{u}^*$. First, depending on the collision status, we solve either the dist-QP \eqref{eq: dist_md} or the depth-LP \eqref{eq: pd_md} to evaluate $h$, and then identify the almost-active set $\tilde{\mathcal{A}}(\bm{x})$. For each $a\in\tilde{\mathcal{A}}(\bm{x})$, we obtain the associated QP tuple directly or construct the pseudo-QP tuple using
Lemma~\ref{lem: pseudo-qp-mapping}, and then compute $\nabla h_a$ using \eqref{eq: cbf_spatial_grad}-\eqref{eq: cbf_rotation_grad}.
Second, we solve the following CLF-NCBF-QP:
\begin{align}
    \bm{u}^*(\bm{x})&=
    \underset{(\bm{u},\delta)\in\mathcal{U}\times\mathbb{R}}{\arg\min}\,\,
    \bm{u}^\top H\bm{u}+p\delta^2\notag\\[0.2em]
    \mathrm{s.t.} \quad&
    \,\,L_fV(\bm{x})+ L_gV(\bm{x})\bm{u}+\kappa V(\bm{x})\leq\delta,\label{eq: clf-ncbf-qp}
    \\
    &\,\,L_fh_a(\bm{x})+L_gh_a(\bm{x})\bm{u}\geq-\gamma h_a(\bm{x})+\epsilon,\,\,\forall a\in\tilde{\mathcal{A}}(\bm{x}),\notag
\end{align}
where $\kappa,\gamma>0$ are hyperparameters, $H\in\mathbb{S}^{q}_{++}$ and $p>0$ penalize control and CLF relaxation, respectively; $\tilde{\mathcal{A}}(\bm{x})$ is the almost-active index set introduced in Remark~\ref{remark: almost-active}. The parameter $\epsilon\geq0$ governs safety recovery: $\epsilon=0$ recovers the sufficient condition in Corollary~\ref{coro: suff-SD-NCBF} with $\alpha(s)=\gamma s$, while $\epsilon>0$ ensures finite-time recovery from the unsafe set, provided that \eqref{eq: clf-ncbf-qp} remains feasible and no unsafe critical point is encountered.
We refer interested readers to \cite{Li.etal.arXiv26} for a detailed discussion of this finite-time recovery guarantee.
\begin{figure*}[ht!]
    \centering
    \includegraphics[width=0.96\textwidth]{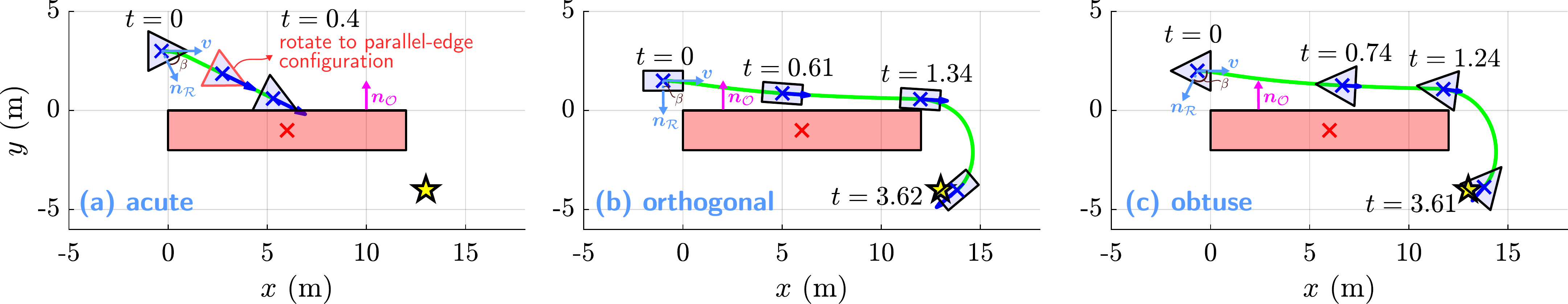}    
    \caption{Demonstration of GKC local minimum with three different shapes of unicycle robots. \textbf{(a) Acute $(\beta<\pi/2)$}: The local rotational gradient of SD-NCBF steers the robot into a parallel-edge configuration, locking its heading into the obstacle and leading to a GKC local minimum. \textbf{(b) Orthogonal $(\beta=\pi/2)$} and \textbf{(c) obtuse $(\beta>\pi/2)$}: The parallel alignment directs the heading tangential to or away from the obstacle. After balancing with goal-orienting CLFs, the robots reach the goal successfully. 
    } 
    \label{fig: geo-kine-local-min}
\end{figure*}

\section{Geometric-Kinematic-Coupled Local Minima}
\label{sec: local-min-analysis}
The exact rotational gradient in \eqref{eq: cbf_rotation_grad} reveals a counterintuitive undesired equilibrium arising from the coupling between the distance-based metric, exact robot geometry, and nonholonomic kinematics. We term this equilibrium the {\it geometric-kinematic-coupled} (GKC) local minimum. In our preliminary work \cite{Chen.etal.CDC25}, the rotational sensitivity (and the resulting gradient) was approximated in a way that underestimated the rotational effect; surprisingly, this numerical inaccuracy unintentionally masked this deadlock from our previous observations. 

Since the CLF-(N)CBF-QP formulation \eqref{eq: clf-ncbf-qp} is inherently local and reactive, undesired equilibria can still occur. For instance, deadlocks may arise even for point-mass robots at obstacle boundaries \cite{Reis.etal.LCSS21}. Accounting for finite robot geometry introduces additional traps, e.g., a square robot stalling beside a wide obstacle with the goal behind it.
Characterizing all such topological traps is beyond the scope of this paper; we focus instead on the newly identified GKC local minimum.

\subsection{Geometric Condition for GKC Susceptibility}
As discussed in Sec.~\ref{subsec: diff-analysis}, at a critical angle $\theta^*$, corresponding to the \{\texttt{e}, \texttt{e}\} configuration, the SDF is locally maximized with respect to rotation; see Fig.~\ref{fig: non-diff}\textbf{(b)}. Thus, when the SD-NCBF engages to prevent collision, the rotational gradient of $h$ can drive the robot toward this parallel-edge configuration. Although this alignment is locally desirable for increasing the safety margin, it can lock nonholonomic robots with certain shapes in the parallel-edge configuration since any sufficiently small rotation decreases $h$. If the resulting heading points toward the obstacle, the robot instead translates toward it and may eventually stall on its boundary,
as illustrated in Fig.~\ref{fig: geo-kine-local-min}\textbf{(a)}. 

To reason about this susceptibility geometrically, let $\mathcal{N}_\mathcal{R}$ and $\mathcal{N}_\mathcal{O}$ denote the candidate outward unit normals of the active robot and obstacle features, respectively, and define $\beta(\bm{n}_\mathcal{R})=\cos^{-1}\left(\frac{\bm{n}_\mathcal{R}\cdot\bm{v}}{\|\bm{v}\|_2} \right)$ for $\bm{n}_\mathcal{R}\in\mathcal{N}_\mathcal{R}$. A feature pair satisfying $\beta(\bm{n}_\mathcal{R})<\pi/2$ and $\bm{n}_\mathcal{R}^\top\bm{n}_\mathcal{O}<0$ is susceptible to GKC trapping. The mechanism underlying this susceptibility can be seen from the heading velocity at the resulting critical configuration, where $\bm{n}_\mathcal{R}=-\bm{n}_\mathcal{O}$. Since the acute $\beta$ implies $\bm{n}_\mathcal{R}^\top\bm{v}>0$, it follows that $\bm{n}_\mathcal{O}^\top\bm{v}=-\bm{n}_\mathcal{R}^\top\bm{v}<0$. Hence, at the critical configuration, the heading-constrained velocity $\bm{v}$ points inward relative to the active obstacle normal, explaining the susceptibility to GKC trapping.

\subsection{Case Study}
We now elaborate on the aforementioned geometric condition through a case study with the same goal, obstacle, hyperparameters, and control limits, but three different convex polygonal unicycle robots. 
The robot with an acute $\beta$ is trapped in the GKC local minimum as expected (Fig.~\ref{fig: geo-kine-local-min}\textbf{(a)}). In contrast, for the orthogonal (Fig.~\ref{fig: geo-kine-local-min}\textbf{(b)}) and the obtuse (Fig.~\ref{fig: geo-kine-local-min}\textbf{(c)}) robots, the same rotational tendency would make the heading tangent to or directed away from the obstacle at the critical alignment, respectively, thereby allowing both to avoid the GKC local minimum. In the closed-loop CLF-NCBF-QP trajectories, this tendency is balanced by the CLF before the critical configuration is reached.

We note that the choice of SD-NCBF hyperparameter $\gamma$ in \eqref{eq: clf-ncbf-qp} can also induce a similar GKC rotation, even when the robot does not ultimately become trapped. Specifically, a smaller $\gamma$ activates the NCBF constraint at a greater safe distance, causing the robot to initiate the counterintuitive evasive maneuver of turning toward the obstacle. This leads to significantly different trajectories even when all other hyperparameters remain fixed; refer to Fig.~\ref{fig: simu-effect-gamma}. A comprehensive characterization of how the GKC mechanism interacts with the CLF-NCBF-QP and its hyperparameters, together with systematic mitigation strategies, is left for future work.
\begin{figure}[hbt!]
    \centering\includegraphics[width=0.98\linewidth]{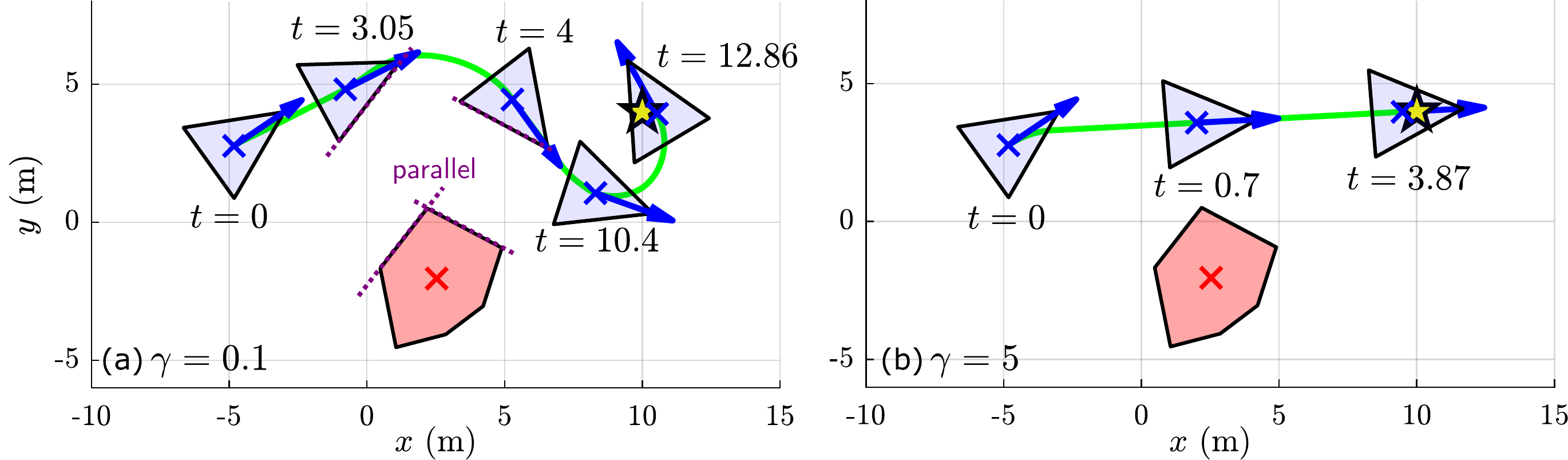}
    \caption{The effect of the SD-NCBF hyperparameter $\gamma$ in \eqref{eq: clf-ncbf-qp} on GKC-induced rotations. \textbf{(a)} A smaller
    $\gamma$ induces early obstacle-aware rotation.
    \textbf{(b)} A larger $\gamma$ delays activation and 
    avoids unnecessary rotations.}
    \label{fig: simu-effect-gamma}
\end{figure}


\section{\textcolor{black}{Simulation Results}}
We demonstrate the proposed SD-NCBF \eqref{eq: sd-ncbf}, along with the designed CLFs, by solving \mbox{Problem \ref{prob}} via QPs \eqref{eq: clf-ncbf-qp} in four case studies. We first consider the robot with single-integrator dynamics (\ref{subsec: si}) for single-obstacle avoidance. Second, we consider the robot with unicycle dynamics (\ref{subsec: unicycle}) for avoidance involving a single obstacle (\ref{subsub: single-obs}).
We then present the special case of starting in collision to highlight the safety recovery capability of our unified SDF formulation (\ref{subsub: unsafe-init}). Lastly, we demonstrate the non-conservative navigation in a maze-like environment enabled by our exact SD-NCBF formulation (\ref{subsub: multi-obs}).
All simulations are conducted in MATLAB, using \texttt{quadprog} to solve both the dist-QP \eqref{eq: dist_md} and the CLF-NCBF-QP \eqref{eq: clf-ncbf-qp}, \texttt{linprog} to solve the depth-LP \eqref{eq: pd_md}, and \texttt{ode45} to integrate the system dynamics with a loop rate of 200 Hz, on a PC with an Intel i7-8700 6-core of 3.2 GHz CPU and 32 GB of RAM. See video at \normalfont{\texttt{\url{https://youtu.be/D0zVswzyxaE}}}.

\subsection{Single-Integrator Model (Pure Translation)}\label{subsec: si}
We first consider a diamond-shaped robot translating to the goal while avoiding a polygonal obstacle. The state and dynamics are given by $\bm{x} =[x,y]^{\top},\,\,\dot{\bm{x}}=\bm{u}=[u_1,u_2]^{\top}$, where $x,y$ denote the position coordinates, and $u_1,\,u_2$ are linear velocity control inputs. In the form of \eqref{eq: ctrl-affine}, we have $f(\bm{x})=\bm{0}_{2\times 1}$ and $g(\bm{x})=\bm{I}_2$. A CLF $V(\bm{x})=\|\bm{x}-\bm{x}_g\|_2^2$ is used to reach $\bm{x}_g=(7,1)$. The other parameters are $\bm{x}(0)=(1,7),\,\bm{u}_{\max}=-\bm{u}_{\min}=5$~m/s, $d_\text{safe}=0,\,p=8,\,\kappa=2,\,\gamma = 5$, $\epsilon=0$. 
At each control loop, we first solve the convex program for the SDF $h$, then solve the CLF-NCBF-QP \eqref{eq: clf-ncbf-qp} for $\bm{u}^*(\bm{x})$. We compare our method with \cite{Molnar.CCTA25}, which utilizes vertices to construct CBFs based on an under-approximated SDF via LSE. The simulation results (Fig.~\ref{fig: simu-pure-trans}) demonstrate that our exact SD-NCBF outperforms \cite{Molnar.CCTA25} by generating a less conservative trajectory (green versus dashed magenta) with more straightforward safety margin tuning. The runtime of \cite{Molnar.CCTA25} is almost half that of ours, as its computation of $h$ only involves algebraic operations on the obstacle's vertices, while ours requires computing the CO\footnote{We note that for pure translation cases, the CO can be parameterized by position and computed only once, although our implementation does not exploit this property to improve runtime.} and solving a convex program for $h$. However, this computational efficiency comes at the cost of increased conservativeness and sensitive parameter tuning, as an insufficient buffer may cause the robot to slightly clip the obstacle; see the purple robot at $t=0.88$.
\begin{figure}[hbt!]
    \centering\
        \begin{xy}
            \xyimport(100,100){\includegraphics[width=0.98\linewidth]{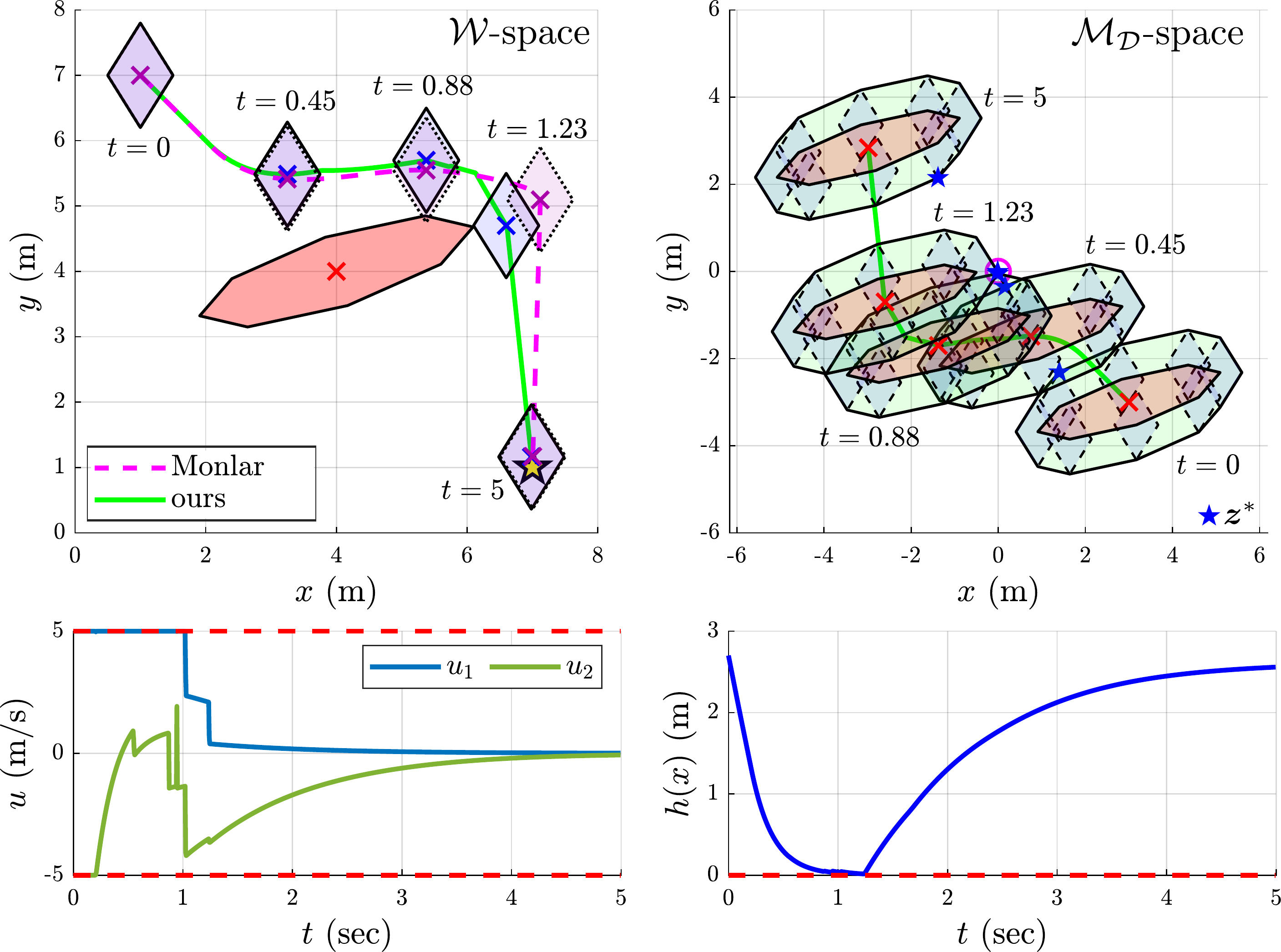}}
            ,(22.62,50.85)*{\text{\tiny\cite{Molnar.CCTA25}}}
        \end{xy}
    \caption{\textbf{Top-Left:} Snapshots show the blue (ours) and purple (\cite{Molnar.CCTA25}) robots avoiding the obstacle (red) with a tighter trajectory (green) than the baseline (dashed magenta). 
    \textbf{Top-Right:} The corresponding right-to-left motion of our CO (green) in \mdspace. \textbf{Bottom:} Control inputs $\bm{u}(t)$ and NCBF $h(\bm{x})$ satisfy actuator and safety constraints.}
    \label{fig: simu-pure-trans}
\end{figure}

\subsection{Unicycle Model}\label{subsec: unicycle}
Now we consider the robot with unicycle dynamics given by ${\dot{x}=v\cos{\theta}}$, ${\dot{y}=v\sin{\theta}}$, and ${\dot{\theta}=\omega}$, with state $\bm{x}=[\bm{p}^\top,\theta]^{\top}$ and control $\bm{u}=[u_1,u_2]^{\top}=[v,\omega]^{\top}$, where $v, \omega$ are linear and angular velocities, respectively. In the form of \eqref{eq: ctrl-affine}, we have $f(\bm{x})=\bm{0}_{3\times1}$ and $g(\bm{x})=[\cos\theta~0;\sin\theta~0; 0~1]$. 
We remark that, unlike the obstacle in \wspace, the CO depends on both the robot position and orientation; therefore, $h$ has relative degree one under the unicycle dynamics.
We use a CLF $V(\bm{x})=\frac{1}{2}(\Delta x^2+\Delta y^2+\Delta\theta^{2})$ with $\Delta x=x_d-x$, $\Delta y=y_d-y$, and $\Delta\theta=\theta-\operatorname{atan2}(\Delta y,\Delta x)$ 
to reach the goal. To prevent the GKC deadlock introduced in Sec.~\ref{sec: local-min-analysis}, we utilize a rectangular robot throughout the simulations. We evaluate its performance across the following three scenarios:
\subsubsection{Moving around a single obstacle} \label{subsub: single-obs}
In this case, we compare our method with \cite{Thirugnanam.etal.ACC22}, which computes the MDF via duality-based optimization and imposes the CBF constraints using the lower bound of its gradient. The other parameters are $\bm{x}_0=(-2.5,3.75,\pi/9),\,\bm{x}_g=(10,10)$, $u_{1,{\max}}=-u_{1,{\min}}=10$~m/s, $u_{2,{\max}}=-u_{2,{\min}}=\pi/2$ rad/s, $d_\text{safe}=0,\,p=1,\,\kappa=1.5,\,\gamma = 5,\,\epsilon=0$. The simulation results show that our robot (blue) reaches the goal faster with a tighter trajectory (green); see Fig.~\ref{fig: simu-unicycle-comp}-Top. Furthermore, in the 6-gon case (Fig.~\ref{fig: simu-unicycle-comp}-Bottom), which involves more rotations and optimal feature switches, we observe that \cite{Thirugnanam.etal.ACC22} is more prone to infeasibility. In contrast, our method successfully reaches the goal while guaranteeing safety in both cases.
\begin{figure}[hbt!]
    \centering
    \begin{xy}
        \xyimport(100,100){\includegraphics[width=0.98\linewidth]{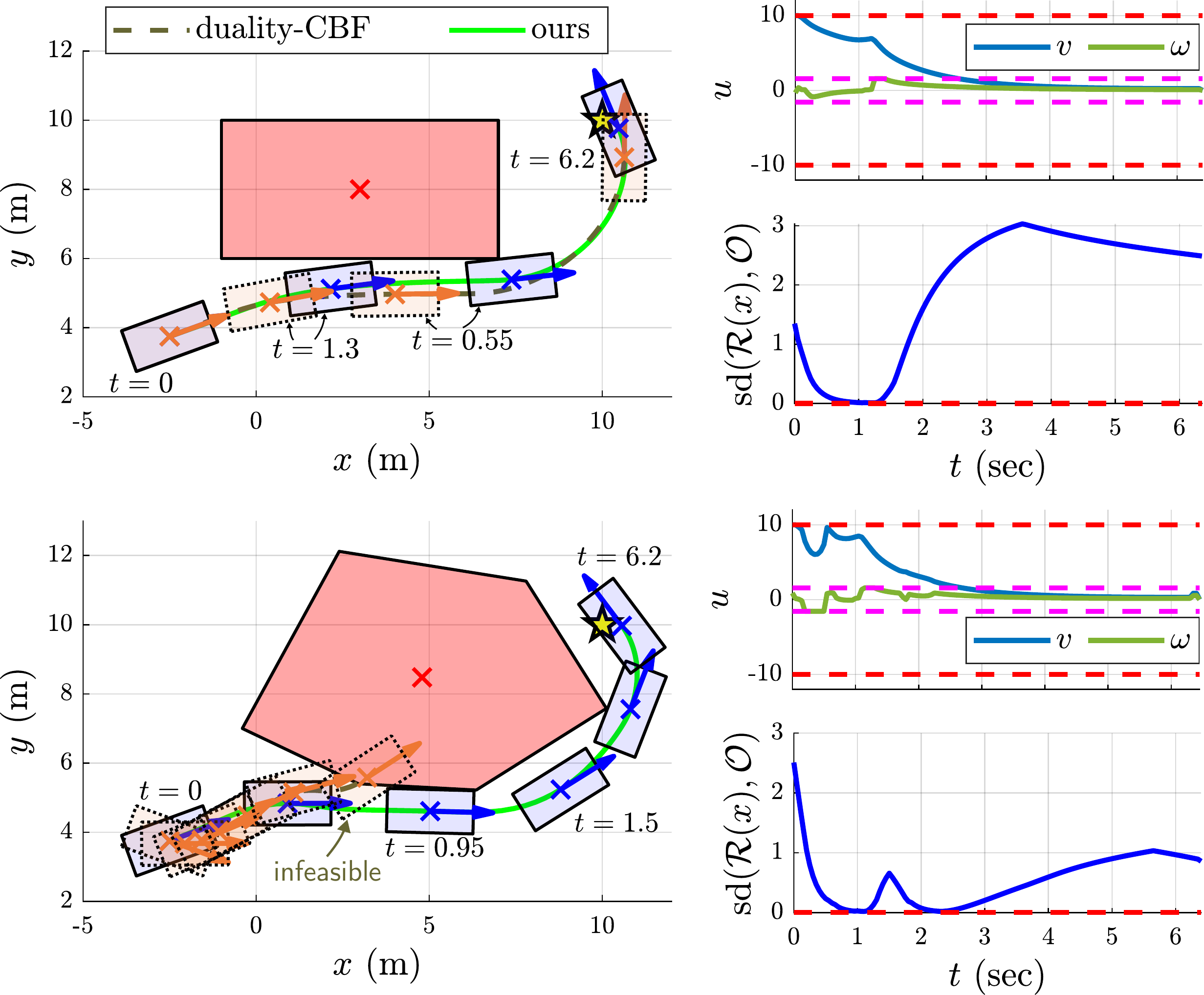}}
        ,(33.62,97.15)*{\text{\scriptsize \cite{Thirugnanam.etal.ACC22}}}
    \end{xy}
    \caption{\textbf{Left:} 
    Our SD-NCBF (blue) vs. the duality-MDF-CBF (yellow) \cite{Thirugnanam.etal.ACC22}  under unicycle dynamics. For the 4-gon obstacle (\textbf{Top}), our robot follows a less conservative trajectory (green) and reaches the goal faster; for the 6-gon obstacle (\textbf{Bottom}) involving more optimal feature switches, ours is significantly less prone to infeasibility than the baseline. \textbf{Right:} Profiles of the control input $\bm{u}(t)$ and CBF $h(\bm{x})$ for our method. 
    }
    \label{fig: simu-unicycle-comp}
\end{figure}

\subsubsection{Initializing in an unsafe set} \label{subsub: unsafe-init}
Because the safety metric is an SDF, the same formulation also applies inside the unsafe set. Theorem~\ref{thm: uni-grad} establishes that the dist-QP and depth-LP share the unified gradient formula through the pseudo-QP mapping of Lemma~\ref{lem: pseudo-qp-mapping}.
The parameters used in this case are $\bm{x}_0=(-3.75,2.55,\pi/5),\,\bm{x}_g=(8,10)$, $u_{1,{\max}}=-u_{1,{\min}}=10$~m/s, $u_{2,{\max}}=-u_{2,{\min}}=\pi$ rad/s, $d_\text{safe}=0,\,p=10,\,\kappa=2,\,\gamma = 1,\,\epsilon=0.2$. When the robot starts in an unsafe set, we prioritize safety recovery over goal reaching by disabling the CLF temporarily until the collision is resolved. The robot trajectory and value of the SDF are shown in Fig.~\ref{fig: simu-unicycle-unsafe}, which demonstrates that our SD-NCBF is able to bring the robot starting in the unsafe set back to the safe set.
\begin{figure}[hbt!]
    \centering\includegraphics[width=0.98\linewidth]{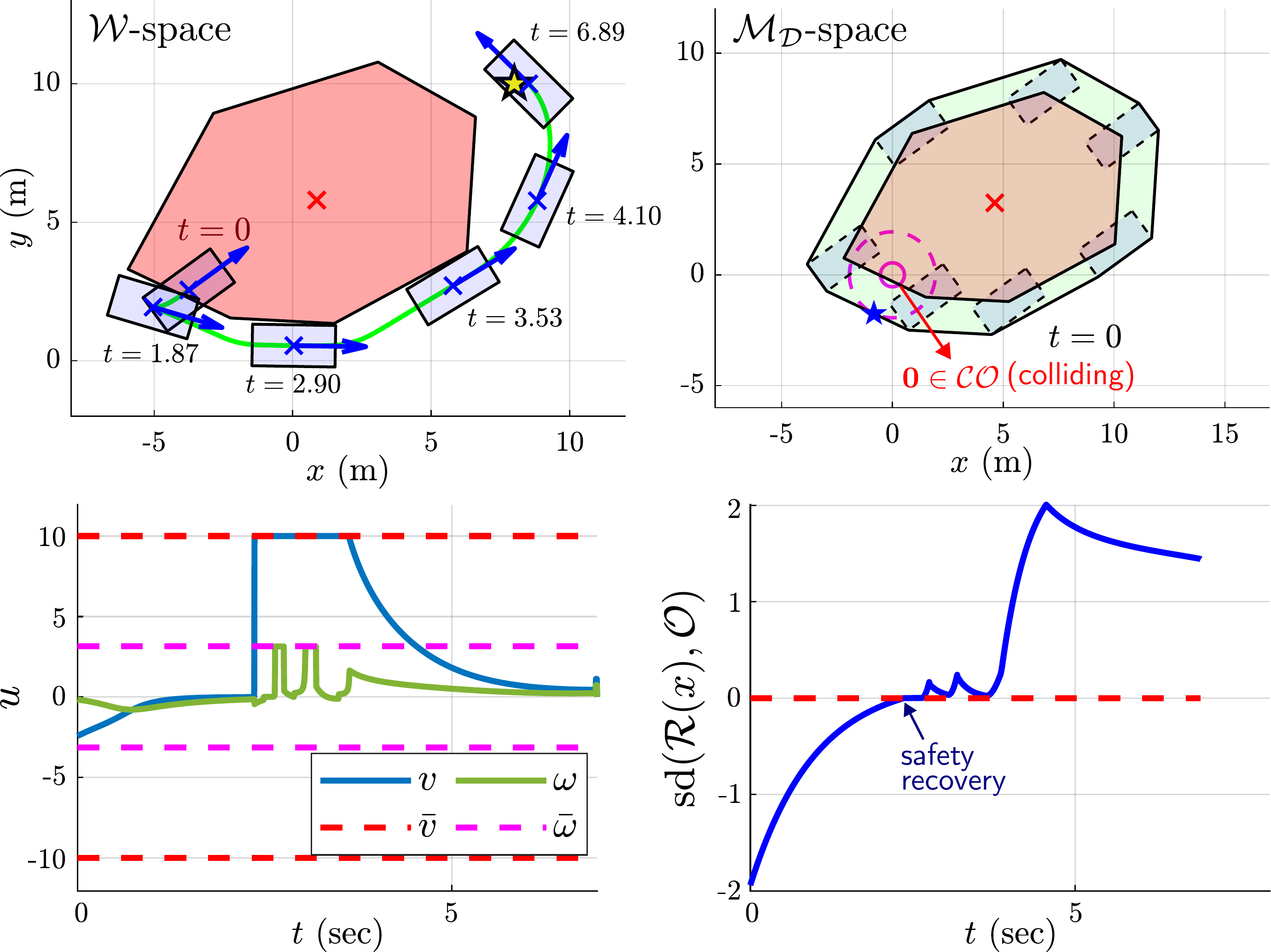}
    \caption{\textbf{Top}: The robot starts in collision with the obstacle in \wspace~(left), and the CO contains the origin in \mdspace~(right). The robot successfully resolves the collision
    using our SD-NCBF by time $t=2.5$. \textbf{Bottom-Left:} Control inputs $\bm{u}(t)$ satisfy actuator limits. \textbf{Bottom-Right:} The CBF $h(\bm{x})$ changes from negative to positive, corresponding to safety recovery. }
    \label{fig: simu-unicycle-unsafe}
\end{figure}

\subsubsection{Navigating in a maze-like environment} \label{subsub: multi-obs}
The maze-like environment consists of 5 convex polygonal obstacles. Safety is guaranteed by imposing NCBF constraints with respect to each obstacle $\mathcal{O}_j,\,j\in\{1,\ldots,5\}$. The parameters used are the same as in the previous setting, except for
$\bm{x}_0=(1,1,\pi/12),\,\bm{x}_g=(10,10.3),\,p=8,\,\kappa=2,\,\gamma = 10,\,\epsilon=0$. Our exact SD-NCBF framework allows the robot to navigate safely through a narrow passage while reaching the goal position (Fig.~\ref{fig: simu-multi-obs}). The time evolution of each CBF $h_j(\bm{x})$ (Fig.~\ref{fig: simu-multi-obs}-Right-Bottom) shows that all values remain positive over time and thus the robot is collision-free with respect to each obstacle throughout the trajectory.
\begin{figure}[hbt!]
    \centering\includegraphics[width=0.98\linewidth]{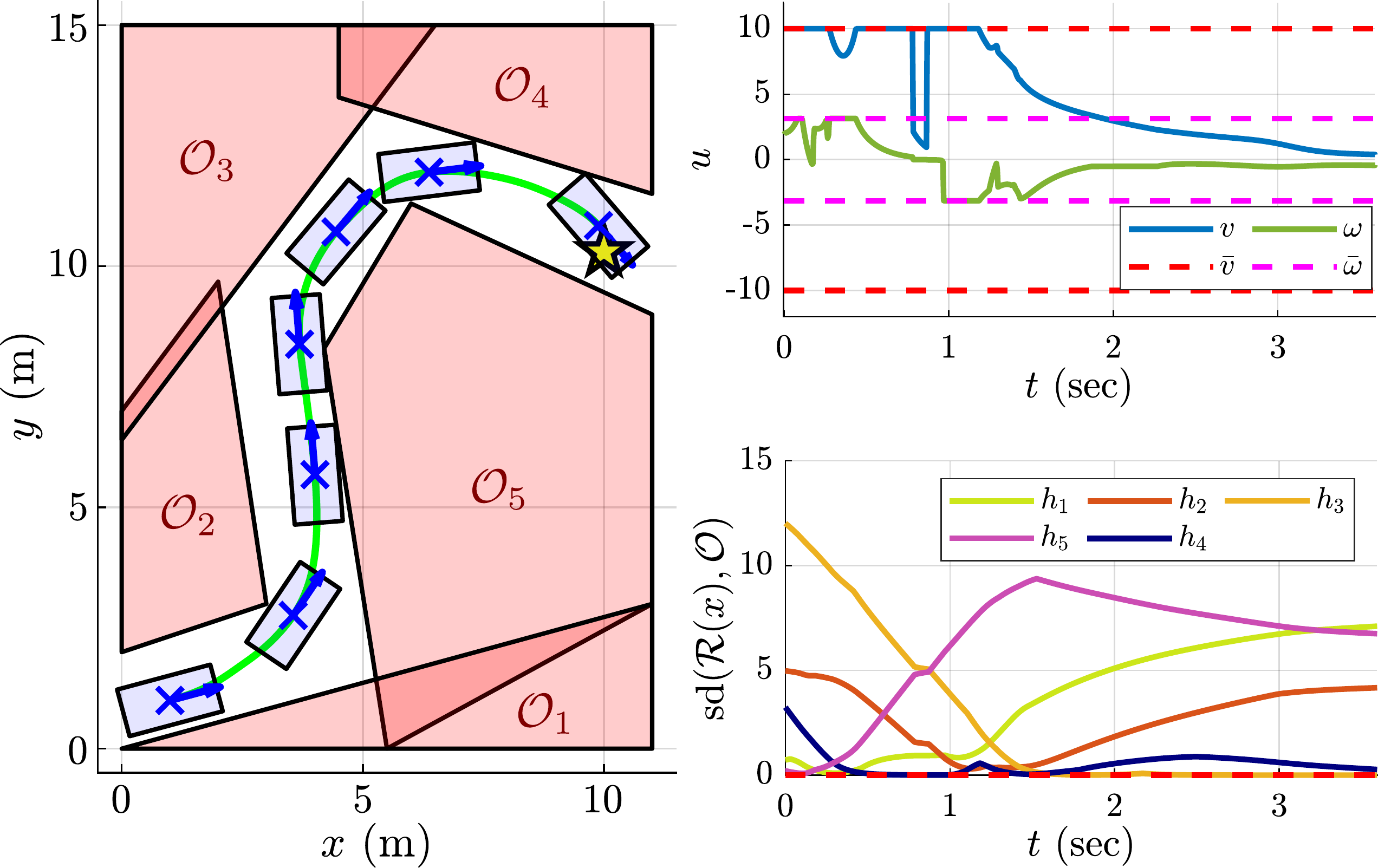}
    \caption{\textbf{Left:} Snapshots of robot motion in a maze-like environment in $\mathcal{W}$-space demonstrate the non-conservativeness of our SD-NCBF. \textbf{Right:} Control inputs $\bm{u}(t)$ and the CBFs $h_j(\bm{x}),~j=1,\ldots,5$, for each obstacle satisfy actuator and safety constraints over time.}
    \label{fig: simu-multi-obs}
\end{figure}

\subsection{Discussion and Limitations}
Our SD-NCBF framework has demonstrated its non-conservativeness compared to existing (N)CBF methods. The robot can navigate among obstacle(s)  {\it without using any predefined waypoints or reference paths}. However, as discussed in Sec.~\ref{sec: local-min-analysis}, since the CLF-(N)CBF-QP formulation is fundamentally a local method, deadlocks are inevitable. Considering the exact geometry of the robot and obstacles introduces a broader class of deadlocks, such as the newly identified GKC local minima. Moreover, hyperparameter tuning becomes critical, as improper parameters can not only cause infeasibility due to inter-sampling effects but also actively induce such complex local minima. Ultimately, while safety is guaranteed under our SD-NCBF framework, its pairing with CLFs should not be viewed as a substitute for a global planner, but rather as a complement acting as a real-time, geometry-aware safety filter.

\section{Conclusion and Future work}
This article proposes a novel NCBF framework based on the exact SDF between polytopes, computed via companion convex programs in \mdspace. By leveraging the geometric properties of 2D M-sums and the KKT conditions of the dist-QP and depth-LP, we derive closed-form exact SDF gradients through a unified formulation for the safe, contact, and penetration cases. We identify a new type of deadlock, revealed by the exact local rotational gradient and induced by the coupling between exact robot geometry and the nonholonomic constraints of the system.
We validate our approach in four scenarios and compare it with existing methods, highlighting non-conservative maneuvering, collision recovery, and multi-obstacle navigation. Future work will study strategies for mitigating GKC local minima and extend the proposed method to multi-robot and high-relative-degree systems.

\section*{Acknowledgments}
The authors thank David Mount at the University of Maryland for his valuable insights into Minkowski operations, and Akshay Thirugnanam at the University of California, Berkeley, for sharing and clarifying the baseline code.

\bibliographystyle{ieeetr}
\bibliography{references}  
\input{biography}

\renewcommand{\theequation}{A.\arabic{equation}}
\setcounter{equation}{0}

\appendices
\refstepcounter{section}
\label{app:pf-ift-partialL}
\section*{Appendix~\thesection: Proof of Proposition~\ref{prop: gradient-equivalence}}
\label{app: pf-ift-partialL}
\input{proof_prop1}

\end{document}

%% file: preamble.tex
\usepackage{amsmath,amsfonts}
\usepackage{algorithmic}
\usepackage{array}
\usepackage[caption=false,font=normalsize,labelfont=sf,textfont=sf]{subfig}
\usepackage{textcomp}
\usepackage{stfloats}
\usepackage{url}
\usepackage{verbatim}
\usepackage{graphicx}
\def\BibTeX{{\rm B\kern-.05em{\sc i\kern-.025em b}\kern-.08em
    T\kern-.1667em\lower.7ex\hbox{E}\kern-.125emX}}
\usepackage{balance}

\newcommand{\cmark}{\raisebox{0.8ex}{\scalebox{0.8}{$\sqrt{}$}}}
\newcommand{\xmark}{\scalebox{0.85}[1]{$\times$}}
\usepackage[caption=false,font=normalsize,labelfont=sf,textfont=sf]{subfig}
\usepackage{stfloats}
\usepackage{bm}

\usepackage{amsthm,amssymb}
\usepackage{colortbl}
\usepackage[dvipsnames]{xcolor}
\usepackage{etoolbox}
\usepackage{lipsum} 

\usepackage[dvips,frame,rotate,import]{xy} 

\makeatletter
\patchcmd{\thebibliography}{\section*{\color{black}}}{\section*{References}}{}{}
\makeatother

\makeatletter
\let\NAT@parse\undefined
\makeatother
\usepackage{hyperref}

\definecolor{light-gray}{gray}{0.93}
\definecolor{semilight-gray}{gray}{0.8}

\makeatletter
\newcommand{\figcaption}[1]{\def\@captype{figure}\caption{#1}}
\makeatother

\theoremstyle{definition}
\newtheorem{thm}{Theorem}
\newtheorem{lemma}{Lemma}
\newtheorem{prob}{Problem}
\newtheorem{coro}{Corollary}
\newtheorem{prop}{Proposition}

\newtheoremstyle{myRemark}
  {\topsep}             
  {\topsep}             
  {\itshape}            
  {}                    
  {\bfseries}   
  {.}                   
  {5pt plus 1pt minus 1pt} 
  {}                    
\theoremstyle{myRemark}
\newtheorem{remark}{Remark}

\newtheoremstyle{myDef}
  {\topsep}             
  {\topsep}             
  {\normalfont}         
  {}                    
  {\bfseries\itshape}   
  {.}                   
  {5pt plus 1pt minus 1pt} 
  {\thmname{#1}\thmnumber{ #2}\normalfont\thmnote{ \textit{(#3)}}}
\theoremstyle{myDef}
\newtheorem{defi}{Definition}

\newcommand{\mdspace}{$\mathcal{M}_\mathcal{D}$-space}
\newcommand{\wspace}{$\mathcal{W}$-space}

\newcommand{\cobs}{\mathcal{O}^c}
\newcommand{\Aci}{\mathbf{A}_\mathcal{I}^c}
\newcommand{\AciT}{\left(\Aci\right)^\top}
\newcommand{\bci}{\mathbf{b}_\mathcal{I}^c}
\newcommand{\aci}{\mathbf{a}_\mathcal{I}^c}
\newcommand{\aciT}{\left(\aci\right)^\top}
\newcommand{\bcci}{b_\mathcal{I}^c}
\newcommand{\lambdai}
{\boldsymbol{\lambda}_{\mathcal{I}}^*}
\newcommand{\lambdaT}{\left(\lambdai\right)^\top}

%% file: table_v1.tex
\resizebox{\textwidth}{!}{
\begin{tabular}{|l|cccccc|}
\hline
    \textbf{References}
    & \textbf{Metric}
    & \textbf{Explicitness} 
    & \begin{tabular}{@{}c@{}}\textbf{Exact} \\[-0.2em] \textbf{Geometry}\end{tabular} 
    & \begin{tabular}{@{}c@{}}\textbf{Exact Analytical} \\[-0.2em] \textbf{Gradient}\end{tabular} 
    & \begin{tabular}{@{}c@{}}\textbf{Obstacle} \\[-0.2em] \textbf{Defined in}\end{tabular}
    & \begin{tabular}{@{}c@{}}\textbf{Collision Recovery} \\[-0.2em] \textbf{Demonstrated}\end{tabular} \\
\hline
\hline  
    Thirugnanam et al. \cite{Thirugnanam.etal.ACC22, Thirugnanam.etal.arXiv23} 
    & MDF
    & Implicit
    & \cmark
    & \xmark\,\,(lower bound)
    & \wspace
    & \xmark\\
\hline
    Singletary et al. \cite{Singletary.etal.RAL22} 
    & SDF 
    & Implicit
    & \cmark
    & \xmark\,\,(linearization)
    & \wspace 
    & \xmark\\
\hline
    Peng et al. \cite{Peng.etal.ICRA23} 
    & MDF 
    & Explicit 
    & \xmark
    & \cmark 
    & \wspace 
    & \xmark\\
\hline
    Dai \& Wei et al. \cite{Dai.etal.RAL23, Wei.etal.TCST26}
    & \begin{tabular}{@{}c@{}}Scaling \\[-0.2em] factor\end{tabular} 
    & Implicit
    & \xmark
    & \xmark 
    & \wspace
    & \xmark\\
\hline
    Wu et al. \cite{Wu.etal.TCB25} 
    & SDF
    & Explicit
    & \xmark
    & \cmark
    & \mdspace
    & \xmark \\
\hline
    Molnar \cite{Molnar.CCTA25} 
    & SDF
    & Explicit
    & \xmark
    & \cmark
    & \wspace
    & \xmark \\
\hline
    Usevitch \& Sahleen \cite{Usevitch.Sahleen.ACC25} 
    & MDF
    & Implicit
    & \xmark
    & \xmark
    & \wspace
    & \xmark \\
\hline
    \rowcolor{light-gray} Ours 
    & SDF 
    & Implicit 
    & \cmark
    & \cmark
    & \mdspace 
    & \cmark \\
\hline
   \multicolumn{7}{p{5.5in}}{} \\[-0.75em]
   \multicolumn{7}{p{5.8in}}
   {\textit{Implicit} indicates that CBFs are defined via optimization \cite{Thirugnanam.etal.ACC22,Wei.etal.TCST26,Usevitch.Sahleen.ACC25}, or computed algorithmically \cite{Singletary.etal.RAL22}. \textit{Analytical Gradient} indicates whether exact closed-form expressions are used (as opposed to approximations \cite{Thirugnanam.etal.ACC22, Singletary.etal.RAL22} or autodiff packages \cite{Dai.etal.RAL23, Wei.etal.TCST26}). \textit{Collision recovery} refers to the ability to recover from an initial collision with an obstacle; this property is theoretically enabled by SDF-based CBFs \cite{Wu.etal.TCB25, Singletary.etal.RAL22} but has not been demonstrated in prior work.}\\
\end{tabular}}

%% file: fig-flow-chart.tex
\begin{figure}[hbt!]
    \centering\includegraphics[width=0.9\linewidth]{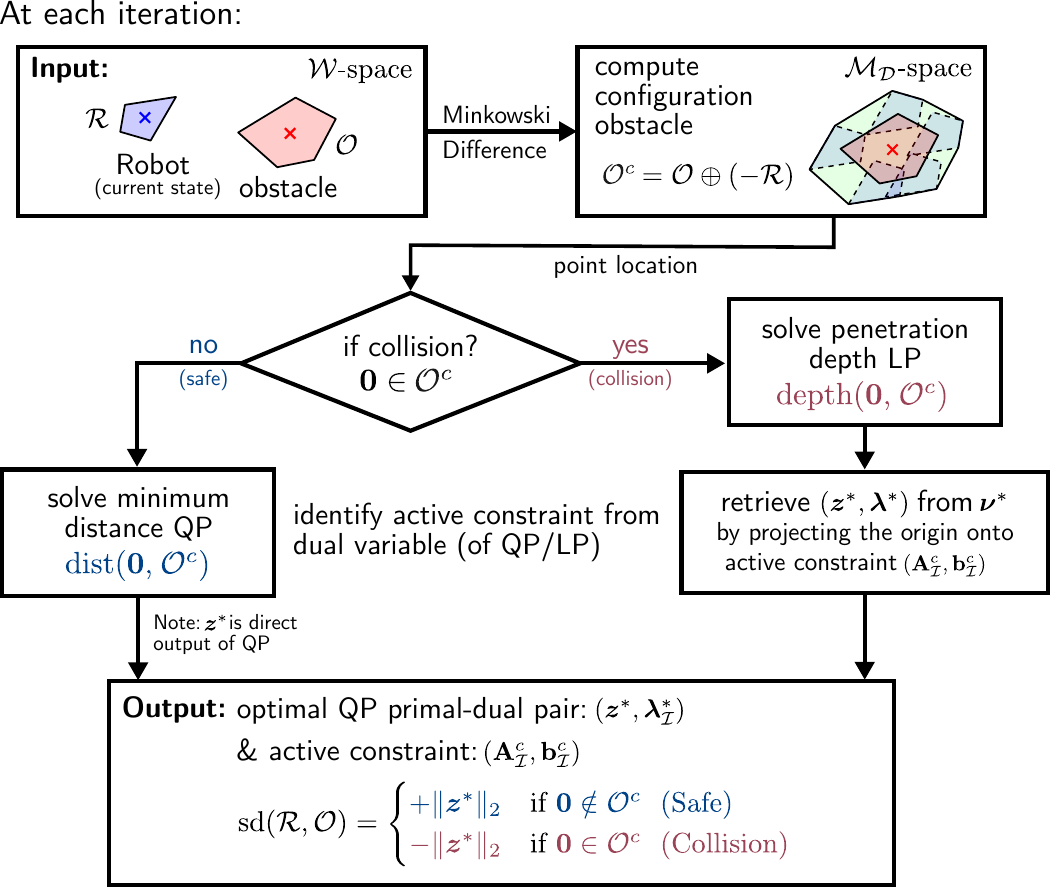}
    \caption{Computational pipeline of the proposed SDF. The framework unifies the dist-QP \eqref{eq: dist_md} and the depth-LP \eqref{eq: pd_md} by leveraging their geometric and algebraic equivalence for computing the SDF and its gradient.}
    \label{fig: flow-cvxopt}
\end{figure}

%% file: fig-nondiff.tex
\begin{figure*}[hbt!]
    \includegraphics[width=\textwidth]{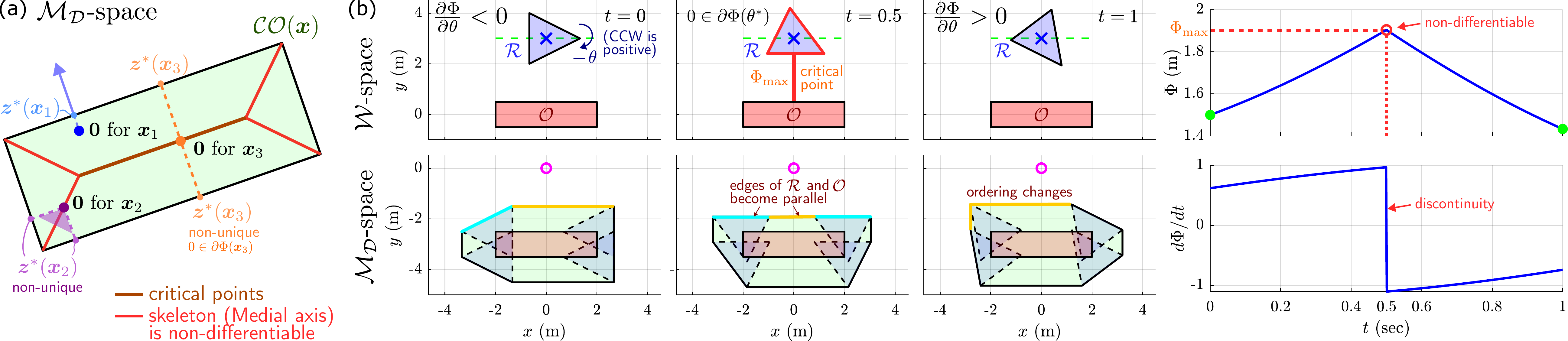}    
    \caption{Illustration of non-differentiable 
    and critical points. \textbf{(a) Translation:} The skeleton of the CO (red lines, including vertices) defines the set of non-differentiable states where $\bm{z}^*$ is non-unique. The set of critical points (brown) is a strict subset of the skeleton. The colored dots indicate the relative locations of the origin $\bm{0}$ within the \mdspace~for various robot states ($\bm{x}_1,\bm{x}_2,\bm{x}_3$) under a fixed orientation. The purple shaded region illustrates the convex hull of two associated active normals for $\bm{x}_2$. Visualization adapted from  \cite{Fernandez.Merchante.CAD24}. \textbf{(b) Rotation:} The edge-edge configuration is non-differentiable, and the corresponding orientation is a critical point, i.e., a (local) maximizer of $\Phi(\theta)$. Note that, for the illustrated CW rotation with $\dot{\theta} < 0$, the time derivative $d\Phi/dt$ has the opposite sign from the gradient $\partial\Phi/\partial\theta$ via the chain rule.} 
    \label{fig: non-diff}
\end{figure*}

%% file: fig-msum-demo.tex
\begin{figure}[hbt!]
\centering\includegraphics[width=0.95\linewidth]{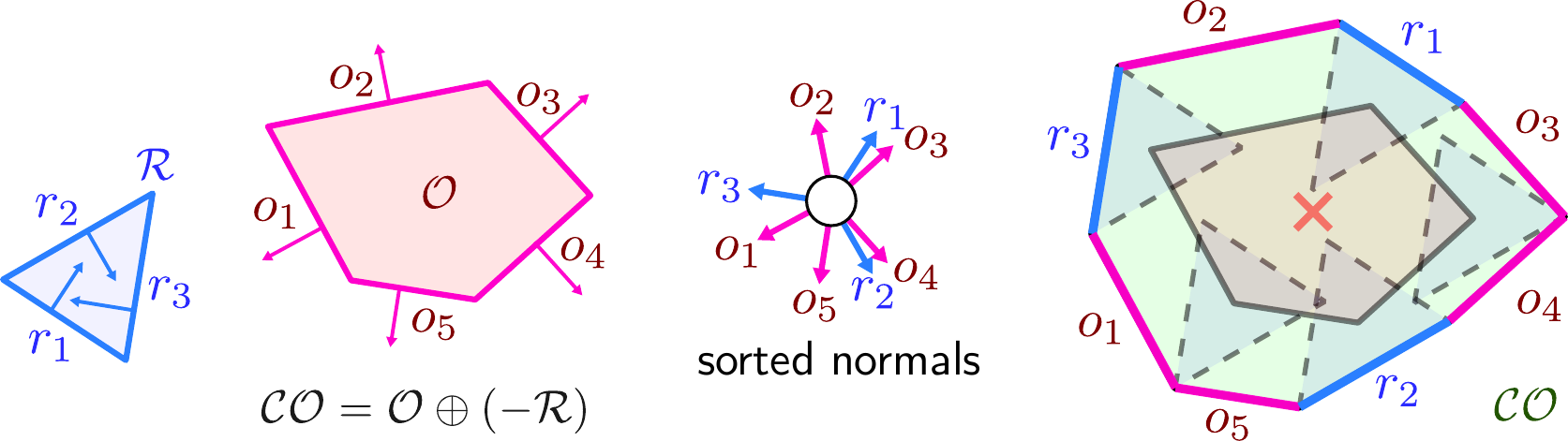}
    \caption{The order of edges of the $\mathcal{CO}$ is directly determined by collecting the inward edge normals of $\mathcal{R}$ (i.e., outward normals of  $-\mathcal{R}$)  together with the outward edge normals of $\mathcal{O}$, and sorting them around $\mathbb{S}^1$ \cite{LaValle.Planning.06}.}
\label{fig: msum-demo}
\end{figure}

%% file: fig-opt-4cases-v.tex
\begin{figure}[hbt!]
    \centering\includegraphics[width=0.95\linewidth]{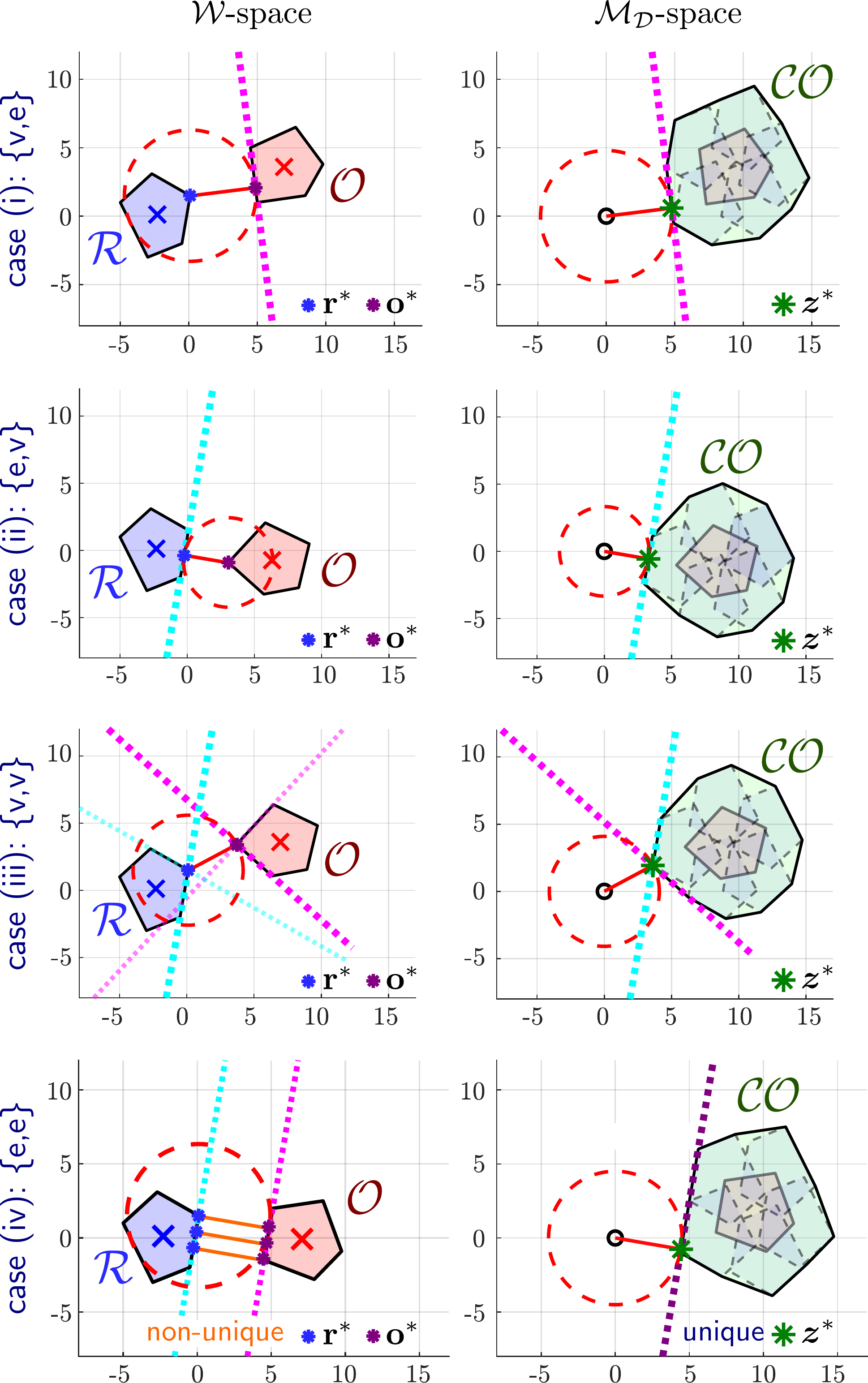}
    \caption{Mapping between the closest feature pair $(\mathbf{r}^*, \mathbf{o}^*)$ of $\mathcal{R}$ and $\mathcal{O}$ in the \wspace~ \textbf{(left column)} and the witness point $\bm{z}^*$ in \mdspace~\textbf{(right column)}. The color of constraints in \mdspace~specifies their origin: blue for the robot, magenta for the obstacle, and purple implies their combination.}
    \label{fig: cvxopt-4cases}
\end{figure}

%% file: fig-CO-rot-deri.tex
\begin{figure}[hbt!]
    \centering\includegraphics[width=0.98\linewidth]{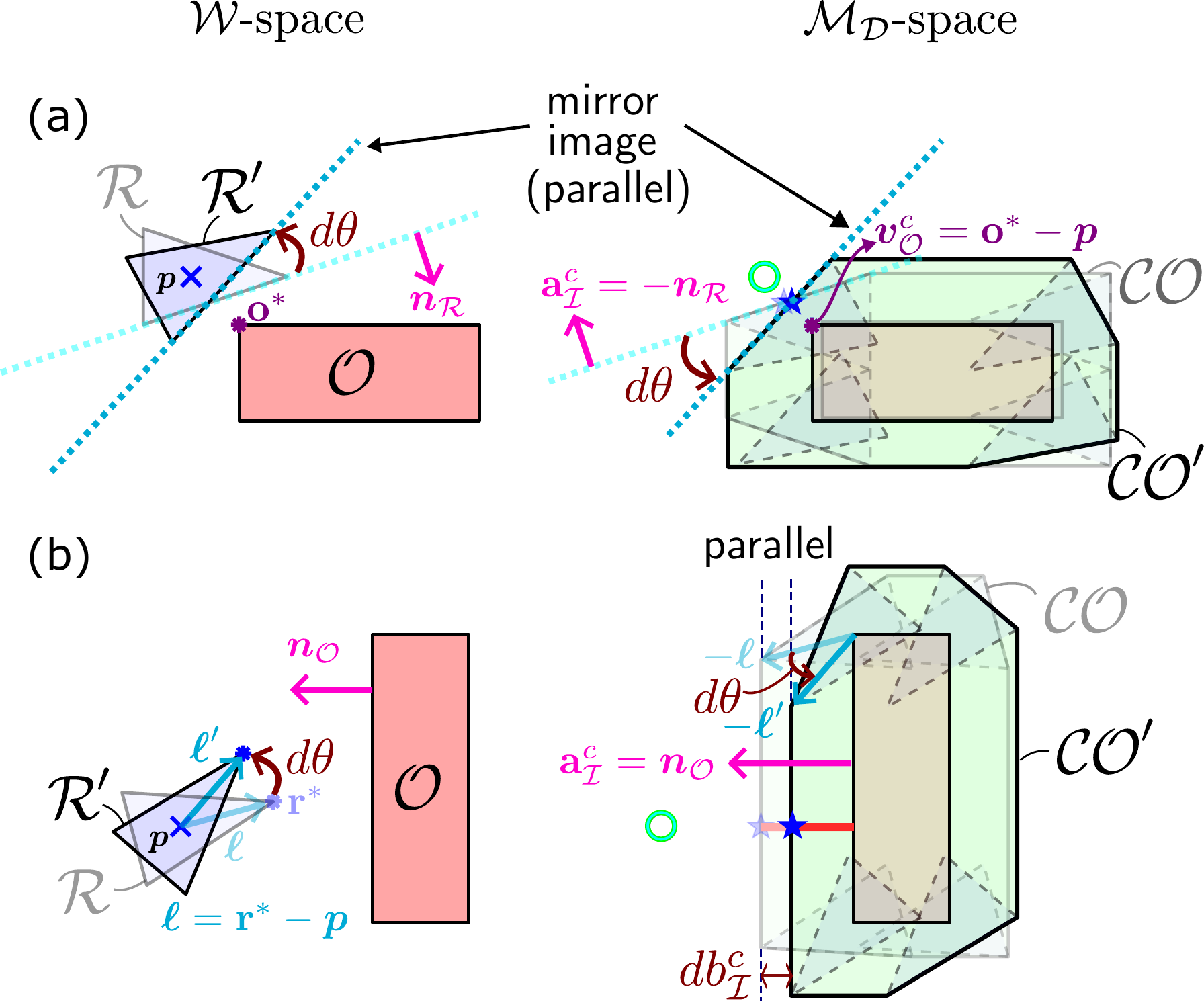}
    \caption{Rotational sensitivity depends on geometric source. \textbf{(a)} A robot-sourced edge rotates with respect to $\bm{v}_{\mathcal{O}}^c=\mathbf{o}^*-\bm{p}$; \textbf{(b)} an obstacle-sourced edge only translates with the robot witness vector $\bm{\ell} = \mathbf{r}^*- \bm{p}$.
    }
    \label{fig: rotation-deri}
\end{figure}

%% file: biography.tex
\begin{IEEEbiography}[{\includegraphics[width=1in,height=1.25in,clip,keepaspectratio]{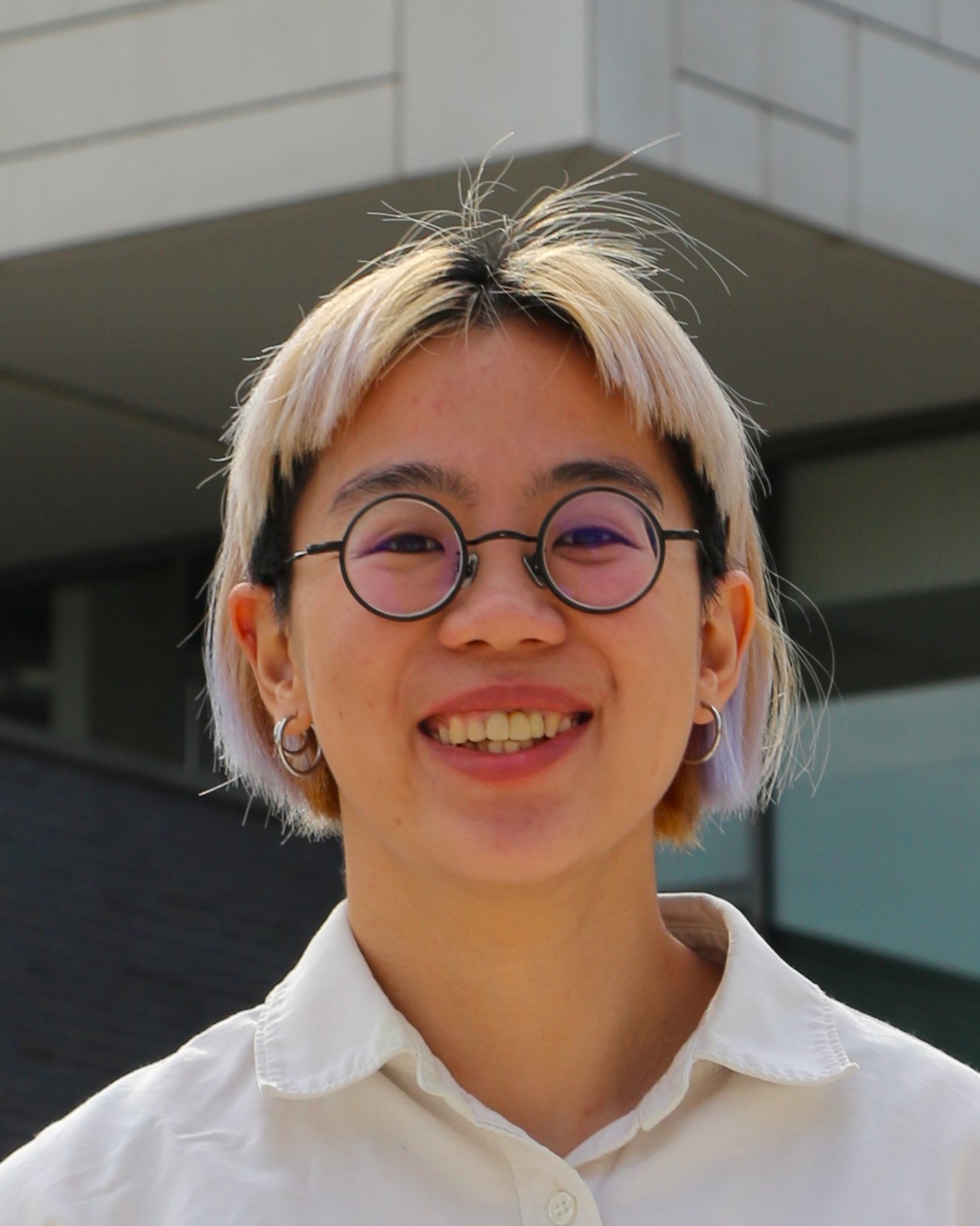}}]{Yi-Hsuan Chen} (Student Member, IEEE) 
received her B.Sc. degree in aeronautics and astronautics from National Cheng Kung University, Tainan, Taiwan, in 2019, and the M.Sc. degree in Mechanical Engineering from King Abdullah University of Science and Technology, Thuwal, Saudi Arabia, in 2022. She is currently pursuing the Ph.D. degree in Aerospace Engineering at University of Maryland, College Park, MD, USA. Her research interests include control theory, motion planning, and optimization.
\end{IEEEbiography}

\begin{IEEEbiography}[{\includegraphics[width=1in,height=1.25in,clip,keepaspectratio]{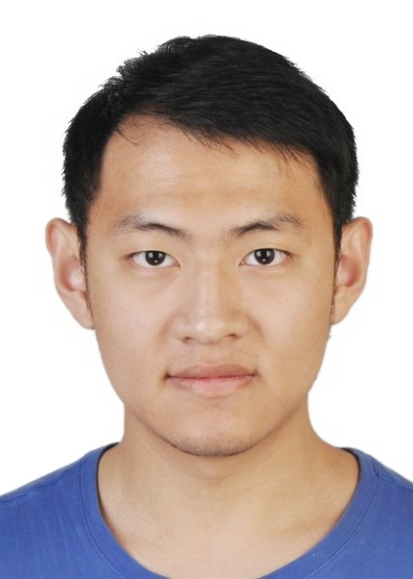}}]{Shuo Liu} (Student Member, IEEE) received his B.Eng. degree in Mechanical Engineering from Chongqing University, Chongqing, China, in 2018, and his M.Sc. degree in Mechanical Engineering from Columbia University, New York, NY, USA, in 2020. He is currently a Ph.D. candidate in Mechanical Engineering at Boston University, Boston, USA. His research interests include optimization, nonlinear control, deep learning, and robotics.
\end{IEEEbiography}

\begin{IEEEbiography}[{\includegraphics[width=1in,height=1.25in,clip,keepaspectratio]{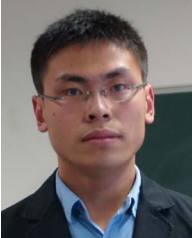}}]{Wei Xiao} (Member, IEEE) received the B.Sc. degree from the University of Science and Technology Beijing, China, in 2013, the M.Sc. degree from the Chinese Academy of Sciences (Institute of Automation), China, in 2016, and the Ph.D. degree from the Boston University, Brookline, MA, USA in 2021. He is currently a Nanyang assistant professor at the School of Electrical and Electronic Engineering, Nanyang Technological University, Singapore, a PI at M3S, SMART, and a research affiliate with MIT CSAIL. He was an assistant professor at Robotics Engineering Department at WPI and a postdoctoral associate at Massachusetts Institute of Technology. His research interests include control theory and machine learning, with particular emphasis on robotics and traffic control. He received an Outstanding Student Paper Award at the 2020 IEEE Conference on Decision and Control.
\end{IEEEbiography}

\begin{IEEEbiography}[{\includegraphics[width=1in,height=1.25in,clip,keepaspectratio]{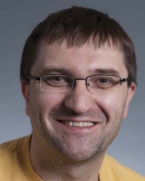}}]{Calin Belta} (Fellow, IEEE) received the B.Sc. and M.Sc. degrees from the Technical University of Iasi, Iasi, Romania, in 1995 and 1997, respectively, and the M.Sc. and Ph.D. degrees from the University of Pennsylvania, Philadelphia, PA, USA, in 2001 and 2003. He is currently the Brendan Iribe Endowed Professor of electrical and computer engineering and computer science at the University of Maryland, College Park. His research focuses on dynamics and control theory, with particular emphasis on cyberphysical systems, formal methods, and applications to robotics and systems biology. Notable awards include the 2005 National Science Foundation CAREER Award, the 2008 AFOSR Young Investigator Award, and the 2017 IEEE TCNS Outstanding Paper Award. He is a Distinguished Lecturer of the IEEE CSS.
\end{IEEEbiography}

\begin{IEEEbiography}[{\includegraphics[width=1in,height=1.25in,clip,keepaspectratio]{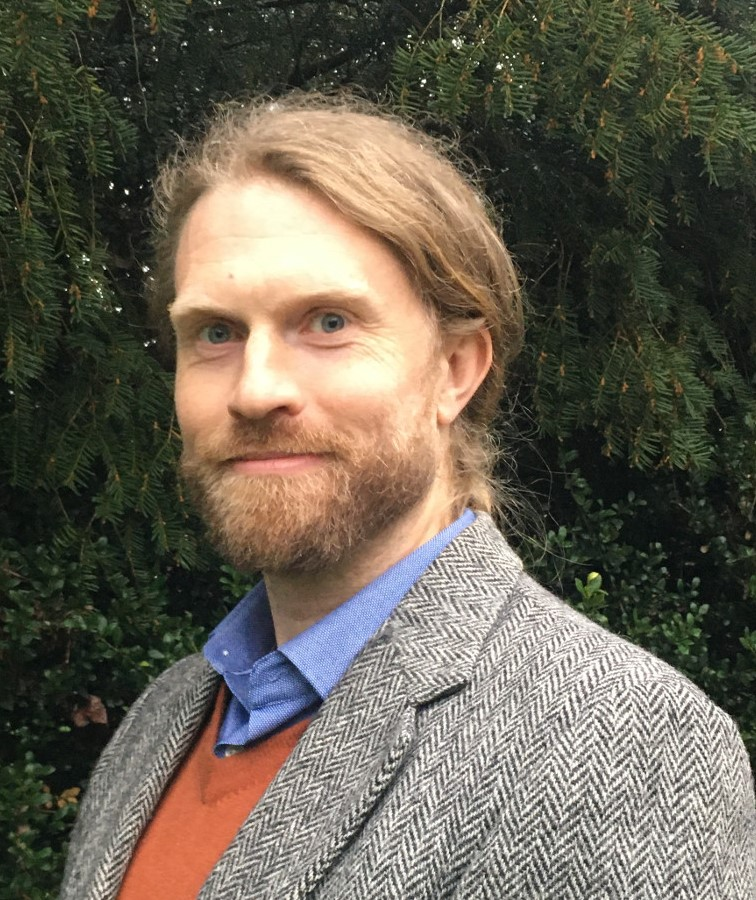}}]{Michael Otte} (Senior Member, IEEE) received the B.S. degrees in aeronautical engineering and computer science from Clarkson University, Potsdam, NY, USA, in 2005, and the M.S. and Ph.D. degrees in computer science from the University of Colorado Boulder, Boulder, CO, USA, in 2007 and 2011, respectively. From 2011 to 2014, he was a Postdoctoral Associate with the Massachusetts Institute of Technology, Cambridge, MA, USA. 
From 2014 to 2015, he was a Visiting Scholar with the U.S. Air Force Research Laboratory, Wright-Patterson Air Force Base, OH, USA. From 2016 to 2018, he was a National Research Council RAP Postdoctoral Associate with the U.S. Naval Research Laboratory, Washington, DC, USA. He has been with the Department of Aerospace Engineering, University of Maryland, College Park, MD, USA, since 2018, where he is currently an Associate Professor.  His research interests include autonomous robotics, motion planning, and multi-agent systems. 

Dr. Otte is an Associate Fellow of AIAA. He has been an Associate Editor of IEEE International Conference on Robotics and Automation and IEEE/RSJ International Conference on Intelligent Robots and Systems since 2020, for the International Journal of Robotics Research since 2023, and for AIAA Journal of Aerospace Information Systems since 2026. 
\end{IEEEbiography}

%% file: proof_prop1.tex
\label{app: pf-ift-partialL}
Consider the safe case $\mathcal{R}(\bm{x})\cap \mathcal{O}= \varnothing\Leftrightarrow \bm{0}\notin\cobs(\bm{x})$, so $\mathrm{sgn}(\mathrm{sd}(\mathcal{R}, \mathcal{O})) = 1$ and $\|\bm{z}^*(\bm{x})\|\neq0$.
We show that the IFT-based \eqref{eq: dhdx_ift} and Lagrangian-derivative-based \eqref{eq: dhdx-dL} gradient expressions are equivalent, that is:
\begin{align}
    \frac{\partial h}{\partial\bm{x}}=\frac{(\bm{z}^*)^\top}{\|\bm{z}^*\|_2}\frac{\partial \bm{z}^*}{\partial \bm{x}} = \frac{\lambdaT}{2\|\bm{z}^*\|_2}\left(\frac{\partial \Aci}{\partial \bm{x}}\bm{z}^*-\frac{\partial \bci}{\partial \bm{x}}\right).
\label{eq: pf-prop1-goal}
\end{align}
From Theorem~\ref{thm: diffopt-ift}, the sensitivity of the optimal variables $\bm{\xi}_\mathcal{I}^* = (\bm{z}^*, \lambdai)$ satisfies $\frac{\partial \bm{\xi}_\mathcal{I}^*}{\partial \bm{x}}=-\left[\frac{\partial \Gamma(\bm{\xi}_\mathcal{I}^*)}{\partial \bm{\xi}}\right]^{-1}\frac{\partial \Gamma(\bm{\xi}_\mathcal{I}^*)}{\partial \bm{x}}$, where
\begin{subequations}
\begin{align}
    &\frac{\partial \Gamma(\bm{\xi}_{\mathcal{I}}^*)}{\partial \bm{\xi}}
    =
    \begin{bmatrix}
    2\mathbf{I} & \left(\Aci\right)^{\top} \\D(\lambdai)\Aci & \bm{0}
    \end{bmatrix}:=\mathbf{M} , \label{eq: appen-dGammadXi} \\[0.2em]
    &\frac{\partial \Gamma(\bm{\xi}_{\mathcal{I}}^*)}{\partial \bm{x}} 
    =
    \begin{bmatrix}
    \partial_{\bm{x}} \AciT \lambdai\\[0.2em]
    D(\lambdai) \left(\frac{\partial \Aci}{\partial \bm{x}}\bm{z}^*-\frac{\partial \bci}{\partial \bm{x}}\right)
    \end{bmatrix}
    :=\begin{bmatrix}
    \mathbf{T}_1\\\mathbf{T}_2
    \end{bmatrix},
    \label{eq: appen-dGammadx}
\end{align}
\end{subequations}
Let $\mathbf{M}^{-1}=\begin{bmatrix}
\mathbf{M}_{11}^-  & \mathbf{M}_{12}^-\\ \mathbf{M}_{21}^- & \mathbf{M}_{22}^-
\end{bmatrix}$. Using the Schur complement with $\mathbf{P}:=\Aci \AciT\in\mathbb{R}^{|\mathcal{I}|\times |\mathcal{I}|}$ yields
\begin{subequations}
\begin{align}
    & \mathbf{M}_{11}^- =  \frac{1}{2}\left[\mathbf{I}_2-\AciT\mathbf{P}^{-1}\Aci\right] ,\label{eq: appen-M11}\\[0.2em]
    &\mathbf{M}_{12}^-=\AciT\mathbf{P}^{-1}\left[ D(\lambdai)\right]^{-1}. \label{eq: appen-M12}
\end{align}
\end{subequations}
We have $\frac{\partial \bm{z}^*}{\partial \bm{x}}=-(\mathbf{M}_{11}^-\mathbf{T}_1+\mathbf{M}_{12}^-\mathbf{T}_2)$. By taking the transpose of the stationarity condition, we obtain $(\bm{z}^*)^\top=-\frac{1}{2}\lambdaT\Aci$. This gives
\begin{align*}
    &(\bm{z}^*)^\top\mathbf{M}_{11}^-=-\frac{1}{4}\left[  \lambdaT\Aci-\lambdaT \mathbf{P}\mathbf{P}^{-1}
    \Aci \right] = \bm{0},\\
&(\bm{z}^*)^\top\mathbf{M}_{12}^-\mathbf{T}_2
=-\frac{1}{2}\lambdaT
\left(\frac{\partial \Aci}{\partial\bm{x}}\bm{z}^*-\frac{\partial \bci}{\partial\bm{x}}  \right).
\end{align*}
Hence, we have
\begin{align*}
    \frac{\partial h}{\partial \bm{x}} &=\frac{1}{\|\bm{z}^*\|_2}\left[(\bm{z}^*)^\top\frac{\partial \bm{z}^*}{\partial \bm{x}}\right]=\frac{1}{\|\bm{z}^*\|_2}\left[-(\bm{z}^*)^\top\mathbf{M}_{12}^-\mathbf{T}_2\right]
    \\
    &=\frac{\lambdaT}{2\|\bm{z}^*\|_2}\left(\frac{\partial \Aci}{\partial\bm{x}}\bm{z}^*-\frac{\partial \bci}{\partial\bm{x}}  \right),  
    \label{eq: appen-dhdx-final}
\end{align*}
which completes the proof.